%% file: preprint.tex
\documentclass{article}
\usepackage{iclr2027_conference,times}

\input{math_commands.tex}

\usepackage{algorithm}\usepackage{algpseudocode}
\usepackage{tikz}\usetikzlibrary{positioning,arrows.meta,calc,decorations.pathreplacing}
\usepackage{placeins}
\usepackage{wrapfig}
\usepackage{enumitem}
\usepackage{booktabs}
\usepackage{tabularx}
\usepackage{amsthm}
\usepackage{amssymb}
\usepackage{url}
\usepackage{hyperref}
\usepackage{cleveref}

\newcommand{\iid}{\overset{\mathrm{iid}}{\sim}}
\newcommand{\rmseprior}{\mathrm{RMSE}_{\mathrm{prior}}}
\newcommand{\ratiostar}{\rho^{\star}}
\DeclareMathOperator{\tr}{tr}  
\newcommand{\lpipsmap}{\Psi_{\mathrm{LPIPS}}}

\newcommand{\rhostarpx}{\rho^{\star}_{\mathrm{px}}}
\newcommand{\rhostarlp}{\rho^{\star}_{\lpipsmap}}
\newcommand{\zT}{z^{\,\Psi}}
\newcommand{\zhatT}{\hat z^{\,\Psi}}

\newtheorem{theorem}{Theorem}[section]
\newtheorem{lemma}[theorem]{Lemma}
\newtheorem{proposition}[theorem]{Proposition}
\newtheorem{corollary}[theorem]{Corollary}
\newtheorem{definition}[theorem]{Definition}
\newtheorem{assumption}[theorem]{Assumption}
\newtheorem{remark}[theorem]{Remark}

\crefname{assumption}{assumption}{assumptions}
\Crefname{assumption}{Assumption}{Assumptions}

\title{From Pixels to Privacy-Sensitive Features:  \\ An Information-Theoretic Framework for \\ Reconstruction Bounds}

\author{Henry Bourne, Robert F. Allison \\
Department of Mathematics\\
The University of Bristol\\
Beacon House, Queens Road, Bristol, BS8 1QU, United Kingdom \\
\texttt{\{jd18380,marfa\}@bristol.ac.uk} 
}

\iclrfinalcopy
\begin{document}

\maketitle 
\lhead{Preprint. Under review.}
\renewcommand{\headrulewidth}{0pt}

\begin{abstract} 
What can an adversary recover from a model trained with DP-SGD? Its $(\varepsilon,\delta)$ guarantee does not answer this question directly. We derive a distortion bound that can be computed before training and applies even to an adversary who observes the parameter trajectory during training, knows every other training record and the population distribution from which the training records are drawn. The analyst may choose the representation in which error is measured, allowing the bound to target the aspects of reconstruction that are privacy-sensitive in a given application. Our information-theoretic framework separates two contributions: the information about a record revealed by training, measured through conditional mutual information, and the uncertainty already present in the data distribution, measured through source entropy. Rate--distortion theory combines them into a lower bound on the expected error of every reconstruction attack. For DP-SGD, the leakage bound depends only on the sampling rates and noise multipliers, while the source entropy is lower-bounded from a sample with finite-sample confidence. An orthogonal block construction keeps the resulting bound informative for high-dimensional images. We instantiate the framework in pixel space and in LPIPS feature space, obtaining, to our knowledge, the first lower bound on reconstruction distortion measured directly by LPIPS. On a range of image datasets the bounds establish non-trivial unavoidable expected reconstruction error within heuristically recommended privacy-budget ranges and beyond. Experiments compare the bounds with strong reconstruction attacks and identify the principal sources of the remaining slack. 
\end{abstract}

\section{Introduction}
\label{sec:intro}

\begin{figure}[b]
    \centering
    \includegraphics[width=\textwidth]{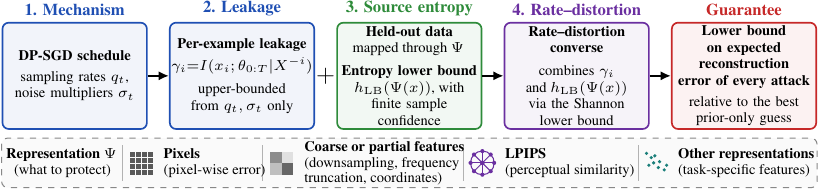}
    \caption{\textbf{Modular information-theoretic framework.} Mechanism leakage and a held-out entropy bound in a chosen representation $\Psi$ combine through rate--distortion into a lower bound on every attack; each module can be tightened or replaced independently.}
    \label{fig:framework}
    \vspace{-6pt}
\end{figure}

Models trained on sensitive data can reveal individual training examples through gradients, model parameters, or query access \citep{zhu2019deep,geiping2020inverting,haim2022reconstructing,Carlini2020ExtractingTD}. The standard defence is differentially private stochastic gradient descent (DP-SGD), which clips per-example gradients, adds Gaussian noise, and reports its guarantee through $(\varepsilon,\delta)$ \citep{abadi2016deep}. These parameters provide a general privacy guarantee, but they do not directly answer the operational question considered here: \emph{How accurately could an adversary reconstruct a training example?} This question is especially relevant when participation in the training set is already known but the content of an individual record remains sensitive, or when useful model accuracy requires a large privacy budget, $\varepsilon$. For orientation we use the tiers of \citet{ponomareva2023dp}: strong ($\varepsilon\leq1$), reasonable ($1<\varepsilon\leq10$) and weak ($\varepsilon>10$); the upper two are justified by common practice and empirical attacks rather than by a guarantee.

Reconstruction difficulty depends partly on what can be inferred from the training-data distribution before the trained model and its trajectory are observed. Even without seeing the training mechanism, an adversary who knows the training-data distribution can mount an effective guessing attack when that distribution has low entropy. We therefore measure reconstruction error relative to the best prior-only reconstruction. The resulting reconstruction ratio equals one when training reveals nothing useful beyond the training-data distribution and decreases in value as more accurate reconstruction becomes possible.

\begin{wrapfigure}{r}{0.43\textwidth}
    \centering
    \vspace{-10pt}
    \includegraphics[width=0.43\textwidth]{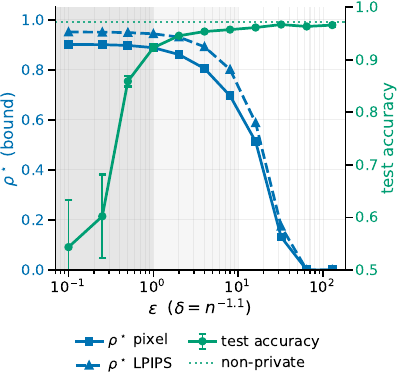}
    \caption{\textbf{CelebA: $95\%$-confidence reconstruction bounds and accuracy.} Pixel and LPIPS bounds $\rho^\star$ (left axis) and test accuracy (mean $\pm$ s.d.\ over three seeds; right axis), against $\varepsilon$ at $\delta=n_{\mathrm{train}}^{-1.1}$. The ratio $\rho^\star$ measures reconstruction error relative to the optimal prior-only guess: larger is more protective, and $\rho^\star=1$ means no improvement over that guess. Shading marks the privacy tiers of \citet{ponomareva2023dp}; other datasets in \Cref{fig:spectrum_all}.}
    \label{fig:spectrum_headline}
\end{wrapfigure}

We consider an adversary who observes the noisy parameter trajectory during training, knows every other training record, knows the population distribution from which the training records are drawn, and may use any, possibly randomised, reconstruction rule. 

Our framework separates the problem into a mechanism-side bound on the information revealed by training and a source-side bound on the uncertainty already present in the population training-data distribution. Rate--distortion theory combines them into a lower bound on the expected error of every reconstruction rule, biased or not; Figure~\ref{fig:framework} summarises this modular construction. For DP-SGD, the mechanism-side bound is computed before training from the sampling rates and noise multipliers, without using a reported $(\varepsilon,\delta)$ guarantee or analysing a trained model; it depends on neither the architecture nor the parameter count and can be added across training runs. The source-side entropy is lower-bounded from a finite sample with explicit confidence. The representation $\Psi$ determines which reconstruction differences matter: the identity map gives pixel-space error, LPIPS features give perceptual error, and other representations can target coarse, partial, or task-specific recovery. An orthogonal block construction extends the calculation to high-dimensional representations while preserving squared reconstruction error exactly.

\textbf{Our contributions include:}\newline
\noindent\textbf{A modular information-theoretic framework.}
We combine mechanism leakage, source uncertainty, and an information-to-distortion converse to bound every attack under a strong informed-adversary model (\Cref{thm:master}); each component can be tightened or replaced independently.\newline
\noindent\textbf{Knowledge of training-data distribution.} We assume that the adversary has complete knowledge of the population training-data distribution and can sample from it. In our bound it enters through a differential entropy that a finite sample-set can be used to bound from below with explicit confidence (\Cref{prop:coverage}).\newline
\noindent\textbf{Finite-sample, representation-flexible DP-SGD bounds.}
The mechanism-side input is computable before training; the source-side input is controlled from finite reference data. Choosing $\Psi$ adapts the guarantee to different reconstruction harms; with LPIPS, this gives, to our knowledge, the first reconstruction lower bound measured directly by LPIPS.\newline
\noindent\textbf{A priori and retrospective analysis.} The framework can select a DP-SGD noise multiplier to meet a prescribed reconstruction-error bound, or analyse the reconstruction-privacy of a published training process retrospectively from its reported schedule and a source sample.\newline
\noindent\textbf{Experiments and slack analysis.}
Across MNIST, CIFAR-10, CelebA and ImageNette, the bounds remain active in `reasonable' and `weak' ($\varepsilon>10$) privacy levels. We compare attack performance with our lower bound and diagnose the gap between guaranteed and achieved reconstruction error.

\section{Related Work}\label{sec:related}

\noindent\textbf{Reconstruction robustness.}
\citet{balle2022reconstructing} bound the probability that an informed adversary reconstructs a target within distortion $\eta$; subsequent work sharpens the accounting and extends it to settings including multiple targets \citep{hayes2023bounding,kaissis2023bounding,kulynych2026unifying,swanberg2025beyond,guerra2026understanding}. Existing instantiations specify the required prior-only success probability analytically or estimate it without finite-sample coverage. We instead estimate a source-entropy term with explicit confidence. Threshold guarantees suit secrets for which near-exact recovery is the harm, whereas our expected distortion captures harmful partial reconstruction of images. Further comparison appears in \Cref{tab:lineage}, \Cref{app:related_extended}.
\newline
\noindent\textbf{Expected-error bounds.}
Estimation-theoretic work lower-bounds reconstruction error using Fisher information and related inequalities \citep{Hannun2021MeasuringDL,guo2022bounding,Chaudhuri2024GuaranteesOC,maeng2023bounding}. The deployable Cram\'er--Rao and Hammersley--Chapman--Robbins bounds in this line apply to unbiased reconstructors and therefore exclude attacks exploiting the source prior. Our bound covers arbitrary reconstructors and is normalised by the optimal prior-only error; see \Cref{app:unbiased} for a detailed comparison.
\newline
\noindent\textbf{Information-theoretic antecedents.}
Conditional mutual information has been used to quantify information about individual samples revealed by learning algorithms \citep{raginsky2016information,feldman2018calibrating,cuff2016differential}, including through sequential decompositions for noisy iterative algorithms \citep{pensia2018generalization}. Distortion-based secrecy originates in rate--distortion theory \citep{Schieler2013RatedistortionTF}. \citet{tan2024defending} consider the ambient entropy-power inequality (related to \Cref{thm:master}) using it as motivation for minimising one of the terms it contains. This is geared toward a defence against (rather than lower bound for) reconstruction in federated learning, and neither estimates the entropy nor addresses its looseness in high dimension.

\section{Method}\label{sec:method}

Our framework has three components. We first specify the reconstruction adversary and the representation in which error is measured. We then convert an upper bound on per-record information leakage into a lower bound on reconstruction distortion. Finally, we derive a blockwise form that remains useful when the chosen representation is high-dimensional.

\subsection{Setting and threat model}\label{sec:setting}

We first formalise what the adversary observes, what side information it possesses, and how reconstruction quality is measured. Let $X=(x_1,\ldots,x_n)$ be a training dataset of $n$ samples in $\R^d$, write $X^{-i}$ for $X$ with the $i$-th row removed, and let $\mathcal A$ be a possibly stochastic training algorithm producing the full released parameter trajectory
\[
    \theta:=\theta_{0:T}=(\theta_0,\ldots,\theta_T)
\]
from $X$. Our running instance is DP-SGD \citep{abadi2016deep}, but the bound sees the mechanism only through the scalar $\gamma_i$ of \Cref{def:cmi}; DP-SGD enters solely in \Cref{sec:bound_gamma}.

\begin{assumption}[Finite-precision data]\label{asm:discrete}
Each training sample $x_i$ takes values in a finite subset $\mathcal Q\subset\R^d$, obtained by representing each of the $d$ components at $b_x$ bits of precision. Concretely, each component takes values in a set of at most $2^{b_x}$ levels with spacing $\Delta=2^{-b_x}\cdot c$ for some scale parameter $c>0$.
\end{assumption}

\begin{assumption}[I.I.D.\ source]\label{asm:iid}
The rows of $X$ are drawn i.i.d.\ from a per-sample distribution $P_x$ on $\mathcal Q$, so the dataset has law $P_X=P_x^{\otimes n}$.
\end{assumption}

These are all the reconstruction bound requires. The accountant (\Cref{sec:bound_gamma}) and entropy estimator (\Cref{sec:entropy}) state any additional assumptions where they arise. \Cref{asm:iid} is violated by preprocessing computed across the training set and then applied back to it; see \Cref{app:data_handling}.

\paragraph{Threat model.}
We adopt the adversary against which DP-SGD's own guarantee is proved \citep{abadi2016deep}. The adversary knows the architecture, $\mathcal A$ and its hyperparameters, every non-target training record $X^{-i}$, the training set size $n$, and the population distribution $P_x$ from which training records are drawn. It observes the full released parameter trajectory $\theta_{0:T}$, including every intermediate noisy update, and may exploit all of this information through a measurable, possibly randomised reconstruction rule
\[
    \hat x_i=g(\theta_{0:T},X^{-i}).
\]
A bound against it also covers weaker adversaries, such as one observing only $\theta_T$ or lacking $X^{-i}$. Expectations are over $X\sim P_x^{\otimes n}$ and all randomness of training, reconstruction and dither, so the guarantee is averaged over the source, as in existing reconstruction guarantees formulated relative to a prior.

\begin{definition}[Per-sample CMI]\label{def:cmi}
The information (which we measure in bits) about the target revealed by the full released parameter trajectory, conditional on all non-target records, is
\[
    \gamma_i:=I(x_i;\theta_{0:T}\mid X^{-i}).
\]
\end{definition}

The quantity $\gamma_i$ measures what the trajectory carries about $x_i$ once the rest of the dataset is known. Under \Cref{asm:iid}, the joint distribution of $(x_i,\theta_{0:T},X^{-i})$ is the same for every $i$. Because DP-SGD treats every index alike under Poisson subsampling, $\gamma_i$ does not depend on $i$; we write $\gamma$ for the common value and retain the index only in per-sample statements.

\begin{lemma}[Per-sample leakage chain]\label{lem:cmi_chain}
Under \Cref{asm:iid}, fix $m\geq1$ and let $\hat u_i=g(\theta_{0:T},X^{-i})\in\R^m$ for a measurable, possibly randomised $g$ whose internal randomness is independent of $x_i$ given $(\theta_{0:T},X^{-i})$. Then
\[
    I(x_i;\hat u_i)\leq I(x_i;\theta_{0:T},X^{-i})=I(x_i;\theta_{0:T}\mid X^{-i})=:\gamma_i.
\]
\end{lemma}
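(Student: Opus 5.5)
The plan has two steps: a data-processing inequality for the first inequality, and the chain rule with independence for the equality.

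\emph{Step 1: the inequality.} By hypothesis, $\hat u_i$ is produced from $(\theta_{0:T},X^{-i})$ together with internal randomness that is conditionally independent of $x_i$ given $(\theta_{0:T},X^{-i})$. Hence
\[
x_i \to (\theta_{0:T},X^{-i}) \to \hat u_i
\]
is a Markov chain. The data-processing inequality then gives $I(x_i;\hat u_i)\le I(x_i;\theta_{0:T},X^{-i})$. To keep this rigorous for mixed discrete/continuous variables, I will state it for general random variables. Write $g(\theta,X^{-i},\xi)$ with $\xi$ independent of $x_i$ given the observation. Then $I(x_i;\hat u_i)\le I(x_i;\theta_{0:T},X^{-i},\xi)=I(x_i;\theta_{0:T},X^{-i})$, where the equality is the chain rule together with $I(x_i;\xi\mid\theta_{0:T},X^{-i})=0$. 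This step only needs mutual information to be well defined. That holds because $x_i$ takes values in the finite set $\mathcal Q$ (\Cref{asm:discrete}), so every quantity is bounded by $H(x_i)<\infty$ and the chain rule applies with no $\infty-\infty$ issues.

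\emph{Step 2: the equality.} Apply the chain rule in the other order:
\[
I(x_i;\theta_{0:T},X^{-i})=I(x_i;X^{-i})+I(x_i;\theta_{0:T}\mid X^{-i}).
\]
Under \Cref{asm:iid}, $x_i$ is independent of $X^{-i}$, so $I(x_i;X^{-i})=0$. What remains is $\gamma_i$ by \Cref{def:cmi}.

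\emph{Main obstacle.} The only delicate point is measure-theoretic: $\theta_{0:T}$ is continuous and $\hat u_i$ lives in $\R^m$. I will handle this by using the general (Dobrushin/Pinsker) definition of mutual information as a supremum over finite partitions. Under that definition both the chain rule and the data-processing inequality hold in full generality. Finiteness of $H(x_i)$ ensures every term in the decompositions is finite, which is what makes the subtraction in the chain rule legitimate.
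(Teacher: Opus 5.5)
Your proposal is correct and follows the paper's proof: data processing along $x_i\to(\theta_{0:T},X^{-i})\to\hat u_i$, then the chain rule with $I(x_i;X^{-i})=0$ from the i.i.d.\ assumption. Absorbing the attack's randomness $\xi$ through $I(x_i;\xi\mid\theta_{0:T},X^{-i})=0$ simply spells out that data-processing step, and the finiteness remark is harmless but not needed, since the chain rule in its additive form involves no subtraction.
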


\begin{proof}[Proof sketch]
The inequality is the data-processing inequality along $x_i\to(\theta_{0:T},X^{-i})\to\hat u_i$. The equality follows from the chain rule $I(x_i;\theta_{0:T},X^{-i})=I(x_i;X^{-i})+I(x_i;\theta_{0:T}\mid X^{-i})$, whose first term vanishes under \Cref{asm:iid}. Full proof in \Cref{app:cmi_chain}.
\end{proof}

Hence any reconstruction carries at most $\gamma_i$ bits of mutual information about the target.

\subsection{From information leakage to distortion}\label{sec:leak_to_dist}

We now convert the information budget $\gamma_i$ into unavoidable reconstruction error. The private object is $x_i$, the representation $\Psi(x_i)$ specifies the aspect of that record whose reconstruction is to be controlled, and a dithered version of $\Psi(x_i)$ is an analytical device that permits a differential-entropy bound for finite-precision data.

The rate--distortion bound requires a continuous source entropy, whereas \Cref{asm:discrete} makes $x_i$ discrete. We therefore add independent dither after applying $\Psi$. This makes differential entropy well defined, and the dither's contribution to squared error can later be subtracted exactly. The construction is internal to the analysis: the model trains on the raw $X$, and neither $\Psi$ nor the dither is applied to the training data or observed by the adversary.

\begin{definition}[Dithered representation]\label{def:dither}
Call $\Psi:\R^d\to\R^m$ \emph{admissible} if, when random, its randomness is independent of $X$, of $\mathcal A$, and of the attack, with statements then holding conditionally on its draw; deterministic maps qualify. Fix a \emph{dither scale} $s>0$. The \emph{dithered representation} of $x_i$ under $\Psi$ is
\begin{equation}\label{eq:dither_def}
    \zT_i:=\Psi(x_i),
    \qquad
    \tilde z^{\,\Psi}_i:=\zT_i+W_i^{(m)},
    \qquad
    W_i^{(m)}\iid\mathcal N(0,s^2I_m),
\end{equation}
where the $W_i^{(m)}$ are drawn independently across $i$ and of everything above. Thus $\zT_i$, $\tilde z^{\,\Psi}_i$, and $\zhatT_i$ denote respectively the target representation, its dithered form, and the adversary's output, all in $\R^m$; the ambient case $\Psi=\mathrm{id}$ is written $\tilde x_i$ and $\hat x_i$.
\end{definition}

The bound holds for every $s>0$. The dither contribution is removed exactly at a cost that grows with $s$, while larger $s$ reduces the bias of the entropy estimator supporting the bound; see \Cref{app:why}. For every $\Psi$, the dithered representation has a density and finite entropy (\Cref{app:dither_regularity}). Write $\Sigma_\Psi:=\Cov(\Psi(x_i))$, and abbreviate $\Sigma:=\Cov(x_i)$.

\paragraph{Distortion--rate function.}
For an $\R^m$-valued source $Y$ under squared error, the \emph{distortion--rate function}, $D_Y(R)$, satisfies the following \emph{Shannon lower bound} (SLB) \citep{koch2016shannon}:
\begin{equation}\label{eq:slb_multi}
        D_Y(R):=\inf\{\E\|Y-\hat Y\|^2:I(Y;\hat Y)\leq R\}\geq\frac{m}{2\pi e}\,2^{2(h(Y)-R)/m}.
\end{equation}
Equality holds at every $R$ when $Y$ is isotropic Gaussian \citep{cover2006elements}.

\paragraph{The reconstruction ratio.}
Absolute error is difficult to compare across datasets and representations. We therefore normalise by the optimal reconstruction that ignores the released parameter trajectory and uses only the source distribution. For a constant guess $a\in\R^m$,
\[
    \E\|\Psi(x_i)-a\|^2=\tr\Sigma_\Psi+\|\E \Psi(x_i)-a\|^2,
\]
so the optimal prior-only reconstruction is the prior mean $a=\E \Psi(x_i)$, with error $\tr\Sigma_\Psi$. We report each guarantee through the \emph{reconstruction ratio}
\begin{equation}\label{eq:rho_def}
    \rho_\Psi(g):=\frac{\sqrt{\E\|\zT_i-\zhatT_i\|^2}}{\rmseprior(\Psi)},
    \qquad
    \rmseprior(\Psi):=\sqrt{\tr\Sigma_\Psi},
    \qquad
    \nu_\Psi^2:=\frac{ms^2}{\tr\Sigma_\Psi},
\end{equation}
dropping the subscript $\Psi$ in the ambient case. Since the prior mean is always available, $\inf_g\rho_\Psi(g)\leq1$, and a valid bound never exceeds one (\Cref{rem:rho_le_one}).

\paragraph{Choosing the representation.}
The map $\Psi$ specifies which aspects of the record are treated as privacy-sensitive. Taking $\Psi=\mathrm{id}$ measures pixel-space reconstruction; a coordinate selection measures recovery of a region, channel, or attribute; and downsampling or frequency truncation measures coarse-scale recovery. Taking $\Psi=\lpipsmap$, the concatenated unit-normalised weighted feature maps of LPIPS \citep{zhang2018unreasonable}, makes $\|\zT_i-\zhatT_i\|^2$ the perceptual distortion itself. More generally, any measurable map fixed independently of the protected data and training run, including a frozen learned encoder, is admissible. The leakage bound does not change with $\Psi$; only the source entropy and prior-only reconstruction error in the chosen representation must be recomputed. 

We can now combine any valid leakage bound with the source entropy in the chosen representation.

\begin{theorem}[Reconstruction lower bound under a representation]
\label{thm:master}
Let Assumptions~\ref{asm:discrete}--\ref{asm:iid} hold, let $\Psi:\R^d\to\R^m$ be admissible, and let $s>0$. Every possibly randomised $\zhatT_i=g(\theta_{0:T},X^{-i})$ taking values in $\R^m$ satisfies
\begin{equation}\label{eq:master_abs}
    \E\bigl[\|\zT_i-\zhatT_i\|^2\bigr]
    \geq
    \left(
       \frac{m}{2\pi e}\,2^{2(h(\tilde z^{\,\Psi}_i)-\gamma_i)/m}
       -ms^2
    \right)_+,
\end{equation}
equivalently, on the ratio scale of \eqref{eq:rho_def},
\begin{equation}\label{eq:master_rho}
    \rho_\Psi(g)^2
    \geq
    \left(
       \frac{m}{2\pi e\,\tr\Sigma_\Psi}
       2^{2(h(\tilde z^{\,\Psi}_i)-\gamma_i)/m}
       -\nu_\Psi^2
    \right)_+,
    \qquad
    \nu_\Psi^2:=\frac{ms^2}{\tr\Sigma_\Psi}.
\end{equation}
\end{theorem}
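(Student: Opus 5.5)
The argument compares the adversary's output with the dithered target $\tilde z^{\,\Psi}_i$ rather than with $\zT_i$. We then apply the Shannon lower bound \eqref{eq:slb_multi} to the source $Y=\tilde z^{\,\Psi}_i$, and finally remove the dither's contribution exactly using its independence.

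\textbf{Step 1 (information budget for the dithered source).} Fix any admissible $\Psi$ (conditioning on its draw if random) and any randomised attack $\zhatT_i=g(\theta_{0:T},X^{-i})$. We have $\tilde z^{\,\Psi}_i=\Psi(x_i)+W_i^{(m)}$, and $W_i^{(m)}$ is independent of everything else. Hence the chain $\tilde z^{\,\Psi}_i\to x_i\to(\theta_{0:T},X^{-i})\to\zhatT_i$ is Markov: given $x_i$, the dither is independent of the trajectory, the other records and the attack's randomness. Data processing then gives $I(\tilde z^{\,\Psi}_i;\zhatT_i)\le I(x_i;\zhatT_i)$. By \Cref{lem:cmi_chain} with $\hat u_i=\zhatT_i$, this is at most $\gamma_i$.

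\textbf{Step 2 (SLB).} The dithered representation has a density and finite differential entropy (\Cref{app:dither_regularity}), so the SLB \eqref{eq:slb_multi} applies to $Y=\tilde z^{\,\Psi}_i$. The pair $(\tilde z^{\,\Psi}_i,\zhatT_i)$ is feasible for $D_Y(\gamma_i)$ by Step 1, so
\[
\E\|\tilde z^{\,\Psi}_i-\zhatT_i\|^2\ \ge\ D_{Y}(\gamma_i)\ \ge\ \frac{m}{2\pi e}\,2^{2(h(\tilde z^{\,\Psi}_i)-\gamma_i)/m}.
\]
If $\E\|\zT_i-\zhatT_i\|^2=\infty$ there is nothing to prove, so assume it is finite.

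\textbf{Step 3 (removing the dither).} Expand
\[
\E\|\tilde z^{\,\Psi}_i-\zhatT_i\|^2=\E\|\zT_i-\zhatT_i\|^2+2\,\E\langle W_i^{(m)},\zT_i-\zhatT_i\rangle+\E\|W_i^{(m)}\|^2 .
\]
The last term equals $ms^2$. The cross term vanishes because $W_i^{(m)}$ has mean zero and is independent of $(\zT_i,\zhatT_i)$; both are functions of $x_i$, $\theta_{0:T}$, $X^{-i}$ and the attack's randomness, none of which involve the dither, which is an analytical device invisible to the adversary. Cauchy--Schwarz together with the finiteness assumption makes this expectation well defined. Rearranging gives \eqref{eq:master_abs} without the positive part, and the positive part is added trivially since the left side is nonnegative. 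Dividing by $\tr\Sigma_\Psi=\rmseprior(\Psi)^2$ yields \eqref{eq:master_rho} with $\nu_\Psi^2=ms^2/\tr\Sigma_\Psi$.

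\textbf{Main obstacle.} The delicate points are measure-theoretic rather than computational. First, the Markov structure in Step 1 must be justified even though the attack's randomness is only assumed conditionally independent of $x_i$ given $(\theta_{0:T},X^{-i})$. This works because the dither is drawn independently of all of these. Second, the SLB must be valid for a possibly non-regular joint law of $(Y,\hat Y)$. This relies on $h(\tilde z^{\,\Psi}_i)$ being finite, which follows from \Cref{asm:discrete}: $\Psi(x_i)$ takes finitely many values, so $\tilde z^{\,\Psi}_i$ is a finite Gaussian mixture, whose entropy is bounded between that of one Gaussian and that plus $\log|\mathcal Q|$. I would handle both by citing the regularity appendix and the general SLB statement of \citet{koch2016shannon}.
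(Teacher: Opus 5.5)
Your proposal is correct and follows essentially the same route as the paper: the dither-independent Markov chain $\tilde z^{\,\Psi}_i\to x_i\to\zhatT_i$ with data processing and \Cref{lem:cmi_chain}, the Shannon lower bound for the dithered source via \Cref{lem:dither_regularity}, and exact removal of the $ms^2$ dither term because the cross term vanishes after reducing to finite expected error. Two minor differences: the ratio form needs $\tr\Sigma_\Psi>0$, which the paper states explicitly and you leave implicit; and your finite-mixture entropy sandwich is a valid alternative to the paper's pointwise density bound plus Gaussian maximum entropy argument for the finiteness of $h(\tilde z^{\,\Psi}_i)$.
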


The lower bound increases with uncertainty in the represented source and decreases with the information revealed by training.

\begin{proof}[Proof sketch]
By \Cref{lem:cmi_chain} and the data-processing inequality along $\tilde z^{\,\Psi}_i\to x_i\to\zhatT_i$, $I(\tilde z^{\,\Psi}_i;\zhatT_i)\leq\gamma_i$. The dithered representation, unlike the discrete source, has a density and finite entropy, so the SLB controls $\E\|\tilde z^{\,\Psi}_i-\zhatT_i\|^2$ at rate $\gamma_i$. Because the dither is centred and independent of $\zT_i-\zhatT_i$, its error $ms^2$ can then be subtracted exactly. Full proof in \Cref{sec:thm-single-proof}.
\end{proof}

\subsection{Orthogonal block tightening}\label{sec:rp-tight}

\Cref{thm:master} is valid in any dimension, but can be weak and difficult to estimate for high-dimensional, non-Gaussian image sources. We therefore rotate the chosen representation and partition it into low-dimensional orthogonal blocks. Squared reconstruction error is preserved exactly, while each entropy calculation is performed in only $k$ dimensions.

Let $U\in\R^{m\times m}$ be orthogonal, let $N_k:=m/k$ for $k\mid m$, let $P_b\in\R^{k\times m}$ select block $b\in\{1,\ldots,N_k\}$, and set $\Pi_b:=\sqrt{N_k}P_bU$. Each block carries independent dither of scale $s$ as in \Cref{def:dither}, so the entropies below depend on $s$ as well as $k$; we suppress $s$ in the notation. Define the block-averaged entropy
\begin{equation}\label{eq:hbar_U}
    \bar h_U(k)
    :=\frac{1}{N_k}\sum_{b=1}^{N_k}h\bigl(\tilde z_i^{\,\Pi_b\Psi}\bigr),
    \qquad
    \tilde z_i^{\,\Pi_b\Psi}
    :=\Pi_b\Psi(x_i)+W_{i,b}^{(k)}.
\end{equation}

Block size introduces a trade-off: smaller blocks tend to carry more entropy per coordinate but each pays the full leakage $\gamma_i$, a factor $2^{-2\gamma_i/k}$, while larger blocks dilute this charge at the cost of lower entropy. The following corollary optimises $k$ over a pre-specified grid.

\begin{corollary}[Orthogonal-block instantiation]\label{cor:rp}
Let Assumptions~\ref{asm:discrete}--\ref{asm:iid} hold, let $\Psi:\R^d\to\R^m$ be admissible with $\tr\Sigma_\Psi>0$, let $k$ divide $m$, and let $U\in\R^{m\times m}$ be orthogonal, with $\Pi_b$ and $\bar h_U(k)$ as in \eqref{eq:hbar_U}. For every $s>0$ and every possibly randomised $\zhatT_i=g(\theta_{0:T},X^{-i})\in\R^m$, writing now $\nu_\Psi^2:=ks^2/\tr\Sigma_\Psi$,
\begin{equation}\label{eq:rp_min}
    \rho_\Psi(g)^2
    \geq
    \left(
       \frac{k}{2\pi e\,\tr\Sigma_\Psi}
       2^{2(\bar h_U(k)-\gamma_i)/k}
       -\nu_\Psi^2
    \right)_+,
\end{equation}
and hence, over any grid $\mathcal G$ of pairs $(k,\nu_\Psi^2)$ with $k\mid m$,
\begin{equation}\label{eq:rp_max}
    \rho_\Psi(g)
    \geq
    \ratiostar_\Psi
    :=\max_{(k,\nu_\Psi^2)\in\mathcal G}
    \left(
       \frac{k}{2\pi e\,\tr\Sigma_\Psi}
       2^{2(\bar h_U(k)-\gamma_i)/k}
       -\nu_\Psi^2
    \right)_+^{1/2}.
\end{equation}
The ambient case $\Psi=\mathrm{id}$ and $m=d$ gives the pixel bound $\ratiostar$; taking $\Psi=\lpipsmap$ gives the perceptual bound $\rhostarlp$.
\end{corollary}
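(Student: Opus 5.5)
The plan is to apply \Cref{thm:master} once per block and then recombine the blocks using orthogonality. For each $b$, the map $\Pi_b\Psi:\R^d\to\R^k$ is admissible: it is a fixed linear image of an admissible map, and $U$ is chosen independently of the data. The adversary's output induces a block reconstruction $\Pi_b\zhatT_i=\sqrt{N_k}P_bU\,g(\theta_{0:T},X^{-i})\in\R^k$, which is again a measurable, possibly randomised function of $(\theta_{0:T},X^{-i})$. Applying \eqref{eq:master_abs} with $m$ replaced by $k$ and fresh dither $W_{i,b}^{(k)}$ of scale $s$ then gives
\[
\E\|\Pi_b\zT_i-\Pi_b\zhatT_i\|^2\ \ge\ \frac{k}{2\pi e}\,2^{2(h(\tilde z_i^{\,\Pi_b\Psi})-\gamma_i)/k}-ks^2 .
\]
Each block pays the full leakage $\gamma_i$, as \Cref{lem:cmi_chain} permits. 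I keep the version without $(\cdot)_+$ here, which is also valid.

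Next I recombine. Because $U$ is orthogonal and the $P_b$ partition the coordinates, we have $\sum_b\|\Pi_b v\|^2=N_k\|Uv\|^2=N_k\|v\|^2$. So
\[
\E\|\zT_i-\zhatT_i\|^2=\frac1{N_k}\sum_b\E\|\Pi_b(\zT_i-\zhatT_i)\|^2 .
\]
This is an exact average of the per-block errors, so it is bounded below by the average of the per-block bounds. The dither terms average to $ks^2$. The entropy terms average to $\frac{k}{2\pi e}\frac1{N_k}\sum_b 2^{2(h_b-\gamma_i)/k}$, and since $x\mapsto 2^{2x/k}$ is convex, Jensen's inequality lowers this to $\frac{k}{2\pi e}2^{2(\bar h_U(k)-\gamma_i)/k}$. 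Dividing by $\tr\Sigma_\Psi>0$ yields the ratio scale with $\nu_\Psi^2=ks^2/\tr\Sigma_\Psi$. Since the left side is nonnegative, taking the positive part is free, and this proves \eqref{eq:rp_min}.

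For \eqref{eq:rp_max}, I observe that each grid point $(k,\nu_\Psi^2)$ corresponds to a valid choice of $s>0$, namely $s^2=\nu_\Psi^2\tr\Sigma_\Psi/k$, and to a $k$ dividing $m$. Since \eqref{eq:rp_min} holds for every such pair, it holds for the maximum over the grid, and taking square roots finishes the argument. The grid is fixed in advance, so no selection or multiplicity issue arises for this deterministic statement; confidence issues only enter later, through entropy estimation. The pixel and LPIPS cases follow by specialising $\Psi$.

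I expect the main obstacle to be the care needed in the block application of the theorem. I must check that the block dithers are independent of everything, so the dither cross-term vanishes exactly inside \Cref{thm:master}. I must also check that the $\sqrt{N_k}$ scaling is consistent: the per-block $\tr$ does not appear, only the global $\tr\Sigma_\Psi$. Finally, the Jensen step has to be applied before the positive part, not after it.
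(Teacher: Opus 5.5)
Your proposal is correct and follows essentially the same route as the paper's proof. Both apply \Cref{thm:master} blockwise to $\Pi_b\circ\Psi$ at the full leakage $\gamma_i$, keeping the bound without the positive part. Both then recombine via the isometry $\frac{1}{N_k}\sum_b\|\Pi_b v\|^2=\|v\|^2$, apply Jensen to the convex map $t\mapsto 2^{2t/k}$, divide by $\tr\Sigma_\Psi$, and take the maximum over the pre-specified grid.
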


\begin{proof}[Proof sketch]
Each $\Pi_b\circ\Psi$ is admissible, so \Cref{thm:master} applies blockwise to $\Pi_b\zhatT_i$. The maps $\Pi_b$ preserve squared norm on average, so averaging the $N_k$ block bounds returns the error in $\R^m$. Jensen's inequality for convex $t\mapsto2^{2t/k}$ replaces the average of block exponentials by the exponential of $\bar h_U(k)$, while the dither, removed exactly per block, contributes $ks^2$ once. Each pair $(k,\nu_\Psi^2)$ is separately valid, hence so is the maximum over the pre-specified grid. Proof in \Cref{app:representation}.
\end{proof}

We take $U=m^{-1/2}H_mV^\top$, where $V$ is an eigenbasis of $\Sigma_\Psi$ and $H_m$ is a Hadamard matrix, so that every block carries the same variance $\tr\Sigma_\Psi$ and every coordinate mixes all eigendirections equally; see \Cref{rem:choice_of_U}. Halving $k$ typically raises the entropy per coordinate but doubles the leakage charge $\gamma/k$; doubling $k$ has the opposite effect. The grid therefore selects the scale at which the entropy gain outweighs the per-coordinate leakage penalty, and the dither cost $\nu^2$ is optimised jointly with $k$.

\section{The a priori accountant}\label{sec:bound_gamma}

We next supply the mechanism-side input to \Cref{thm:master}. For Poisson-subsampled DP-SGD, the schedule $(q_t,\sigma_t)_{t=1}^T$ upper-bounds the per-sample leakage $\gamma=I(x_i;\theta_{0:T}\mid X^{-i})$. The calculation uses neither the trained parameters nor the realised gradients, and does not take a reported $(\varepsilon,\delta)$ guarantee as input. It is independent of the parameter dimension $p$ in value and cost.

At step $t$ of DP-SGD \citep{abadi2016deep}, a Poisson-subsampled minibatch $B_t$ includes each index independently with probability $q_t$. Per-sample gradients are clipped to $L_2$-norm $C$,
\[
    \bar g_t(x_j)
    :=\nabla_\theta\ell(\theta_{t-1};x_j)
      \min\left\{1,
      \frac{C}{\|\nabla_\theta\ell(\theta_{t-1};x_j)\|}\right\},
\]
and, with learning rate $\eta_t$, noise multiplier $\sigma_t$, and expected batch size $b_t=q_tn$, the released parameter trajectory evolves as
\begin{equation}\label{eq:dpsgd_update}
    \theta_t
    =\theta_{t-1}
     -\frac{\eta_t}{b_t}
      \left(
        \sum_{j\in B_t}\bar g_t(x_j)+\sigma_tC\,\xi_t
      \right),
    \qquad
    \xi_t\sim\mathcal N(0,I_p).
\end{equation}
We write $\mathcal H_{t-1}:=\theta_{0:t-1}$. We take $C=1$ throughout: rescaling the update by $C$ leaves mutual information invariant, so the bound, which depends only on $(q_t,\sigma_t)$, holds for any clipping norm.

\begin{theorem}[Dominating-pair leakage bound]\label{thm:gc_canonical}
For DP-SGD with updates \eqref{eq:dpsgd_update}, hyperparameters $(q_t,\sigma_t)_{t=1}^T$, and data-independent initialisation,
\begin{equation}\label{eq:Kprv_def}
    \gamma
    \leq \frac{1}{\ln 2}\sum_{t=1}^T K_t^{\mathrm{PRV}},
    \qquad
    K_t^{\mathrm{PRV}}
    :=\KL\bigl(P_t^{(1)}\,\|\,Q_t^{(1)}\bigr),
\end{equation}
where
\[
    P_t^{(1)}
    :=(1-q_t)\,\mathcal N(0,\sigma_t^2)
      +q_t\,\mathcal N(1,\sigma_t^2),
    \qquad
    Q_t^{(1)}:=\mathcal N(0,\sigma_t^2).
\]
The pair $(P_t^{(1)},Q_t^{(1)})$ is the standard one-dimensional dominating pair for the Poisson-subsampled Gaussian mechanism with sampling rate $q_t$, noise multiplier $\sigma_t$, and unit sensitivity \citep{mironov2019renyi,zhu2022optimal}. Moreover, $K_t^{\mathrm{PRV}}$ is the expectation under $P_t^{(1)}$ of the privacy-loss random variable $\ln(\mathrm dP_t^{(1)}/\mathrm dQ_t^{(1)})$ used by the PRV accountant \citep{gopi2021numerical}.
\end{theorem}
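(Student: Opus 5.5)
We bound $\gamma=I(x_i;\theta_{0:T}\mid X^{-i})$ by expanding the trajectory with the chain rule, bounding each step's conditional mutual information by a KL divergence against a reference distribution, and then reducing that KL to the one-dimensional dominating pair.

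\textbf{Step 1 (chain rule).} Since $\theta_0$ is data-independent, $I(x_i;\theta_0\mid X^{-i})=0$. The chain rule then gives
\[
\gamma=\sum_{t=1}^T I(x_i;\theta_t\mid \mathcal H_{t-1},X^{-i}).
\]
Because of the Markov structure of \eqref{eq:dpsgd_update}, conditionally on $(\mathcal H_{t-1},X)$ the step $\theta_t$ is a function of $\theta_{t-1}$, the fresh Poisson mask and the fresh noise $\xi_t$. We work in nats and divide by $\ln 2$ at the end.

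\textbf{Step 2 (variational bound per step).} We use the golden formula: for any conditional reference law $R(\cdot\mid \mathcal H_{t-1},X^{-i})$ not depending on $x_i$,
\[
I(x_i;\theta_t\mid\mathcal H_{t-1},X^{-i})\le \E\,\KL\bigl(P_{\theta_t\mid \mathcal H_{t-1},X}\,\|\,R\bigr).
\]
We choose $R$ to be the law of $\theta_t$ with the target's contribution removed, i.e.\ the update computed from $X^{-i}$ alone. Given $(\mathcal H_{t-1},X)$, write $G=\sum_{j\ne i,\,j\in B_t}\bar g_t(x_j)$ and $v=\bar g_t(x_i)$ with $\|v\|\le 1$. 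Then the rescaled increment has law
\[
\E_G\bigl[(1-q_t)\mathcal N(G,\sigma_t^2 I)+q_t\mathcal N(G+v,\sigma_t^2 I)\bigr],
\]
while $R$ is $\E_G\,\mathcal N(G,\sigma_t^2 I)$. Since $G$ is the same random variable under both laws, joint convexity of KL lets us condition on $G$ and translate by $G$. This leaves $\KL\bigl((1-q_t)\mathcal N(0,\sigma_t^2I)+q_t\mathcal N(v,\sigma_t^2I)\,\|\,\mathcal N(0,\sigma_t^2I)\bigr)$. By rotational invariance this reduces to one dimension along $v$, with mean shift $\|v\|$. The affine rescaling by $\eta_t/b_t$ and the clipping constant leave KL invariant.

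\textbf{Step 3 (monotonicity in the shift).} It remains to show that $f(\mu)=\KL\bigl((1-q)\mathcal N(0,\sigma^2)+q\mathcal N(\mu,\sigma^2)\,\|\,\mathcal N(0,\sigma^2)\bigr)$ is nondecreasing in $\mu\in[0,1]$, so that $\|v\|\le1$ gives $f(\|v\|)\le f(1)=K_t^{\mathrm{PRV}}$. I would prove this by a data-processing argument: $\mathcal N(\mu,\sigma^2)$ for $\mu<1$ is obtained from the unit-shift pair by a common Markov kernel. The first thing to try is a Gaussian-mixture post-processing that maps $\mathcal N(0,\sigma^2)$ to itself and $\mathcal N(1,\sigma^2)$ toward $\mathcal N(\mu,\sigma^2)$. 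If no such kernel exists exactly, the fallback is the standard fact that this pair dominates the subsampled Gaussian mechanism in the sense of \citet{zhu2022optimal}, which controls every $f$-divergence, KL included. I expect this step, together with checking the direction of the divergence (add versus remove relation), to be the main obstacle. The direction matters because the dominating pair for the remove relation is the reversed pair.

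\textbf{Step 4 (conclusion and identification).} Summing over $t$ and dividing by $\ln 2$ gives \eqref{eq:Kprv_def}. The final claim, that $K_t^{\mathrm{PRV}}$ equals the expectation under $P_t^{(1)}$ of the privacy-loss random variable $\ln(\mathrm dP_t^{(1)}/\mathrm dQ_t^{(1)})$, is just the definition of KL.
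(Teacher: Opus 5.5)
Your argument is correct, and it ends at the same one-dimensional quantity as the paper by a somewhat different route.

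\textbf{Handling the other records.} The paper first enlarges the conditioning to $\Omega=(\mathcal H_{t-1},X^{-i},B_t\setminus\{i\})$ through a chain-rule inequality. It then drops $\Omega$ entirely using $Y\perp\Omega\mid G$, where $G$ is the target's clipped gradient. Only after that does it apply the golden formula unconditionally, with reference $\mathcal N(0,\sigma_t^2I_p)$. You instead apply the golden formula conditionally, with the leave-target-out law as reference. You then remove the other records' subsampling randomness by joint convexity of KL over the finitely many realisations of $B_t\setminus\{i\}$. These are two packagings of the same idea.

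\textbf{Worst case over clipped inputs.} This is the more substantive difference. The paper cites Theorem~4 of \citet{mironov2019renyi} at R\'enyi order $\alpha=1$, with a footnote justifying the passage to that limit. You instead reduce to one dimension and prove monotonicity in the shift by data processing. With $v=\|v\|e_1$, both laws are products of the first-coordinate law with $\mathcal N(0,\sigma_t^2I_{p-1})$, so KL tensorises and the remaining coordinates contribute zero. This route is elementary and self-contained, and it needs neither the R\'enyi result nor the limit argument.

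\textbf{Step~3 closes.} Your first attempt works, so no fallback to dominating-pair theory is needed. For $\mu\in[0,1]$, use the kernel $y\mapsto \mu y+\sqrt{1-\mu^2}\,\sigma Z$ with $Z\sim\mathcal N(0,1)$ independent. It maps $\mathcal N(0,\sigma^2)$ to itself and $\mathcal N(1,\sigma^2)$ to $\mathcal N(\mu,\sigma^2)$. By linearity it therefore fixes $Q_t^{(1)}$ and sends $P_t^{(1)}$ to the $\mu$-shifted mixture. Data processing then gives $f(\mu)\le f(1)$.

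\textbf{Direction of the divergence.} Your worry here also dissolves. Your golden-formula reference is the law with the target absent, so the divergence you bound is from the target-present mixture to the target-absent Gaussian. That is exactly $\KL(P_t^{(1)}\,\|\,Q_t^{(1)})$, as stated, with no appeal to add/remove conventions.
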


\begin{proof}[Proof sketch]
The chain rule decomposes trajectory leakage into per-step conditional mutual informations. Conditional on the past and non-target records, each update is a Bernoulli-gated Gaussian channel whose divergence is bounded by the standard one-dimensional dominating pair. Summing these bounds and converting nats to bits gives \eqref{eq:Kprv_def}; see \Cref{app:accountant}.
\end{proof}

Thus the reconstruction analysis consumes DP-SGD through a single scalar information budget, obtained by summing schedule-dependent per-step charges. The factor $1/\ln2$ converts the KL divergences in \eqref{eq:Kprv_def} from nats to the bits used in \Cref{thm:master}. Each $K_t^{\mathrm{PRV}}$ is a one-dimensional integral, evaluated to an error negligible relative to the finite-sample margins of \Cref{sec:entropy} (\Cref{app:exp_spectrum}). The accountant can therefore be computed before training, or retrospectively from reported sampling rates and noise multipliers.

\section{A confidence bound on the representation entropy}\label{sec:entropy}

\Cref{sec:bound_gamma} supplies the mechanism-side leakage bound. We now obtain the source-side entropy bound required by \Cref{cor:rp} from finite reference data. Because \eqref{eq:rp_min} is increasing in the entropy it uses, a lower confidence bound on $\bar h_U(k)$ gives a valid, conservative reconstruction bound.

By \Cref{def:dither}, each $\tilde z_i^{\,\Pi_b\Psi}$ is a finitely supported source convolved with Gaussian dither of a chosen scale $s$. We use the plug-in estimator of \citet{goldfeld2020convergence}, namely the entropy of the empirical mixture. For a sample $x_{1:n}$, define
\begin{equation}\label{eq:plugin_def}
    \hat f_n(y;\Pi_b)
    :=\frac{1}{n}\sum_{j=1}^n
      \phi_s\bigl(y-\Pi_b\Psi(x_j)\bigr),
    \qquad
    \hat h_n(\Pi_b)
    :=h\bigl(\hat f_n(\cdot;\Pi_b)\bigr),
    \qquad
    \hat h_n
    :=\frac{1}{N_k}\sum_{b=1}^{N_k}\hat h_n(\Pi_b),
\end{equation}
where $\phi_s$ is the $k$-dimensional dither density. We evaluate this entropy using the Monte Carlo integrator of \citet{goldfeld2020convergence} with $n_{\mathrm{MC}}$ dither replicates per centre and denote the result by $\hat h_{n,\mathrm{MC}}$. The estimator is minimax rate-optimal for this estimand, and its bias is downward \citep[Eq.~(70)]{goldfeld2020convergence}, which is the conservative direction required here. The finite-sample margins below are new. Throughout, $n$ denotes the size of the sample used for entropy estimation, which need not be the training set (see \Cref{app:data_handling}).

The confidence statement is simultaneous over the candidate block sizes and dither scales, so the final pair may be selected after estimation without invalidating coverage.

\begin{proposition}[Confidence bound on the representation entropy] \label{prop:coverage} Let Assumptions~\ref{asm:discrete}--\ref{asm:iid} hold. Fix $\alpha\in(0,1)$, an admissible $\Psi:\R^d\to\R^m$ with $\tr\Sigma_\Psi>0$, and the grid $\mathcal G$ of \eqref{eq:rp_max}. Let $U$ and the reference trace $\tr\hat\Sigma_\Psi$ be fixed using held-out data, independently of the estimation sample $x_{1:n}$, as is $\Psi$, and for each $(k,\nu^2)\in\mathcal G$ set \( s=\nu (\tr\hat\Sigma_\Psi/k)^\frac{1}{2} \). For the closed-form margins $t_{\mathrm{MC}},t_{\mathrm{MD}}>0$ defined in \Cref{app:margins}, and $\hat h_{n,\mathrm{MC}}$, $t_{\mathrm{MC}}$ and $\bar h_U(k)$ evaluated at the dither scale $s$ of each pair, with probability at least $1-\alpha$ over the i.i.d.\ estimation sample $x_{1:n}\sim P_x$ and Monte Carlo noise, given $D_\perp$,
\begin{equation}\label{eq:h_certified} 
    \bar h_{\mathrm{LB}}(k) :=\hat h_{n,\mathrm{MC}}-t_{\mathrm{MC}}-t_{\mathrm{MD}} \leq \bar h_U(k) \qquad \text{for all }(k,\nu^2)\in\mathcal G. 
\end{equation} 
On the same event, every reconstruction admitted by the threat model in \Cref{sec:setting} satisfies \begin{equation}\label{eq:abs_certified} 
    \E\bigl[\|\zT_i-\zhatT_i\|^2\bigr] \geq \max_{(k,\nu^2)\in\mathcal G} \left( \frac{k}{2\pi e} 2^{2(\bar h_{\mathrm{LB}}(k)-\gamma)/k} -ks^2 \right)_+ , \end{equation} 
and, if in addition $\tr\hat\Sigma_\Psi\geq\tr\Sigma_\Psi$, \begin{equation}\label{eq:rho_certified} 
    \rho_\Psi(g) \geq \max_{(k,\nu^2)\in\mathcal G} \left( \frac{k}{2\pi e\,\tr\hat\Sigma_\Psi} 2^{2(\bar h_{\mathrm{LB}}(k)-\gamma)/k} -\nu^2 \right)_+^{1/2}. \end{equation} 
\end{proposition}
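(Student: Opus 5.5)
The proof splits into a probabilistic part, the coverage statement \eqref{eq:h_certified}, and a deterministic part, the bounds \eqref{eq:abs_certified} and \eqref{eq:rho_certified}. The deterministic part follows from \Cref{cor:rp} once coverage holds.

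\textbf{Coverage.} For a fixed pair $(k,\nu^2)\in\mathcal G$, the quantities $U$, $\Psi$ and $\tr\hat\Sigma_\Psi$ are fixed independently of $x_{1:n}$ (conditioning on the held-out data $D_\perp$). Hence the dither scale $s$ and the block maps $\Pi_b$ are deterministic given $D_\perp$. I decompose
\[
\bar h_U(k)-\hat h_{n,\mathrm{MC}}
=\bigl(\bar h_U(k)-\E\hat h_n\bigr)+\bigl(\E\hat h_n-\hat h_n\bigr)+\bigl(\hat h_n-\hat h_{n,\mathrm{MC}}\bigr).
\]
The first term is nonnegative because the plug-in bias is downward. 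Concavity of $h$ gives $\E h(\hat f_n)\le h(\E\hat f_n)=h(f)$, which is exactly \citep[Eq.~(70)]{goldfeld2020convergence}. The second term is a fluctuation of a function of $n$ i.i.d.\ samples, which I control with McDiarmid's inequality. Replacing one centre $\Pi_b\Psi(x_j)$ changes each mixture entropy $h(\hat f_n(\cdot;\Pi_b))$ by at most $c_b/n$. The constant $c_b$ is bounded through the maximal log-density ratio of Gaussian kernels. Since the data are finite by \Cref{asm:discrete}, $\|\Pi_b\Psi(x)\|$ is bounded by some $R_b$, and $c_b$ is of order $R_b^2/s^2+R_b\sqrt{k}/s$. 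I would define $t_{\mathrm{MD}}$ as the resulting one-sided McDiarmid margin for the block average. The third term is the Monte Carlo error: $\hat h_{n,\mathrm{MC}}$ is an average of $-\log\hat f_n(y)$ over dithered replicates, whose range is bounded given the sample. Hoeffding's inequality, applied conditionally on $x_{1:n}$, gives $t_{\mathrm{MC}}$. Finally, a union bound over the $|\mathcal G|$ pairs and the two error sources, with each failure probability set to $\alpha/(2|\mathcal G|)$, yields simultaneous coverage with probability at least $1-\alpha$.

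\textbf{Distortion bounds.} On the coverage event, \Cref{cor:rp} applied to each pair gives
\[
\E\|\zT_i-\zhatT_i\|^2\ge\frac{k}{2\pi e}2^{2(\bar h_U(k)-\gamma_i)/k}-ks^2,
\]
using its unnormalised form, that is, multiplying \eqref{eq:rp_min} by $\tr\Sigma_\Psi$. Three monotonicity steps then finish the argument:
\begin{itemize}
\item the right-hand side is increasing in entropy, so $\bar h_U(k)$ can be replaced by $\bar h_{\mathrm{LB}}(k)$;
\item it is decreasing in leakage, so $\gamma_i$ can be replaced by any upper bound $\gamma$, such as the one from \Cref{thm:gc_canonical};
\item every pair is valid, so the maximum over $\mathcal G$ can be taken, which gives \eqref{eq:abs_certified}.
\end{itemize}
For \eqref{eq:rho_certified}, I divide by $\tr\Sigma_\Psi$ and use $\tr\hat\Sigma_\Psi\ge\tr\Sigma_\Psi$. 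The substitution must be made with care, because $ks^2/\tr\hat\Sigma_\Psi=\nu^2$ by the choice of $s$. I first write the bound as $A-ks^2$ with $A\ge0$ and use $(A-ks^2)_+/\tr\Sigma_\Psi\ge(A-ks^2)_+/\tr\hat\Sigma_\Psi$, which equals $\bigl(A/\tr\hat\Sigma_\Psi-\nu^2\bigr)_+$. Taking square roots gives the claim.

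\textbf{Main obstacle.} I expect the main obstacle to be the McDiarmid bounded-difference constant for the differential entropy of a Gaussian mixture. The direct approach would bound $|h(\hat f)-h(\hat f')|$ via the KL divergence between mixtures that differ in one component, together with a uniform bound on $|\log\hat f_n|$ over the region carrying the mass. I would first try the cruder route of a pointwise bound: ratios of Gaussian kernels with centres in a ball of radius $R_b$ are bounded by $\exp\bigl((2R_b\|y\|+R_b^2)/s^2\bigr)$, and I would integrate this against Gaussian tails to obtain an explicit $c_b$.
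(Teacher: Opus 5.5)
Your skeleton matches the paper's. It has the same three-term split of $\hat h_{n,\mathrm{MC}}-\bar h_U(k)$ and the downward plug-in bias from concavity with $\E\hat f_n=f$. The fluctuation gets a one-sided McDiarmid bound, and the integration error a conditional bound that is then averaged over the sample. A union bound over $\mathcal G$ takes $\alpha/(2|\mathcal G|)$ per error source. Finally, $\bar h_{\mathrm{LB}}(k)$ is substituted monotonically into the absolute form of \Cref{cor:rp}, and you divide by $\tr\hat\Sigma_\Psi\ge\tr\Sigma_\Psi$ using $ks^2/\tr\hat\Sigma_\Psi=\nu^2$.

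The proposition, however, asserts coverage for the specific margins of \Cref{app:margins}, and your Monte Carlo margin rests on a false premise. The summands $-\log_2\hat f_n(y_{b,i}+W_{ij,b})$ are bounded below by $\tfrac k2\log_2(2\pi s^2)$, but not above. The only envelope comes from $\hat f_n(y_{b,i}+W)\ge\tfrac1n\phi_s(W)$, namely $\log_2 n+\tfrac k2\log_2(2\pi s^2)+\tfrac{\|W\|^2}{2s^2}\log_2 e$, and this is unbounded because $W$ is Gaussian. That upper tail is exactly what must be controlled to get $\hat h_{n,\mathrm{MC}}\le\hat h_n+t_{\mathrm{MC}}$, so Hoeffding's inequality does not apply. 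The paper instead combines two ingredients from Goldfeld et al.'s Theorem~5(ii): conditional unbiasedness, and a conditional variance bound through $c_{\mathrm{MC}}(\Pi_b)$, which depends on the sample via $\hat m_b$. It then applies Cantelli's one-sided inequality, which needs only a second moment and gives $t_{\mathrm{MC}}=\sigma_{\mathrm{MC}}\sqrt{|\mathcal G|/\alpha_1-1}$ (\Cref{lem:mc_moments,lem:t_mc}). A sub-exponential Bernstein bound built on the $\chi^2_k$ envelope would also repair your argument, but it certifies a different margin from the one the statement names.

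The McDiarmid step is where you miss the key idea. Your kernel-ratio route can be made rigorous. The identity $h(p)-h(q)=-\int(p-q)\ln q-\KL(p\,\|\,q)$, together with the min/max envelope of $-\ln\hat f'_n$ over centres of norm at most $R_b$, gives a per-coordinate change of order $(R_b^2/s^2+R_b\sqrt k/s)/n$, confirming your estimate. But this constant depends on a worst-case norm $R_b$ over all of $\mathcal Q$, on $s$ and on $k$. It is therefore not the $t_{\mathrm{MD}}$ of the statement, and it is useless at image scales, where $R_b/s$ is large.

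The paper sidesteps your ``main obstacle'' entirely with the finite-mixture entropy sandwich (\Cref{lem:mixture_sandwich}). Write both mixtures as $\tfrac{n-1}{n}g+\tfrac1n\phi_s(\cdot-c)$ with the shared component $g$. Translation invariance of the Gaussian entropy makes the weighted component entropy $L$ identical for the two, and both entropies then lie in $[L,L+H_b(1/n)]$. This yields the constant $H_b(1/n)$, free of $k$, $N_k$, $s$, $R_b$ and the $\Pi_b$, and hence $t_{\mathrm{MD}}=H_b(1/n)\sqrt{\tfrac n2\ln(|\mathcal G|/\alpha_2)}$. With this and the Cantelli fix, your proof becomes the paper's. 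Without them, it establishes coverage only for margins other than those the proposition asserts.
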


\begin{proof}[Proof sketch]
Cantelli's inequality controls the Monte Carlo error, McDiarmid's inequality controls the sampling error, and the downward plug-in bias requires no margin. A union bound over $\mathcal G$ permits post-estimation selection of $(k,\nu^2)$; see \Cref{app:entropy}.
\end{proof}

Both entropy margins are finite-sample (\Cref{app:margins}). For normalisation we replace $\tr\Sigma$ by its held-out estimate $\tr\hat\Sigma$; dividing instead by its finite-sample upper confidence bound lowers $\rho^\star$ by at most $0.03$, except on ImageNette, whose $D_\perp$ is small (\Cref{tab:trsigma}). The underlying absolute-distortion bound does not involve $\tr\Sigma$ and keeps its finite-sample coverage.

\begin{figure}[t]
    \centering
    \includegraphics[width=\textwidth]{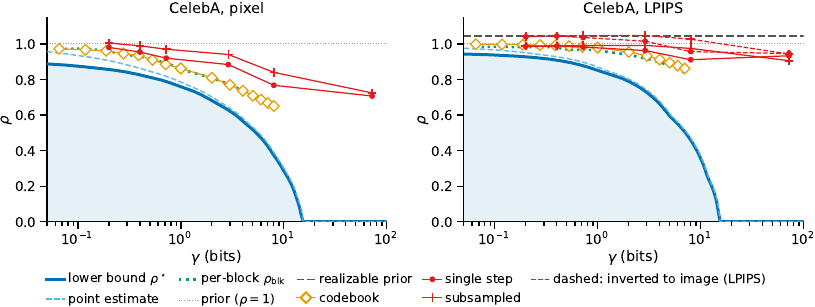}
    \caption{\textbf{CelebA: bounds, attacks and slack decomposition.} Pixel (left) and LPIPS (right) reconstruction ratios at leakage $\gamma$. The bound $\rho^\star$, block-specific bound $\rho_{\mathrm{blk}}$, achieved codebook-release error $\rho_{\mathrm{cb}}$, and attack errors against a single full-batch noisy-gradient release and a $100$-step subsampled DP-SGD trajectory are shown. The large $\rho^\star$-to-$\rho_{\mathrm{blk}}$ gap is caused by charging every block the full leakage; the close $\rho_{\mathrm{blk}}$-to-$\rho_{\mathrm{cb}}$ match shows that little slack remains in the rate--distortion step. Details are in \Cref{app:exp_compressor}.}
    \label{fig:tightness}
    \vspace{-6pt}
\end{figure}

\section{Experiments}\label{sec:experiments}

We ask two questions: what reconstruction protection does a realistic DP-SGD run guarantee at a given utility cost, and what explains the gap between this guarantee and a strong reconstruction attack? \Cref{fig:framework} gives the conceptual framework; the pipeline below shows how we compute its components in both experiments.

\paragraph{Computing the bound.}
\begin{center}
\setlength{\tabcolsep}{2pt}
\small
\begin{tabular}{c@{\hspace{3pt}}c@{\hspace{3pt}}c@{\hspace{3pt}}c@{\hspace{3pt}}c@{\hspace{3pt}}c@{\hspace{3pt}}c}
\fbox{\parbox[c]{0.18\linewidth}{\centering
\textbf{Account}\\[-1pt]
$(q_t,\sigma_t)_{t=1}^T\mapsto\gamma$\\[-2pt]
{\scriptsize\Cref{thm:gc_canonical}}}}
& $\longrightarrow$ &
\fbox{\parbox[c]{0.18\linewidth}{\centering
\textbf{Calibrate}\\[-1pt]
$D_\perp\mapsto U,\nu,\tr\hat\Sigma_\Psi$\\[-2pt]
{\scriptsize\Cref{app:data_handling}}}}
& $\longrightarrow$ &
\fbox{\parbox[c]{0.24\linewidth}{\centering
\textbf{Estimate}\\[-1pt]
$D_{\mathrm{est}}\mapsto\bar h_{\mathrm{LB}}(k)$ \\[-1pt]
for all $(k,\nu^2)\in\mathcal G$\\[-2pt]
{\scriptsize\Cref{prop:coverage}}}}
& $\longrightarrow$ &
\fbox{\parbox[c]{0.16\linewidth}{\centering
\textbf{Optimise}\\[-1pt]
maximise over $\mathcal G$\\[-2pt]
{\scriptsize\eqref{eq:rho_certified}}}}
\end{tabular}
\end{center}

\paragraph{Datasets and representations.} We evaluate \eqref{eq:rho_certified}, the finite-sample instantiation of \Cref{cor:rp} obtained using \Cref{prop:coverage}. The main text reports representative pixel-space and LPIPS results, while the appendix extends the evaluation across MNIST, CIFAR-10, CelebA and ImageNette (\Cref{app:exp_data,app:exp_phi}). Both representations use the same schedule leakage $\gamma$; only their source entropy and prior-only error differ. Deployment results are reported against $(\varepsilon,\delta)$, with $\delta=n_{\mathrm{train}}^{-1.1}$. Further experimental and data-handling details appear in \Cref{app:data_handling,app:exp}.

\paragraph{Protection and utility.} For each target $\varepsilon$, we calibrate $\sigma$, compute the resulting leakage $\gamma$ and bound $\rho_\Psi^\star$, and train a model under the same DP-SGD schedule. We denote the pixel and LPIPS bounds by $\rhostarpx$ and $\rhostarlp$. Thus \Cref{fig:spectrum_headline,fig:spectrum_all} pair the reconstruction bound computable before training with the test accuracy subsequently achieved by each schedule. On CelebA, $\rhostarpx$ remains above $0.5$ up to $\varepsilon=16$ ($\rhostarpx=0.513$), where accuracy is $96.1\pm0.3\%$, one point below the non-private baseline; $\rhostarlp$ is higher there ($\rhostarlp=0.588$). At $\varepsilon=8$, $\rhostarpx$ lies between $0.68$ and $0.75$ across all four datasets. The bounds remain non-trivial for $\varepsilon>10$, but weaken as leakage approaches the source uncertainty certifiable from finite data. We report against $\varepsilon$ for familiarity; equal $(\varepsilon,\delta)$ does not imply equal $\gamma$ or equal reconstruction protection.

\paragraph{Attack comparison.} We use a single reconstruction attack throughout: the prior-aware attack of \citet{hayes2023bounding}, applied to the linear model of \citet{schwethelm2024visual}, whose per-example gradient is the input (\Cref{app:exp_attack}). The attack uses all non-target records and released updates and estimates the source prior from a shadow sample excluding the target. At matched leakage $\gamma$, we apply the attack to a one-step full-batch release, $(q,T)=(1,1)$, and a $100$-step subsampled trajectory, $(q,T)=(0.02,100)$. These are the one-step and multi-step series in \Cref{fig:tightness,fig:tightness_mnist}. Both mean errors remain above the same bound; the subsampled release is harder to exploit because target participation is hidden. Individual targets may fall below the bound because it bounds population-average expected error (\Cref{fig:sandwich_full}). Numerical results appear in \Cref{tab:attack_results}.

\paragraph{Where does the slack arise?} \Cref{fig:tightness} gives a clear answer: the principal identifiable looseness comes from charging every block the full leakage, while little comes from the rate--distortion step. To show this, at leakage $R=\gamma$ we compare DP-SGD with a reconstruction-optimised codebook release that indexes the nearest of $2^R$ representative points; each bit released by this codebook is much more helpful for reconstruction than bits leaked from DP-SGD. We need two codebooks, one for rates above one bit and one for sub-bit rates (\Cref{app:exp_compressor}). We evaluate the chain: \( \rho^\star \leq \rho_{\mathrm{blk}} \leq \rho_{\mathrm{cb}} \leq \rho_{\mathrm{dpsgd}} \). Here $\rho^\star$ is the bound, $\rho_{\mathrm{blk}}$ charges each block only its actual leakage, and $\rho_{\mathrm{cb}}$ and $\rho_{\mathrm{dpsgd}}$ are attack errors against the codebook and DP-SGD releases. The first gap is $0.10$--$0.33$ in $\rho$, whereas the second is at most $0.03$, showing that the rate--distortion step is close to tight after correcting the block charge. The remaining gap is mechanism-specific, though it may also include accountant and attack looseness. Separately, we also introduce some slack by taking a confidence bound; what we lose by not using the entropy point estimate is shown in \Cref{fig:tightness}. See \Cref{app:exp_compressor} for details.

\section{Conclusion}\label{sec:conclusion}

We introduced a modular framework that lower-bounds population-average reconstruction error under an informed adversary. For DP-SGD it combines a schedule-level leakage bound with a finite-sample source-entropy bound in an analyst-chosen representation. Across the image datasets studied, the pixel and LPIPS bounds remain non-trivial across commonly used privacy regimes, including settings with $\varepsilon>10$.\newline
\textbf{Limitations.} The guarantee is in expectation under the source and does not preclude more accurate reconstruction of particular records or subpopulations. The bound has looseness, nearly all looseness can be attributed to the full leakage penalty being charged to every block and not accounting for the fact that some released bits are more helpful for reconstruction than others. Observing the full noisy trajectory directly covers trajectory-exposing settings but is conservative when only the final model is released. The theory also excludes cross-record preprocessing and auxiliary information beyond the stated threat model.\newline
\textbf{Future work.} The modular construction allows mechanism-specific leakage bounds, source-specific converses, other distortion measures, and representations targeting particular harms. Conditioning leakage and source uncertainty on auxiliary information could model better-informed adversaries, while joint source and leakage terms may yield group-reconstruction guarantees. Finally, using unreleased iterates or model-specific gradient structure could tighten a deployment-specific guarantee at the cost of architecture independence.

\newpage
\subsection*{AI use statement}

In this work, we only used generative AI tools (LLM-based chat and coding assistants) for the following tasks: assisting with the mathematical development, providing feedback on research methodology, as a writing aid and writing and running code. The mathematical assistance consisted of checking proofs for errors, suggesting intermediate steps in proofs, making proofs more formal, keeping notation consistent and general brainstorming. All mathematical statements were thoroughly checked multiple times and line-by-line by the authors. As a writing aid it was used to help shorten text and increase clarity. We used AI to help write much of the code. The authors checked the code by eye and employed sanity checks, plots and tests during development to make sure that the code is valid. We take responsibility for the final content of this work, including text, claims or artifacts produced with the aid of generative AI.

\subsection*{Ethics statement}

This work is intended as a tool to help practitioners choose and assess privacy protections, by bounding how accurately a training record can be reconstructed from a model trained with DP-SGD. To test the bound we run reconstruction attacks, adapted from the prior-aware attack of \citet{hayes2023bounding}, against a deliberately simple model on public benchmark datasets.

\subsection*{Reproducibility statement}

\Cref{alg:certify} gives the full procedure for computing the bound, and \Cref{app:data_handling} specifies the data splits it uses. \Cref{app:exp} gives all remaining experimental details: dataset preprocessing, the parameter grid and estimator settings, the architectures, hyperparameters and noise calibration for the DP-SGD runs, and the construction of the attack and the codebook mechanism. All datasets are public. Code for all experiments is available at \url{https://github.com/h-0-0/info_reconstruction_bound}.

\subsubsection*{Acknowledgments}
We would like to thank His Majesty’s Government for contributing toward Robert F. Allison’s and Henry Bourne's funding during the course of this work. We also thank the EPSRC Centre for Doctoral Training in Computational Statistics and Data Science (COMPASS) for funding Henry Bourne’s PhD studentship. This work was carried out using the computational facilities of the Advanced Computing Research Centre, University of Bristol - http://www.bristol.ac.uk/acrc/.

\bibliography{iclr2027_conference}
\bibliographystyle{iclr2027_conference}

\appendix

\section{Extended related work}\label{app:related_extended}

Here we provide further details on related reconstruction bound work, this literature is helpfully summarized by \Cref{tab:lineage} which organises each distinct lineage (ReRo and expected distortion) in order of publication date. 

\begin{table}[h]
\centering\scriptsize
\setlength{\tabcolsep}{2.5pt}
\renewcommand{\arraystretch}{1.15}
\newcolumntype{Y}{>{\raggedright\arraybackslash}X}
\caption{The reconstruction-guarantee lineage in order of first public release, split by bounded metric: probability guarantees ($\Pr$) above the double rule, expected-error guarantees ($\E$) below. \textit{Threat Model} gives background knowledge (\textbf{Informed}: adversary knows all training records but the target; \textbf{Uninformed}: no background dataset knowledge; \textbf{Target-informed}: adds target-specific auxiliary data) and access (\textbf{Final release}: the mechanism's final output only; \textbf{All iterates}: intermediate updates or gradients). \textit{Prior Integration} indicates whether a prior over the target enters the guarantee itself, and \textit{Prior Assumption and Source} whether it is assumed (\textbf{Analytic}) or approximated from data (\textbf{Empirical}); structural assumptions on the estimator or data space are noted in the same cell. \textit{Baseline success} unites disparate notations for the maximum probability of guessing the target from prior knowledge alone; \citet{guerra2026understanding} instead use the collision probability $\kappa_\pi = \Pr_{Z,Z'\sim\pi}[Z=Z']$, which coincides with a baseline success only under a uniform prior. $\Delta\Pr$ denotes a guarantee stated as an advantage --- the increase in success attributable to the target's participation --- rather than a raw probability. Experimental cells list representative settings, not an exhaustive inventory. Abbreviations: LDP/RDP/$f$-DP (Local, Rényi, and $f$-differential privacy), CMI (conditional mutual information), LCB (lower confidence bound), HCRB (Hammersley--Chapman--Robbins bound), CRB (Cramér--Rao bound), FIL (Fisher information leakage), dFIL (diagonal FIL), RR (randomized response).}
\label{tab:lineage}
\begin{tabularx}{\textwidth}{@{}>{\raggedright\arraybackslash}p{19mm} Y Y Y Y Y Y@{}}
\toprule
& Threat Model & Mechanism Leakage Bound & Bounded Object (Metric) & Prior Integration into Bound & Prior Assumption and Source & Experimental Instantiation \\
\midrule
\citet{bhowmick2018protection} & Uninformed, all iterates & $\varepsilon$-LDP: $e^{\varepsilon}$ scaling & $\Pr[\text{error} \le \eta]$ & Baseline success $p$ & Analytic: Uniform/Zipfian & MNIST, CIFAR-10, Flickr, Reddit: Accuracy gap \\
\midrule
\citet{balle2022reconstructing} & Informed, final release & RDP/$\varepsilon$-DP: $e^{\varepsilon}$ scaling & $\Pr[\text{error} \le \eta]$ & Baseline success $\kappa$ & Analytic: Uniform/Gaussian & NA \\
\midrule
\citet{xiao2023pac} & Uninformed, final release & PAC Privacy: $I(X; \mathcal{M}(X)+B)$ & $\Pr[\text{error} \le \eta]$ & Baseline success $\delta$ & Analytic: Uniform population & CIFAR-10, MNIST: Noise magnitude \\
\midrule
\citet{guo2023analyzing} & Informed, final release & RDP + Fano: Generalized Fano & $\Pr[\text{exact}]$ & Baseline success $p^*$, Entropy $H(p)$ & Analytic: Prob.\ vector $p$ & Synthetic (RR, Gaussian), IWPC, GPT-2: Advantage, MSE \\
\midrule
\citet{hayes2023bounding} & Informed, all iterates & ReRo: Blow-up function & $\Pr[\text{exact}]$ & Baseline success $\kappa$ & Analytic: Uniform shortlist & MNIST, CIFAR-10: Success probability \\
\midrule
\citet{kaissis2023bounding} & Informed, final release & $f$-DP: Closed-form / Edgeworth & $\Pr[\text{exact}]$ & Baseline success $\kappa$ & Analytic: Uniform $1/n$ prior & Numerical: success bound vs.\ $\kappa$ at published DP-SGD hyperparameters \\
\midrule
\citet{swanberg2025beyond} & Informed ($n$ targets), final release & Pure/$f$-DP: $e^{\varepsilon}$ tail & $\Pr[\text{count of hits}]$ & Baseline success $\pi$ & Analytic: Product distribution $\mathcal{D}_1 \otimes \dots \otimes \mathcal{D}_n$ & LLM canaries, tabular: Advantage, Success \\
\midrule
\citet{kulynych2026unifying} & Informed, final release & $f$-DP: Trade-off (Thm.~3.1); TV (Thm.~3.3) & $\Pr[\text{attack success}]$ & Baseline success $base$ & Analytic: Scalar worst-case & US Census, GPT-2: Advantage, Accuracy \\
\midrule
\citet{guerra2026understanding} & Target-informed, final release & TV (Thm.~4.2); density difference (Thm.~4.3) & $\Delta\Pr[\text{error} \le \eta]$ & Collision probability $\kappa_\pi$ & Uniform/Data-estimated frequencies & MNIST, Adult, Porto: RAD \\
\midrule\midrule
\citet{guo2022bounding} & Informed, final release & RDP: HCRB (Thm.~1); FIL: CRB (Thm.~2) & $\E[\text{MSE}]$ & No prior; domain geometry / Fisher info & Analytic: bounded $\mathcal{Z}$; regularity; non-vacuous only for unbiased attacks & MNIST, CIFAR-10: MSE lower bound; dFIL vs.\ reconstruction \\
\midrule
This paper & Informed, all iterates & CMI: KL of dominating pair & $\E[\text{MSE}]$ & Differential entropy & Empirical: LCB from sample & MNIST--ImageNette: MSE, LPIPS \\
\bottomrule
\end{tabularx}
\end{table}

\paragraph{The information quantities in the ReRo literature.} Two threshold entries consume an information quantity, and neither is $\gamma_i$. \citet{xiao2023pac} consume the dataset-level $I(X;\mathcal M(X))$, computed after the fact by simulating the mechanism, and in their individual form a conditional mutual information given one other user's record rather than the leave-one-out conditioning on all of $X^{-i}$ that the informed adversary requires. \citet{guo2023analyzing} consume the R\'enyi mutual information $I_\alpha(X;Y)$ bounded by the RDP parameter, together with the Shannon entropy of a supplied probability vector, spent through a generalised Fano inequality on the probability of exact recovery rather than on distortion.

\paragraph{Expected-error bounds.}\label{app:unbiased}
Both bounds of \citet{guo2022bounding} carry an explicit bias-variance trade-off, but their deployable forms, $\mathrm{MSE}\ge\sum_i\mathrm{diam}_i(\mathcal Z)^2/(4d(e^\varepsilon-1))$ from RDP, and $\mathrm{MSE}\ge d/\tr I_h(z)$ from Fisher information, are the unbiased case, and the consequences of this assumption are visible in their own experiments. On $[0,1]^{784}$ an adversary that ignores the model and predicts the midpoint of each coordinate attains $\mathrm{MSE}\le 1/4$, well inside the $\mathrm{MSE}=1$ line they draw as ``perfect privacy''. Their per-sample Fisher bounds all exceed $1$, so the accounting assures every training sample as safe. Conversely, the informed-adversary attack of \citet{balle2022reconstructing} falls below the Fisher bound on some samples, which they attribute to the attack incurring ``a high bias when the sample is hard to reconstruct''.

The van Trees bound of \citet{maeng2023bounding} is the closest in form to ours as its prior distribution is estimated from a sample, and it needs a smooth density, so they perturb the images with Gaussian noise before score matching. The differences are in the direction of the estimate and where the smoothing lives. Theirs is a point estimate, which they note ``can be incorrect'' and should be read ``as a guideline'', and which is breached by their own attack when that perturbation is small, whereas ours is a one-sided confidence bound whose estimator is biased in the safe direction. Their perturbation is applied to the inputs themselves, where our dither is internal to the analysis and removed exactly (\Cref{thm:master}).

\section{Proof of the reconstruction lower bound}\label{sec:thm-single-proof}

\subsection{Proof of \Cref{lem:cmi_chain}}\label{app:cmi_chain}
\begin{proof}
The first inequality is the Data Processing Inequality (DPI) on $x_i \to (\theta, X^{-i}) \to \hat{u}_i$, valid since $\hat u_i$ depends on $x_i$ only through $(\theta, X^{-i})$ and randomness independent of $x_i$ given it. For the equality, recall the chain rule for mutual information \cite[Theorem~2.5.2]{cover2006elements}: for any random variables $A, B, C$ we have $I(A;\, B, C) = I(A;\, C) + I(A;\, B \mid C)$. Applying this with $A = x_i$, $B = \theta$, and $C = X^{-i}$ gives $I(x_i;\, \theta, X^{-i}) = I(x_i;\, X^{-i}) + I(x_i;\, \theta \mid X^{-i})$, and $I(x_i;\, X^{-i}) = 0$ under \Cref{asm:iid} leaves $I(x_i;\, \theta, X^{-i}) = I(x_i;\, \theta \mid X^{-i}) =: \gamma_i$.
\end{proof}

\subsection{Regularity of the dithered representation}\label{app:dither_regularity}

\begin{lemma}[Regularity of the dithered representation]\label{lem:dither_regularity}
Let \Cref{asm:discrete} hold, let $\Psi:\R^d\to\R^m$ be admissible (conditioning on its draw where $\Psi$ is random), let $s>0$, and let $\tilde z^{\,\Psi}_i = \Psi(x_i) + W^{(m)}_i$ be as in \Cref{def:dither}. Write $\phi_s$ for the $\mathcal N(0,s^2 I_m)$ density. Then
\begin{enumerate}
    \item[(i)] $\tilde z^{\,\Psi}_i$ has density $f_\Psi(y) = \sum_{v\in\mathcal Q}\Pr(x_i=v)\,\phi_s\bigl(y-\Psi(v)\bigr)$, a finite Gaussian mixture;
    \item[(ii)] $\Cov(\tilde z^{\,\Psi}_i) = \Sigma_\Psi + s^2 I_m$, and $\E\|\tilde z^{\,\Psi}_i\|^2 < \infty$;
    \item[(iii)] $h(\tilde z^{\,\Psi}_i)$ is finite and $h(\tilde z^{\,\Psi}_i) = h(W^{(m)}) + I\bigl(\Psi(x_i);\,\tilde z^{\,\Psi}_i\bigr) \ge h(W^{(m)}) = \tfrac m2\log_2(2\pi e s^2)$.
\end{enumerate}
\end{lemma}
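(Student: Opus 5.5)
The plan is to prove the three parts in order, working conditionally on the draw of $\Psi$ when it is random. Since $\Psi$'s randomness is independent of $X$ and of the dither, conditioning on it leaves $x_i\sim P_x$ and $W_i^{(m)}\sim\mathcal N(0,s^2I_m)$ independent. Everything therefore reduces to a deterministic measurable map $\Psi$ applied to a finitely supported source.

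For (i), I will write $\Psi(x_i)$ as a discrete random vector taking the finitely many values $\Psi(v)$, $v\in\mathcal Q$ (\Cref{asm:discrete}), with probabilities $\Pr(x_i=v)$. I then condition on $x_i=v$. By independence of $W_i^{(m)}$ from $x_i$, the conditional law of $\tilde z^{\,\Psi}_i$ is $\mathcal N(\Psi(v),s^2I_m)$, with density $\phi_s(\cdot-\Psi(v))$. The law of total probability then gives the stated finite mixture, which is a bona fide density; coincident centres $\Psi(v)=\Psi(v')$ cause no issue.

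For (ii), the covariance follows from independence: $\Cov(\Psi(x_i)+W)=\Cov(\Psi(x_i))+\Cov(W)=\Sigma_\Psi+s^2I_m$. Finiteness of the second moment comes from $\E\|\tilde z\|^2\le 2\E\|\Psi(x_i)\|^2+2ms^2$. The first term is a finite sum over $\mathcal Q$ of finite values, so $\Sigma_\Psi$ is well defined and finite.

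Part (iii) is the only step requiring care, because differential entropy of a mixture needs both an upper and a lower bound to be finite. For the upper bound, the maximum-entropy property of the Gaussian under a covariance constraint gives $h(\tilde z)\le\tfrac12\log_2\bigl((2\pi e)^m\det(\Sigma_\Psi+s^2I_m)\bigr)<\infty$, using the finite covariance from (ii). For the identity, I will use $I(\Psi(x_i);\tilde z)=h(\tilde z)-h(\tilde z\mid\Psi(x_i))$. Translation invariance and independence give $h(\tilde z\mid\Psi(x_i)=c)=h(c+W)=h(W)$ for each of the finitely many values $c$, so $h(\tilde z\mid\Psi(x_i))=h(W)=\tfrac m2\log_2(2\pi e s^2)$. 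To make this rigorous without assuming $h(\tilde z)$ finite a priori, I would instead compute $I(\Psi(x_i);\tilde z)$ directly as the average KL divergence $\sum_c p_c\,\KL(\phi_s(\cdot-c)\,\|\,f_\Psi)$. This quantity is nonnegative and bounded by $H(\Psi(x_i))\le\log_2|\mathcal Q|<\infty$. Expanding the KL divergence then yields $-\E\log_2 f_\Psi(\tilde z)-h(W)$, which identifies $h(\tilde z)$ as $h(W)$ plus a finite nonnegative quantity. This gives both the identity and the lower bound $h(\tilde z)\ge h(W)$, and shows finiteness independently of the Gaussian upper bound.
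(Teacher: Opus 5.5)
Your proof is correct. Parts (i) and (ii) match the paper's; the paper expands $\E\|\Psi(x_i)+W^{(m)}_i\|^2$ exactly with a vanishing cross term, where you use $\|a+b\|^2\le2\|a\|^2+2\|b\|^2$, which makes no difference.

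Part (iii) is where your route differs. The paper proceeds in three steps:
\begin{enumerate}
\item It gets $h(\tilde z^{\,\Psi}_i)>-\infty$ from the pointwise bound $f_\Psi\le(2\pi s^2)^{-m/2}$. This gives only $\tfrac m2\log_2(2\pi s^2)$, without the factor $e$.
\item It gets $h(\tilde z^{\,\Psi}_i)<\infty$ from Gaussian maximum entropy.
\item It cites the identity $h(\tilde z^{\,\Psi}_i)=h(W^{(m)})+I(\Psi(x_i);\tilde z^{\,\Psi}_i)$ from \citet{goldfeld2020convergence} and reads off the sharp lower bound from $I\ge0$.
\end{enumerate}
You instead derive the identity yourself from $I=\sum_c p_c\,\KL(\phi_s(\cdot-c)\,\|\,f_\Psi)$. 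This delivers finiteness, the identity and the sharp lower bound in one stroke. In fact it gives the two-sided sandwich $h(W^{(m)})\le h(\tilde z^{\,\Psi}_i)\le h(W^{(m)})+H(\Psi(x_i))$, which is the paper's \Cref{lem:mixture_sandwich} in this instance, so your maximum-entropy step becomes redundant. Your argument is self-contained at the price of a little measure-theoretic care; the paper's is shorter because it relies on a citation.

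Two steps should be made explicit. First, splitting each KL integral into $-h(W^{(m)})$ and $-\int\phi_s(y-c)\log_2 f_\Psi(y)\,dy$. Second, the bound $I\le H(\Psi(x_i))$. Both follow from the pointwise sandwich $p_c\,\phi_s(y-c)\le f_\Psi(y)\le(2\pi s^2)^{-m/2}$. It makes $\log_2 f_\Psi$ integrable against each Gaussian component, since it is bounded above and below by $\log_2 p_c$ plus a quadratic in $y$. It also gives $\KL(\phi_s(\cdot-c)\,\|\,f_\Psi)\le\log_2(1/p_c)$ directly, whose $p_c$-average is $H(\Psi(x_i))\le\log_2|\mathcal Q|$.
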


\begin{proof}
\emph{(i).} With probability $\Pr(x_i=v)$ the representation is $\Psi(v)+W^{(m)}_i$, whose density is $\phi_s(\cdot-\Psi(v))$, so $\tilde z^{\,\Psi}_i$ has the mixture density $f_\Psi$.

\emph{(ii).} Covariances of independent summands add, and $\Cov(W^{(m)}_i)=s^2I_m$. Expanding $\E\|\Psi(x_i)+W^{(m)}_i\|^2$, the cross term vanishes since $W^{(m)}_i$ is centred and independent of $\Psi(x_i)$, and $\E\|W^{(m)}_i\|^2 = ms^2$; $\E\|\Psi(x_i)\|^2$ is finite as $\Psi(x_i)$ takes finitely many values.

\emph{(iii).} Each $\phi_s(\cdot-\Psi(v))$ is at most $(2\pi s^2)^{-m/2}$ and the weights in (i) sum to one, so $f_\Psi \le (2\pi s^2)^{-m/2}$ everywhere; hence $-\log_2 f_\Psi \ge \tfrac m2\log_2(2\pi s^2)$ wherever $f_\Psi>0$ and $h(\tilde z^{\,\Psi}_i) \ge \tfrac m2\log_2(2\pi s^2) > -\infty$. Gaussian maximum entropy at the covariance of (ii) \citep[Thm.~8.6.5]{cover2006elements} gives $h(\tilde z^{\,\Psi}_i) < \infty$. Since $\Psi(x_i)$ is discrete and $W^{(m)}_i$ is Gaussian and independent of it, $h(\tilde z^{\,\Psi}_i) = h(W^{(m)}) + I(\Psi(x_i);\tilde z^{\,\Psi}_i)$ \citep[eq.~(53)]{goldfeld2020convergence}, and the inequality follows from non-negativity of mutual information.
\end{proof}

\subsection{Proof of \Cref{thm:master}}\label{app:thm_master_proof}

\begin{proof}[Proof of \cref{thm:master}]
If $\E\bigl[\|\zT_i - \zhatT_i\|^2\bigr] = \infty$ the bound holds trivially, so assume throughout that it is finite. Since $\zT_i = \Psi(x_i)$ is bounded ($\mathcal Q$ is finite by \Cref{asm:discrete}), this gives $\E\|\zhatT_i\|^2 < \infty$, and hence $\E\|\zT_i - \zhatT_i\| < \infty$ by Cauchy--Schwarz.

Fix an index $i \in \{1, \dots, n\}$ and, where $\Psi$ is random, work conditionally on its draw.

\medskip\noindent\textbf{Step 1: Rate bound.}
For any (possibly randomised) reconstruction $\zhatT_i = g(\theta, X^{-i})$, applying \Cref{lem:cmi_chain} at $\hat u_i=\zhatT_i$ gives
\begin{equation}\label{eq:step1_rate}
    I(x_i;\, \hat z^{\,\Psi}_i) \;\leq\; \gamma_i.
\end{equation}

\medskip\noindent\textbf{Step 2: Markov chain through the dithered representation.}
Recall $\tilde{z}^{\,\Psi}_i = \Psi(x_i) + W^{(m)}_i$ (\Cref{def:dither}). Conditionally on $x_i$, the variable $\tilde{z}^{\,\Psi}_i$ is a function of $W^{(m)}_i$ alone (working, where $\Psi$ is random, conditionally on its draw, which admissibility permits since that draw is independent of the data, of $\mathcal A$, and of the attack), while $\zhatT_i = g(\theta, X^{-i})$ is a function of $(\theta, X^{-i})$ and the attack's internal randomness. By \Cref{def:dither} the dither $W^{(m)}_i$ is drawn independently of all of these jointly, so $\zhatT_i \perp\!\!\!\perp \tilde{z}^{\,\Psi}_i \mid x_i$, i.e.\ $\tilde{z}^{\,\Psi}_i \to x_i \to \hat z^{\,\Psi}_i$ is a Markov chain, and the data-processing inequality \citep[Theorem~2.8.1]{cover2006elements} with \eqref{eq:step1_rate} gives
\begin{equation}\label{eq:step2_dpi}
    I(\tilde{z}^{\,\Psi}_i;\, \hat z^{\,\Psi}_i) \;\leq\; I(x_i;\, \hat z^{\,\Psi}_i)
    \;\leq\; \gamma_i.
\end{equation}
Now we have bounded the mutual information between the two terms whose expected MSE we would like to bound, we can use the distortion-rate bound from \eqref{eq:slb_multi},
\begin{equation}\label{eq:step2_rd}
    \E\!\left[\|\tilde{z}^{\,\Psi}_i - \hat z^{\,\Psi}_i\|^2\right]
    \;\geq\; D_{\tilde{z}^{\,\Psi}_i}(\gamma_i).
\end{equation}

\medskip\noindent\textbf{Step 3: Shannon lower bound.}
By \Cref{lem:dither_regularity}, $\tilde{z}^{\,\Psi}_i$ is absolutely continuous on $\R^m$ with finite differential entropy and finite second moment, so the SLB applies at rate $R = \gamma_i$:
\begin{equation}\label{eq:step3_slb}
    D_{\tilde{z}^{\,\Psi}_i}(\gamma_i) \;\geq\;
    \frac{m}{2\pi e}\, 2^{\,2(h(\tilde{z}^{\,\Psi}_i) - \gamma_i)/m}.
\end{equation}

\medskip\noindent\textbf{Step 4: Exact removal of the dither.}
Since $\tilde{z}^{\,\Psi}_i - \zhatT_i = (\zT_i - \zhatT_i) + W^{(m)}_i$, expanding the square gives
\[
    \E\bigl[\|\tilde{z}^{\,\Psi}_i - \hat z^{\,\Psi}_i\|^2\bigr]
    \;=\; \E\bigl[\|\zT_i - \zhatT_i\|^2\bigr]
    \;+\; 2\,\E\bigl[\langle \zT_i - \zhatT_i,\, W^{(m)}_i\rangle\bigr]
    \;+\; \E\bigl[\|W^{(m)}_i\|^2\bigr].
\]
By \Cref{def:dither} the dither $W^{(m)}_i$ is independent of $\zT_i - \zhatT_i$ and has zero mean. So, expanding coordinatewise gives $\E\langle \zT_i - \zhatT_i,\, W^{(m)}_i\rangle = \sum_{j=1}^m \E[(\zT_i - \zhatT_i)_j]\,\E[W^{(m)}_{i,j}] = 0$, the factorisation valid since $\E\|\zT_i - \zhatT_i\| < \infty$ by the assumption at the start of the proof. The last term is $m s^2$ exactly. Hence
\begin{equation}\label{eq:dither_identity}
    \E\bigl[\|\zT_i - \zhatT_i\|^2\bigr]
    \;=\; \E\bigl[\|\tilde{z}^{\,\Psi}_i - \hat z^{\,\Psi}_i\|^2\bigr] \;-\; m s^2 ,
\end{equation}
Substituting \eqref{eq:step2_rd} and \eqref{eq:step3_slb} into \eqref{eq:dither_identity}, and noting the left-hand side is non-negative, proves \eqref{eq:master_abs}.

\medskip\noindent\textbf{Ratio form.}
Assume $\tr\Sigma_\Psi > 0$, i.e.\ $\Psi(x_i)$ is not a.s.\ constant, so that $\rho_\Psi$ of \eqref{eq:rho_def} is defined. Dividing the inequality of Step 4 --- before the positive part is taken --- through by $\rmseprior(\Psi)^2 = \tr\Sigma_\Psi > 0$ and using $\nu_\Psi^2 = m s^2/\tr\Sigma_\Psi$ from \eqref{eq:rho_def} gives
\[
    \rho_\Psi(g)^2 \;\geq\; \frac{m}{2\pi e\,\tr\Sigma_\Psi}\, 2^{\,2(h(\tilde z^{\,\Psi}_i) - \gamma_i)/m} \;-\; \nu_\Psi^2 ,
\]
the left-hand side being $\rho_\Psi(g)^2$ by \eqref{eq:rho_def} and the reduction at the start of the proof. Since $\rho_\Psi(g)^2 \ge 0$, the two combine to give the positive part in \eqref{eq:master_rho}.

\end{proof}

\section{Proof of \Cref{cor:rp}}\label{app:representation}

\begin{proof}[Proof of \Cref{cor:rp}]
Fix an orthogonal matrix $U$. Fix a pair $(k,\nu_\Psi^2) \in\mathcal G$, where $k$ divides $m$, and let $s = \nu_\Psi\sqrt{\tr\Sigma_\Psi/k}$ be the corresponding dither scale.
 
\medskip\noindent\textbf{Step 1: Per-block bound.}
Given any reconstruction $\zhatT_i = g(\theta, X^{-i})\in\R^m$, the induced block estimate $\hat z^{\,\Pi_b\Psi}_i := \Pi_b\zhatT_i$ relies only on $(\theta, X^{-i})$ and the attack's internal randomness. The composite $\Pi_b\circ\Psi:\R^d\to\R^k$ is admissible in the sense of \Cref{def:dither}: $\Pi_b$ is deterministic, so the only randomness in $\Pi_b\circ\Psi$ is that of $\Psi$, which admissibility of $\Psi$ already places independent of $X$, of $\mathcal A$ and of the attack. \Cref{thm:master} therefore applies at the representation $\Pi_b\circ\Psi$ with $m$ replaced by $k$, at the same leakage $\gamma_i$:
\begin{equation}\label{eq:block_bound_fixed}
    \E\bigl[\|\Pi_b(\Psi(x_i) - \zhatT_i)\|^2\bigr] \;\geq\; \frac{k}{2\pi e}\, 2^{\,2(h(\tilde z^{\,\Pi_b\Psi}_i) - \gamma_i)/k} \;-\; k s^2 , \qquad \tilde z^{\,\Pi_b\Psi}_i = \Pi_b\Psi(x_i) + W^{(k)}_{i,b}.
\end{equation}
These are $N_k$ separate applications of the theorem, each block picking fresh i.i.d.\ dither, so no joint condition across the $W^{(k)}_{i,b}$ is required. They are combined below by summation alone.
 
\medskip\noindent\textbf{Step 2: Transfer to $\R^m$.}
The $P_b$ group the $m$ rows of $U$ into $N_k$ disjoint blocks of $k$, so by Pythagoras $\sum_b\|P_b u\|^2 = \|u\|^2$ for every $u \in \R^m$. Taking $u = Uv$ and using $\|Uv\| = \|v\|$, as $U$ is orthogonal,
\begin{equation}\label{eq:block_isometry}
    \frac{1}{N_k}\sum_{b=1}^{N_k}\|\Pi_b v\|^2 \;=\; \sum_{b=1}^{N_k}\|P_b Uv\|^2 \;=\; \|v\|^2 \qquad\text{for every } v \in \R^m ,
\end{equation}
the factor $N_k$ in $\Pi_b = \sqrt{N_k}\,P_b U$ being what turns the partition sum into an average. Applying \eqref{eq:block_isometry} at $v = \Psi(x_i) - \zhatT_i$ and taking expectations,
\begin{equation}\label{eq:ambient_identity}
    \E\bigl[\|\Psi(x_i) - \zhatT_i\|^2\bigr] \;=\; \frac{1}{N_k}\sum_{b=1}^{N_k}\E\bigl[\|\Pi_b(\Psi(x_i) - \zhatT_i)\|^2\bigr].
\end{equation}
 
\medskip\noindent\textbf{Step 3: Averaging the block bounds.}
Substituting \eqref{eq:block_bound_fixed} into \eqref{eq:ambient_identity} and pulling out the factors common to all blocks,
\[
    \E\bigl[\|\Psi(x_i) - \zhatT_i\|^2\bigr] \;\geq\; \frac{k}{2\pi e}\, 2^{-2\gamma_i/k}\, \frac{1}{N_k}\sum_{b=1}^{N_k} 2^{\,2h(\tilde z^{\,\Pi_b\Psi}_i)/k} \;-\; k s^2 \;\geq\; \frac{k}{2\pi e}\, 2^{\,2(\bar h_U(k) - \gamma_i)/k} \;-\; k s^2 ,
\]
the last step by Jensen on the convex map $t \mapsto 2^{2t/k}$ applied to the uniform average over blocks, $\bar h_U(k)$ by \eqref{eq:hbar_U}.
 
\medskip\noindent\textbf{Step 4: Ratio form.}
Dividing by $\rmseprior(\Psi)^2 = \tr\Sigma_\Psi$ and using $\nu_\Psi^2 = k s^2/\tr\Sigma_\Psi$ gives
\[
    \rho_\Psi(g)^2 \;\geq\; \frac{k}{2\pi e\,\tr\Sigma_\Psi}\, 2^{\,2(\bar h_U(k) - \gamma_i)/k} \;-\; \nu_\Psi^2 ,
\]
and since $\rho_\Psi(g)^2 \ge 0$ as well, the two combine to give the positive part in \eqref{eq:rp_min}. The pair $(k,\nu_\Psi^2)$ was arbitrary and is chosen by us, not the adversary: $g$ is fixed first, so every pair in $\mathcal G$ bounds the same $\rho_\Psi(g)^2$ and the largest of those bounds holds too, proving \eqref{eq:rp_max}.
\end{proof}

\begin{remark}[Choosing the rotation]\label{rem:choice_of_U}
Every orthogonal $U$ is valid; the choice affects only $\bar h_U(k)$, hence tightness. We write $\Psi=\mathrm{id}$; for general $\Psi$ replace $d,\Sigma$ by $m,\Sigma_\Psi$. By \eqref{eq:block_isometry} the block variances $v_b := \tr\Sigma_{\Pi_b}$ average to $\tr\Sigma$ for every $U$, and Gaussian maximum entropy \citep[Thm.~8.6.5]{cover2006elements} caps each block entropy by $f(v_b) := \tfrac k2\log_2\bigl(2\pi e\,(v_b+ks^2)/k\bigr)$. Since $f$ is concave, Jensen's inequality gives $N_k^{-1}\sum_b f(v_b) \le f\bigl(N_k^{-1}\sum_b v_b\bigr) = f(\tr\Sigma)$, with equality exactly when every $v_b = \tr\Sigma$. We take $U := d^{-1/2}H_dV^\top$. By the spectral theorem for real symmetric matrices, $\Sigma = V\Lambda V^\top$ with $V$ orthogonal and $\Lambda = \mathrm{diag}(\lambda_1,\dots,\lambda_d)$, so $V^\top x_i$ has uncorrelated coordinates with variances $\lambda_1,\dots,\lambda_d$. The Hadamard matrix $H_d$ satisfies $H_dH_d^\top = dI_d$ with every entry $\pm1$, so $d^{-1/2}H_d$ is orthogonal and mixes the uncorrelated coordinates with equal weight $1/d$, giving every output coordinate variance $\tr\Sigma/d$ and hence every block $v_b = \tr\Sigma$, for every $k\mid d$ at once. The equal-weight mixing also places each coordinate in the regime where projections of a high-dimensional source are near-Gaussian \citep{diaconis1984}, which the SLB rewards; we measure rather than assume this, through the saturation value of \Cref{rem:rho_le_one}. $V$ is computed on data held out from the estimation sample, as \Cref{prop:coverage} requires. In our experiments $V$ is the eigenbasis for the MNIST, CIFAR-10 and CelebA pixel bounds. For the ImageNette pixel bound ($|D_\perp|=1{,}000<d$) and for every LPIPS bound we take instead $V^\top=D$, a fixed random diagonal sign flip, which needs no data; the block variances then equal $\tr\Sigma$ only on average over the signs.

Hadamard matrices exist only at orders $1$, $2$ and multiples of $4$, and are known at every such order we use, by Sylvester's doubling, Paley's construction and Kronecker products of these (\Cref{app:exp_entropy}). If no Hadamard matrix of order $d$ is available we zero-pad to the nearest order at which one is: the padded coordinates are deterministically zero, so a bound for the padded representation transfers to the original, at a cost in tightness set by the padding fraction.
\end{remark}

\begin{remark}[The bound never exceeds the prior baseline]\label{rem:rho_le_one}
\Cref{cor:rp} holds for every reconstruction $g$, and the prior-mean guess is one with $\rho_\Psi(g) = 1$ by \eqref{eq:rho_def}, so $\ratiostar_\Psi \le 1$ for every $U$, $k$ and $s$: the bound never claims more protection than releasing nothing affords, and the same holds for the reported bound when it is normalised by an upper bound on $\tr\Sigma$, since $\bar h_{\mathrm{LB}}(k) \le \bar h_U(k)$ only lowers it. Equality at $\gamma = 0$ requires $2^{2\bar h_U(k)/k} = 2\pi e\,(\tr\Sigma + ks^2)/k$, which by Gaussian maximum entropy \citep[Thm.~8.6.5]{cover2006elements} holds only when every block is isotropic Gaussian with variance $\tr\Sigma$; the saturation value below $1$ in \Cref{fig:spectrum_headline,fig:spectrum_all} ($0.87$--$0.91$ in pixel space across the four datasets and $0.92$--$0.96$ in LPIPS space across the three, at $\gamma=0$) is what the source's departure from that ideal costs, together with the dither and the estimation margins.
\end{remark}

\section{Proofs for the a priori accountant}\label{app:accountant}

The proof of \Cref{thm:gc_canonical} has three ingredients, presented in the order they are used: the chain rule splits $\gamma$ over steps (\Cref{app:acc_chain}); each step is reduced to a Bernoulli-gated Gaussian channel (\Cref{app:acc_channel}); and the channel's leakage is bounded by the one-dimensional reduction of \citet{mironov2019renyi}. \Cref{app:acc_proof} assembles them into the final result.

\subsection{Trajectory chain rule}\label{app:acc_chain}

\begin{lemma}[Trajectory chain rule]\label{lem:traj_chain}
With data-independent initialisation, the per-sample CMI decomposes exactly as
\begin{equation}\label{eq:traj_chain}
    \gamma_i \;=\; \sum_{t=1}^T I(x_i;\,\theta_t\mid\mathcal H_{t-1}, X^{-i}).
\end{equation}
\end{lemma}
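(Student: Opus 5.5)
The plan is to apply the chain rule for mutual information \citep[Theorem~2.5.2]{cover2006elements} repeatedly to the trajectory $\theta_{0:T}$, with everything conditioned on $X^{-i}$, and then show that the $\theta_0$ term vanishes.

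\textbf{Step 1: chain rule.} With $\mathcal H_{t-1}=\theta_{0:t-1}$, the conditional chain rule gives
\[
    I(x_i;\theta_{0:T}\mid X^{-i})
    = I(x_i;\theta_0\mid X^{-i}) + \sum_{t=1}^T I(x_i;\theta_t\mid \theta_{0:t-1},X^{-i}).
\]
This follows by induction on $T$, using $I(A;B,C\mid D)=I(A;C\mid D)+I(A;B\mid C,D)$ with $B=\theta_T$, $C=\theta_{0:T-1}$ and $D=X^{-i}$. Each summand is exactly the corresponding term of \eqref{eq:traj_chain}.

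\textbf{Step 2: the initial term vanishes.} Data-independent initialisation means $\theta_0$ is drawn independently of $X=(x_i,X^{-i})$. Hence the conditional law of $\theta_0$ given $(x_i,X^{-i})$ does not depend on $(x_i,X^{-i})$, and $I(x_i;\theta_0\mid X^{-i})=0$. (Even $I(x_i,X^{-i};\theta_0)=0$ holds, and this dominates the conditional term.)

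\textbf{Step 3: finiteness.} The only point needing care is that every quantity is finite, or at least well defined, so the chain-rule identity is not of the form $\infty-\infty$. Since $x_i$ takes finitely many values under \Cref{asm:discrete}, each conditional mutual information is bounded by $H(x_i)\le d\,b_x<\infty$ bits. The chain rule therefore holds as an identity between finite nonnegative numbers for general (continuous) $\theta_t$. I would justify this through the general definition of conditional mutual information via regular conditional distributions: a discrete $x_i$ guarantees these exist, and $I(x_i;\cdot\mid\cdot)=H(x_i\mid\cdot)-H(x_i\mid\cdot,\cdot)$ with finite conditional entropies. I do not expect any further obstacle, since the decomposition is exact and uses no property of the DP-SGD update.
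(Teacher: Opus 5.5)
Your proof is correct and is essentially the paper's argument. Like the paper, you apply the chain rule one iterate at a time conditional on $X^{-i}$, and the $\theta_0$ term vanishes because $\theta_0$ is drawn independently of $X$; your Step~3 adds care the paper omits, which is harmless but not needed, since the chain rule already holds as a sum of nonnegative terms in $[0,\infty]$ and so no $\infty-\infty$ issue arises even without the bound $H(x_i)\le d\,b_x$.
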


\begin{proof}
Applying the chain rule for mutual information to $\theta = \theta_{0:T}$ one iterate at a time,
\[
    \gamma_i \;=\; I(x_i;\,\theta_{0:T}\mid X^{-i}) \;=\; I(x_i;\,\theta_0\mid X^{-i}) \;+\; \sum_{t=1}^T I(x_i;\,\theta_t\mid \mathcal H_{t-1}, X^{-i}).
\]
The first term vanishes because $\theta_0$ is drawn independently of $X$, hence of $x_i$ given $X^{-i}$.
\end{proof}

\subsection{Per-step channel reduction}\label{app:acc_channel}

\begin{lemma}[Per-step channel reduction]\label{lem:per_step_channel}
Fix a step $t$ with $\eta_t\neq0$ and let $\Omega := (\mathcal H_{t-1},\, X^{-i},\, B_t\setminus\{i\})$ collect the past, the other records, and the other minibatch indices. Let $G := \bar g_t(x_i)$ be the target's clipped gradient and
\begin{equation}\label{eq:gated_channel}
    Y := \mathbf 1_t\, G + \sigma_t\xi_t, \qquad \mathbf 1_t := \mathbf 1[i\in B_t]\sim\mathrm{Bernoulli}(q_t),\ \xi_t \sim \mathcal N(0, I_p),
\end{equation}
the \emph{Bernoulli-gated Gaussian channel} with input $G$. Then
\begin{equation}\label{eq:per_step_bound}
    I(x_i;\,\theta_t\mid \mathcal H_{t-1}, X^{-i}) \;\le\; I(x_i;\,\theta_t\mid\Omega) \;=\; I(G;\,Y\mid\Omega).
\end{equation}
\end{lemma}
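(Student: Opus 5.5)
The inequality and the equality need separate arguments; both work conditionally on the past, the other records, and the other minibatch memberships.

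\emph{The inequality.} The extra conditioning variable is $B_t\setminus\{i\}$, the set of memberships $\mathbf 1[j\in B_t]$ for $j\neq i$. Under Poisson subsampling these are drawn independently of everything else: of $x_i$, of $X^{-i}$, of $\mathcal H_{t-1}$, and of $\mathbf 1_t$. Call this variable $S$. The chain rule can then be applied in two ways:
\[
I(x_i;\theta_t,S\mid \mathcal H_{t-1},X^{-i}) = I(x_i;S\mid \mathcal H_{t-1},X^{-i}) + I(x_i;\theta_t\mid\Omega),
\]
and
\[
I(x_i;\theta_t,S\mid \mathcal H_{t-1},X^{-i}) = I(x_i;\theta_t\mid \mathcal H_{t-1},X^{-i}) + I(x_i;S\mid \theta_t,\mathcal H_{t-1},X^{-i}).
\]
In the first expansion, $I(x_i;S\mid \mathcal H_{t-1},X^{-i})=0$. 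This holds because $S$ is independent of $(x_i,\mathcal H_{t-1},X^{-i})$ jointly: $\mathcal H_{t-1}$ depends only on earlier minibatches and noise. Since the last term of the second expansion is nonnegative, comparing the two gives $I(x_i;\theta_t\mid \mathcal H_{t-1},X^{-i})\le I(x_i;\theta_t\mid\Omega)$.

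\emph{The equality.} From \eqref{eq:dpsgd_update} with $C=1$, conditionally on $\Omega$ we have
\[
\theta_t=\theta_{t-1}-\frac{\eta_t}{b_t}\Bigl(\sum_{j\in B_t\setminus\{i\}}\bar g_t(x_j)+\mathbf 1_t G+\sigma_t\xi_t\Bigr).
\]
Here $\theta_{t-1}$ and the sum over $j\neq i$ are $\Omega$-measurable, and $\eta_t\neq0$. So $\theta_t$ is an invertible affine function of $Y$ whose coefficients are $\Omega$-measurable. Mutual information is invariant under such bijections given $\Omega$, hence $I(x_i;\theta_t\mid\Omega)=I(x_i;Y\mid\Omega)$.

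It remains to replace $x_i$ by $G$. Given $\Omega$, the map $G=\bar g_t(x_i)$ is a deterministic function of $x_i$, because $\theta_{t-1}\in\Omega$. The noise pair $(\mathbf 1_t,\xi_t)$ is independent of $(x_i,\Omega)$. So $x_i\to G\to Y$ is a Markov chain conditionally on $\Omega$, which gives $I(x_i;Y\mid\Omega)\le I(G;Y\mid\Omega)$. Since $G$ is a function of $x_i$ given $\Omega$, we also have $I(G;Y\mid\Omega)\le I(x_i;Y\mid\Omega)$, and the two quantities are equal.

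\emph{Main obstacle.} The delicate step is the independence bookkeeping: showing that $B_t\setminus\{i\}$ and $\mathbf 1_t$ are independent of $(x_i,\mathcal H_{t-1},X^{-i})$ and of $\xi_t$. I would verify this by writing the joint law of the DP-SGD run step by step, with fresh Bernoulli and Gaussian draws at each step. The remaining measure-theoretic points (finite $\mathcal Q$, so all conditional mutual informations are well defined) are routine.
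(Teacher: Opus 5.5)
Your proof is correct and follows essentially the same route as the paper: the inequality comes from the two-way chain-rule expansion with $I(x_i;B_t\setminus\{i\}\mid\mathcal H_{t-1},X^{-i})=0$, and the equality comes from the $\Omega$-measurable invertible relation between $\theta_t$ and $Y$, followed by replacing $x_i$ with the function $G$. The differences are cosmetic: the paper derives the bijection step and the $x_i\to G$ replacement from two-way chain-rule expansions (the latter via $I(x_i;Y\mid G,\Omega)=0$), whereas you cite bijection invariance and data processing in both directions.
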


\begin{proof}
\emph{Equality.} Given $\Omega$, the iterate $\theta_{t-1}$ and the other members' gradient sum $S := \sum_{j\in B_t\setminus\{i\}}\bar g_t(x_j)$ are determined, so the update of \eqref{eq:dpsgd_update} reads
\[
    \theta_t \;=\; \theta_{t-1} - \tfrac{\eta_t}{b_t}\bigl(Y + S\bigr),
    \qquad\text{equivalently}\qquad
    Y \;=\; -\tfrac{b_t}{\eta_t}\bigl(\theta_t - \theta_{t-1}\bigr) - S ,
\]
so $\theta_t$ is a function of $(Y,\Omega)$ and $Y$ a function of $(\theta_t,\Omega)$. Expanding $I(x_i;\,Y,\theta_t\mid\Omega)$ by the chain rule \citep[Thm.~2.5.2]{cover2006elements} in both orders,
\[
    I(x_i;\,\theta_t\mid\Omega) + \underbrace{I(x_i;\,Y\mid\theta_t,\Omega)}_{=\,0}
    \;=\; I(x_i;\,Y\mid\Omega) + \underbrace{I(x_i;\,\theta_t\mid Y,\Omega)}_{=\,0},
\]
each conditional term vanishing because the variable it measures is determined by its conditioning. Hence $I(x_i;\theta_t\mid\Omega) = I(x_i;Y\mid\Omega)$. Next, $G = \bar g_t(x_i)$ is a function of $(x_i,\Omega)$, the gradient being evaluated at $\theta_{t-1}$, so
\[
    I(x_i;\,Y\mid\Omega) \;=\; I(x_i, G;\,Y\mid\Omega) \;=\; I(G;\,Y\mid\Omega) + I(x_i;\,Y\mid G,\Omega),
\]
and the last term vanishes: $Y = \mathbf 1_t G + \sigma_t\xi_t$ with $(\mathbf 1_t,\xi_t)$ drawn independently of $(x_i,\Omega)$, so given $(G,\Omega)$ it is independent of $x_i$.

\emph{Inequality.} Poisson subsampling draws the inclusion indicators independently of the data and the past, so $B_t\setminus\{i\} \perp x_i \mid (\mathcal H_{t-1}, X^{-i})$. Expanding $I(x_i;\,\theta_t, B_t\setminus\{i\} \mid \mathcal H_{t-1}, X^{-i})$ by the chain rule in both orders,
\begin{align*}
    &\underbrace{I(x_i;\,B_t\setminus\{i\}\mid \mathcal H_{t-1}, X^{-i})}_{=\,0} + I(x_i;\,\theta_t\mid\Omega) \\
    &= I(x_i;\,\theta_t\mid \mathcal H_{t-1}, X^{-i}) + \underbrace{I(x_i;\,B_t\setminus\{i\}\mid \theta_t, \mathcal H_{t-1}, X^{-i})}_{\ge\,0},
\end{align*}
which is the inequality in \eqref{eq:per_step_bound}.
\end{proof}

\subsection{Proof of \Cref{thm:gc_canonical}}\label{app:acc_proof}
\begin{proof}
By \Cref{lem:traj_chain,lem:per_step_channel},
\[
    \gamma \;=\; \sum_{t=1}^T I(x_i;\,\theta_t\mid\mathcal H_{t-1}, X^{-i}) \;\le\; \sum_{t=1}^T I(G_t;\,Y_t\mid\Omega_t),
\]
so it suffices to show $I(G_t;Y_t\mid\Omega_t) \le K_t^{\mathrm{PRV}}$ at each step, with $K_t^{\mathrm{PRV}}$ as defined in \eqref{eq:Kprv_def}. Fix $t$ and drop it from the subscripts of $G$, $Y$ and $\Omega$.

\emph{Per-input output distribution.} Recall $P_{Y\mid z} = (1-q_t)P_0 + q_tP_z$ from \eqref{eq:Kprv_def}. Conditionally on $G = z$ and $\Omega = \omega$, the only randomness remaining in $Y = \mathbf 1_t G + \sigma_t\xi_t$ is the gate and the noise, which are independent of $(x_i, \Omega)$ --- Poisson subsampling draws $\mathbf 1_t$ independently of the data, the past and the other indices, and $\xi_t$ is fresh --- so
\[
    P_{Y\mid G=z,\,\Omega=\omega} \;=\; P_{Y\mid z} \qquad\text{for every } z \text{ and } \omega:
\]
with probability $1-q_t$ the target is excluded and $Y$ is pure noise, $P_0$; with probability $q_t$ it is included and $Y$ is noise about the clipped gradient, $P_z$. The input $G$ is a function of $(x_i,\Omega)$ and so is correlated with $\Omega$, but the channel from $G$ to $Y$ is not: only the input distribution varies with $\omega$. In particular $P_0$ is the distribution of $Y$ when the target is not sampled.

\emph{Dropping the conditioning.} Since $P_{Y\mid G,\Omega} = P_{Y\mid G}$, we have $Y \perp \Omega \mid G$, so expanding $I(G,\Omega;\,Y)$ by the chain rule in both orders,
\[
    I(G;\,Y) + \underbrace{I(\Omega;\,Y\mid G)}_{=\,0} \;=\; I(G;\,Y\mid\Omega) + \underbrace{I(\Omega;\,Y)}_{\ge\,0},
\]
hence $I(G;Y\mid\Omega) \le I(G;Y)$: the right-hand side is the mutual information of the channel \eqref{eq:gated_channel} at input distribution $P_G$, the marginal distribution of the target's clipped gradient, and conditioning on $\Omega$ only removes what $Y$ already reveals about $\Omega$.

\emph{Golden formula.} For distributions $P_X$, $P_{Y\mid X}$, $Q_Y$, write $\KL(P_{Y\mid X}\,\|\,Q_Y \mid P_X) := \E_{x\sim P_X}\bigl[\KL(P_{Y\mid X=x}\,\|\,Q_Y)\bigr]$ for the conditional divergence. By the golden formula \citep[Thm.~4.1]{polyanskiy2024information}, for any $Q_Y$,
\begin{equation}\label{eq:golden}
    \KL(P_{Y\mid G}\,\|\,Q_Y \mid P_G) \;=\; I(G;\,Y) \;+\; \KL(P_Y\,\|\,Q_Y) \;\ge\; I(G;\,Y).
\end{equation}

\emph{Leave-one-out reference.} Take $Q_Y = P_0$ in \eqref{eq:golden}. Unpacking the conditional divergence by its definition and using $P_{Y\mid G=z}=P_{Y\mid z}$,
\begin{equation}\label{eq:loo_ref}
    I(G;\,Y) \;\le\; \KL(P_{Y\mid G}\,\|\,P_0 \mid P_G) \;=\; \E_{z\sim P_G}\bigl[\KL(P_{Y\mid z}\,\|\,P_0)\bigr] \;\le\; \sup_{\|z\|\le 1}\KL(P_{Y\mid z}\,\|\,P_0),
\end{equation}
the last step because every value $G$ can take has norm at most one (\eqref{eq:dpsgd_update}).

\emph{Worst case over clipped inputs.} Write $\mathcal M$ for the Poisson-subsampled Gaussian mechanism at rate $q_t$, noise $\sigma_t$ and unit sensitivity. \citet[Thm.~4]{mironov2019renyi} show that for every dataset $S$, every record $x$ with $\|\bar g_t(x)\|\le1$, and every order $\alpha\ge1$,
\[
    D_\alpha\bigl(\mathcal M(S\cup\{x\})\,\big\|\,\mathcal M(S)\bigr) \;\le\; D_\alpha\bigl(P^{(1)}_t\,\big\|\,Q^{(1)}_t\bigr),
\]
the one-dimensional pair of \eqref{eq:Kprv_def}.\footnote{Theorem~4 of \citet{mironov2019renyi} bounds the R\'enyi divergence in both directions, $D_\alpha(\mathcal M(S)\,\|\,\mathcal M(S\cup\{x\}))$ and $D_\alpha(\mathcal M(S\cup\{x\})\,\|\,\mathcal M(S))$; we need only the latter. The theorem is stated for R\'enyi order $\alpha$, and the KL divergence is the case $\alpha=1$, which their Definition~1 includes by taking the limit $\alpha\to1$; nothing in their proof requires $\alpha>1$.} Take $S=\emptyset$ and $x$ with clipped gradient $z$: $\mathcal M(\emptyset)$ is pure noise, $P_0$, and $\mathcal M(\{x\})$ includes the record with probability $q_t$, giving $P_{Y\mid z}$. The other records of the actual training set do not appear because \Cref{lem:per_step_channel} has already conditioned them away through $\Omega$. At $\alpha=1$ the right-hand side is $K_t^{\mathrm{PRV}}$, so the supremum in \eqref{eq:loo_ref} is at most $K_t^{\mathrm{PRV}}$, hence
\begin{equation}\label{eq:prv_step}
    I(G;\,Y\mid\Omega) \;\le\; I(G;\,Y) \;\le\; K_t^{\mathrm{PRV}},
\end{equation}
with both sides in nats; dividing by $\ln 2$ to convert to bits and summing over $t$ completes the proof.

\end{proof}

\section{Details on the representation entropy confidence bound}\label{app:entropy}

\Cref{app:why} sets out what the bound needs from an entropy estimator and why the plug-in is the only candidate that supplies it; \Cref{app:bdd_diff,app:margins} then verify each requirement in turn, and \Cref{app:coverage} chains them into \Cref{prop:coverage}.

\subsection{What the bound needs from the entropy estimator}\label{app:why}

\Cref{cor:rp} consumes $\bar h_U(k)$ through the exponent $2^{\,2(\bar h_U(k)-\gamma)/k}$, maximised over a pair $(k,\nu^2)$ chosen from $\mathcal G$ after the data is seen. We supply it through a lower confidence bound $\bar h_{\mathrm{LB}}(k)\le\bar h_U(k)$, and the bound pays for the shortfall $\bar h_U(k)-\bar h_{\mathrm{LB}}(k)$ exactly as it pays for leakage, since a shortfall of $t$ bits costs the bounded ratio a factor $2^{-2t/k}$. The shortfall has two parts, whatever the point estimate is biased by and whatever is subtracted from it to make the bound hold at the stated confidence. What the bound needs is that both be controlled in a single one-sided, finite-sample statement, namely that $\bar h_{\mathrm{LB}}(k)\le\bar h_U(k)$ for every $(k,\nu^2)\in\mathcal G$ with probability at least $1-\alpha$. Only the lower side is used, and the union bound over $\mathcal G$ in \Cref{prop:coverage} needs a per-pair tail probability at the $n$ we have rather than in the limit.

Most entropy estimators do not supply this. Nearest-neighbour estimators \citep{kozachenko1987sample,berrett2019efficient} have a bias depending on the unknown density, so no finite-sample statement is available in either direction, and variational estimators of the mutual information into which $h(\tilde z^{\,\Pi_b})$ decomposes (\Cref{lem:dither_regularity}(iii)) are lower bounds in expectation only. The finite-sample error bounds that do exist for the smoothed setting, the risk bound of \citet[Thm.~2]{goldfeld2020convergence} and the confidence sequence of \citet[Cor.~21]{manole2023martingale}, are two-sided, and a two-sided bound must control the estimator's bias, which costs a prefactor exponential in the dimension and makes the bound vacuous at our sample sizes. We need only a lower bound, and the plug-in's bias is downward, so it needs no control at all; what remains is the fluctuation, which \Cref{lem:t_md} bounds at a constant that does not grow with the dimension.

The plug-in of \citet{goldfeld2020convergence} meets the requirement. Its error about $\bar h_U(k)$ splits into a bias and a fluctuation. The bias is downward at every $n$ \citep[eq.~(70)]{goldfeld2020convergence}, so it needs no margin, and this is what places the plug-in outside the two-sided barrier above, since it is one-sided by construction. The fluctuation is a function of the $n$ independent samples, the rotation and dither scale having been fixed on disjoint data (\Cref{app:data_handling}), and changing one sample moves it by at most $H_b(1/n)$ regardless of $k$ (\Cref{cor:bdd_diff_avg}), so McDiarmid gives a finite-sample margin of $\tilde O(n^{-1/2})$ bits carrying no $k$ (\Cref{lem:t_md}). The Monte Carlo integrator we evaluate it with has known conditional mean and variance given the sample (\Cref{lem:mc_moments}), so its error is given a margin the same way (\Cref{lem:t_mc}). It is capped, however: $\hat h_n\le h(W^{(k)})+\log_2n$ by \Cref{lem:mixture_sandwich}.

\subsection{Bounded differences for the plug-in}\label{app:bdd_diff}

\citet{goldfeld2020convergence} do not provide a lower confidence bound, they bound $\E[\hat h_n]$ and have no reason to control its fluctuation about that mean. Here we derive the constant needed for McDiarmid's inequality. It rests on a standard result about mixtures which we will re-derive here first for the reader's benefit.

\begin{lemma}[Entropy of a finite mixture]\label{lem:mixture_sandwich}
Let $f_1,\dots,f_r$ be densities on $\R^k$ with finite differential entropy, and let $\lambda$ be a probability vector on $\{1,\dots,r\}$. Then
\begin{equation}\label{eq:mixture_sandwich}
    \sum_{i=1}^r \lambda_i\, h(f_i) \;\le\; h\Bigl(\sum_{i=1}^r\lambda_i f_i\Bigr) \;\le\; \sum_{i=1}^r \lambda_i\, h(f_i) + H(\lambda),
\end{equation}
with $H(\lambda) := -\sum_i\lambda_i\log_2\lambda_i$ the Shannon entropy of the weights.
\end{lemma}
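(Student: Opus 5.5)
We prove the two inequalities in \eqref{eq:mixture_sandwich} by representing the mixture as the marginal of a joint law with a discrete label. Let $J$ be a random index with $\Pr(J=i)=\lambda_i$. Conditionally on $J=i$, let $Z$ have density $f_i$. Then $Z$ has density $f:=\sum_i\lambda_i f_i$, and the conditional differential entropy is $h(Z\mid J)=\sum_i\lambda_i h(f_i)$, which is finite by hypothesis.

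\textbf{Step 1: finiteness of $h(f)$.} We first check that $h(f)$ is finite, so that the identities below are not of the form $\infty-\infty$. The lower side will follow from Step 2 once we know the expression is well defined. For the upper side, we write $-\log_2 f\le -\log_2(\lambda_i f_i)$ on the support of $f_i$ whenever $\lambda_i>0$, because $f\ge\lambda_i f_i$. Integrating against $\lambda_i f_i$ and summing over $i$ gives
\[
-\int f\log_2 f\le \sum_i\lambda_i h(f_i)+H(\lambda).
\]
This is already the right-hand inequality, and it also shows $h(f)<\infty$, because the positive part of $-f\log_2 f$ is dominated in the same way.

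\textbf{Step 2: lower bound.} With $h(f)$ finite, we have the identity
\[
h(Z)-h(Z\mid J)=I(Z;J)\ge0 .
\]
To avoid relying on the continuous chain rule, we verify it directly:
\[
I(Z;J)=\sum_i\lambda_i\,\KL(f_i\,\|\,f)\ge 0,
\]
where each divergence is well defined because $f_i\le f/\lambda_i$. Expanding the integrand $\log(f_i/f)$ gives
\[
\sum_i\lambda_i\KL(f_i\|f)=-\sum_i\lambda_i h(f_i)+h(f),
\]
and this splitting is legitimate because both pieces are finite. Nonnegativity of KL then yields the left inequality.

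\textbf{Step 3: upper bound, alternative route.} The same quantity satisfies $I(Z;J)\le H(J)=H(\lambda)$. Indeed, $\KL(f_i\|f)=\int f_i\log_2(f_i/f)$, and $f_i/f\le1/\lambda_i$, so $\KL(f_i\|f)\le\log_2(1/\lambda_i)$. Averaging over $i$ with weights $\lambda_i$ gives $I(Z;J)\le H(\lambda)$, which is the right inequality again. This is the same pointwise bound as in Step 1.

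\textbf{Main obstacle.} The only delicate point is integrability: the interchange of sums and integrals, and the splitting of logarithms, when the $f_i$ are general densities. We handle this with the pointwise bounds $\lambda_i f_i\le f$. Terms with $\lambda_i=0$ are dropped throughout. In our application the $f_i$ are Gaussians, for which all of this is immediate.
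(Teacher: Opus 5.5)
Your proof is correct and follows the same route as the paper's: a latent label $J\sim\lambda$, the decomposition $h(Z)=h(Z\mid J)+I(Z;J)$, and $0\le I(Z;J)\le H(\lambda)$. The paper cites conditioning-reduces-entropy and $I\le H$ directly, whereas you verify both through $0\le\KL(f_i\,\|\,f)\le\log_2(1/\lambda_i)$ and also check that $h(f)$ is finite (Step 1 gives only $h(f)<\infty$; $h(f)>-\infty$ then comes from Step 2, whose splitting needs only one of the two pieces to be finite).
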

\begin{proof}
Let $A\sim\lambda$ and, given $A=i$, let $B\sim f_i$, so that $B$ has the mixture density and $h(B\mid A) = \sum_i\lambda_i h(f_i)$. The left inequality is $h(B\mid A)\le h(B)$, conditioning reducing entropy \citep[Thm.~8.6.1]{cover2006elements}. For the right, $h(B) = h(B\mid A) + I(B;A)$ and $I(B;A)\le H(A) = H(\lambda)$. That is, knowing which component a draw came from saves at most $H(\lambda)$ bits.
\end{proof}

\begin{lemma}[Bounded differences under one relocation]\label{lem:bdd_diff}
Fix $s>0$ and $\Pi:\R^d\to\R^k$. Let $x_{1:n}$ and $x'_{1:n}$ be two samples that agree except at one index $j$, and let $\hat f_n$ and $\hat f'_n$ be the corresponding mixtures \eqref{eq:plugin_def}: each is the equal-weight average of the $n$ Gaussians $\phi_s(\cdot-\Pi x_i)$, respectively $\phi_s(\cdot-\Pi x'_i)$. Then
\[
    \bigl|h(\hat f_n) - h(\hat f'_n)\bigr| \;\le\; H_b(1/n),
\]
with $H_b(p) := -p\log_2 p - (1-p)\log_2(1-p)$ the binary entropy in bits.
\end{lemma}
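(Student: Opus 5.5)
Write the shared part as $g := \frac1n\sum_{i\neq j}\phi_s(\cdot-\Pi x_i)$, normalised to $\bar g := \frac{n}{n-1}g$. Then the two mixtures are
\[
    \hat f_n = \tfrac{n-1}{n}\,\bar g + \tfrac1n\,\phi_s(\cdot-\Pi x_j),
    \qquad
    \hat f'_n = \tfrac{n-1}{n}\,\bar g + \tfrac1n\,\phi_s(\cdot-\Pi x'_j).
\]
Each is a two-component mixture with weights $\lambda=(1-1/n,1/n)$, so $H(\lambda)=H_b(1/n)$. The plan is to apply \Cref{lem:mixture_sandwich} to each of the two mixtures. The two-sided bounds share the common term $\frac{n-1}{n}h(\bar g)$, and the single relocated components have equal entropy.

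The first step is to check the hypotheses of \Cref{lem:mixture_sandwich}. The density $\bar g$ is a finite Gaussian mixture, so it has finite differential entropy by the same argument as \Cref{lem:dither_regularity}(iii): it is bounded above by $(2\pi s^2)^{-k/2}$ and has finite second moment. The single Gaussians have $h(\phi_s(\cdot-\Pi x_j)) = h(\phi_s(\cdot-\Pi x'_j)) = \tfrac k2\log_2(2\pi e s^2) =: h_W$ by translation invariance. If $n=1$ the two entropies are both $h_W$ and the claim is trivial, so assume $n\ge2$.

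The second step combines the sandwiches. \Cref{lem:mixture_sandwich} gives
\[
    A \le h(\hat f_n) \le A + H_b(1/n),
    \qquad
    A \le h(\hat f'_n) \le A + H_b(1/n),
\]
where $A := \tfrac{n-1}{n}h(\bar g)+\tfrac1n h_W$ is the same for both samples. Both entropies therefore lie in the interval $[A, A+H_b(1/n)]$. Their difference is then at most the length of that interval, $H_b(1/n)$ in absolute value.

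The only delicate point is the finiteness and regularity needed to invoke the lemma and to manipulate entropies as real numbers. It is dispatched by the Gaussian-mixture boundedness and the second-moment argument above. No further obstacle is expected, and the constant is independent of $k$ and $\Pi$ as claimed.
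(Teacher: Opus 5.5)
Your proposal is correct and follows the paper's proof essentially step for step. You write both mixtures as two-component mixtures with the same first component and weights $(1-1/n,1/n)$, use that the relocated Gaussians have equal entropy, and apply \Cref{lem:mixture_sandwich} to place both entropies in a common interval of length $H_b(1/n)$. Your explicit handling of $n=1$ and of the finiteness of $h(\bar g)$ is extra care that the paper leaves implicit.
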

\begin{proof}
Let $g := \tfrac{1}{n-1}\sum_{i\ne j}\phi_s(\cdot - \Pi x_i)$ be the equal-weight mixture of the $n-1$ Gaussians the two share. Then
\[
    \hat f_n = \tfrac{n-1}{n}\,g + \tfrac1n\,\phi_s(\cdot - \Pi x_j), \qquad \hat f'_n = \tfrac{n-1}{n}\,g + \tfrac1n\,\phi_s(\cdot - \Pi x'_j),
\]
two-component mixtures with the same first component and the same weights $\lambda=(\tfrac{n-1}{n},\tfrac1n)$. Since a Gaussian's entropy depends only on its covariance, $h(\phi_s(\cdot-\Pi x_j)) = h(\phi_s(\cdot-\Pi x'_j)) = h(W^{(k)})$, so the weighted component entropies coincide for the two mixtures, $\sum_i\lambda_i h(f_i) = \tfrac{n-1}{n}h(g) + \tfrac1n h(W^{(k)}) =: L$ in both cases. \Cref{lem:mixture_sandwich} then places both $h(\hat f_n)$ and $h(\hat f'_n)$ in $[L,\,L+H(\lambda)]$, and $H(\lambda)=H_b(1/n)$.
\end{proof}

\begin{corollary}[Bounded differences for the block average]\label{cor:bdd_diff_avg}
Fix $s>0$ and a finite family $\Pi_1,\dots,\Pi_{N_k}$ of maps $\R^d\to\R^k$, all independent of $x_{1:n}$. The map $x_{1:n}\mapsto\hat h_n$ of \eqref{eq:plugin_def} satisfies the bounded-differences condition with constant $H_b(1/n)$ in each coordinate: for every $j$ and every $x_j' \in \mathcal Q$,
\[
    \bigl|\hat h_n(x_1,\dots,x_j,\dots,x_n) - \hat h_n(x_1,\dots,x_j',\dots,x_n)\bigr| \;\le\; H_b(1/n).
\]
The constant does not depend on $k$, $N_k$, $s$, or the $\Pi_b$.
\end{corollary}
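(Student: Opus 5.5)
The plan is to apply \Cref{lem:bdd_diff} block by block and then average. Fix an index $j$ and two samples $x_{1:n}$ and $x'_{1:n}$ that differ only at $j$, with $x'_j\in\mathcal Q$. Because the maps $\Pi_1,\dots,\Pi_{N_k}$ (here standing for the composites $\Pi_b\circ\Psi$) and the dither scale $s$ are fixed independently of the sample, each block plug-in $\hat h_n(\Pi_b)$ is a deterministic function of $x_{1:n}$. This function has exactly the form treated in \Cref{lem:bdd_diff}: the entropy of an equal-weight mixture of $n$ Gaussians $\phi_s(\cdot-\Pi_b\Psi(x_i))$ in $\R^k$. That lemma only asks that $\Pi$ be some fixed map into $\R^k$; linearity plays no role. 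So it applies with $\Pi$ replaced by $\Pi_b\circ\Psi$, and yields for every $b$
\[
\bigl|\hat h_n(\Pi_b)(x_{1:n})-\hat h_n(\Pi_b)(x'_{1:n})\bigr|\le H_b(1/n).
\]

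Next, combine the blocks. By \eqref{eq:plugin_def}, $\hat h_n$ is the uniform average of the block plug-ins, so the triangle inequality gives
\[
\bigl|\hat h_n(x_{1:n})-\hat h_n(x'_{1:n})\bigr|\le\frac1{N_k}\sum_{b=1}^{N_k}\bigl|\hat h_n(\Pi_b)(x_{1:n})-\hat h_n(\Pi_b)(x'_{1:n})\bigr|\le H_b(1/n).
\]
The per-block constant from \Cref{lem:bdd_diff} depends only on the mixture weights $(\tfrac{n-1}{n},\tfrac1n)$. It does not involve $k$, $s$ or $\Pi_b$, and averaging does not change it, so the final constant does not depend on $N_k$ either. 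This gives the claimed independence.

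The one point that needs care, and the place I expect the main difficulty, is making sure \Cref{lem:bdd_diff} really applies: every block entropy must be finite, and the component entropies must coincide so that the two-sided sandwich of \Cref{lem:mixture_sandwich} can be used. Each block mixture is a finite Gaussian mixture with covariance $s^2I_k$ per component. By the argument of \Cref{lem:dither_regularity}(iii), its density is bounded by $(2\pi s^2)^{-k/2}$ and its second moment is finite, so the entropy is finite. The shared sub-mixture $g$ used in \Cref{lem:bdd_diff} is also a finite Gaussian mixture, so $h(g)$ is finite. Finally, if $\Psi$ is random, we work conditionally on its draw, which is independent of the sample by admissibility. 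No other step requires more than these observations.
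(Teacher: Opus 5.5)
Your proposal is correct and follows the paper's proof: apply \Cref{lem:bdd_diff} to each block plug-in $\hat h_n(\Pi_b)$, since changing $x_j$ relocates exactly one centre of every block mixture, and then use the fact that a uniform average of quantities each moving by at most $H_b(1/n)$ moves by at most $H_b(1/n)$. Your extra checks are sound additions that the paper leaves implicit: that the lemma needs no linearity, so it applies to $\Pi_b\circ\Psi$, and that every mixture entropy is finite, so \Cref{lem:mixture_sandwich} applies.
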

\begin{proof}
Changing $x_j$ relocates exactly one centre of every $\hat f_n(\cdot\,;\Pi_b)$, so by \Cref{lem:bdd_diff} each $\hat h_n(\Pi_b)$ moves by at most $H_b(1/n)$, and an average of quantities each moving by at most $c$ moves by at most $c$.
\end{proof}

\subsection{The two margins}\label{app:margins}

Fix $(k,\nu^2)\in\mathcal G$, which fixes $s=\nu\sqrt{\tr\Sigma/k}$. Establishing \eqref{eq:h_certified} involves three quantities: the target $\bar h_U(k)$, a fixed property of the source; the plug-in $\hat h_n$ of \eqref{eq:plugin_def}, the exact block-averaged entropy of the empirical mixtures, which is random through the sample $x_{1:n}$ and has no closed form; and $\hat h_{n,\mathrm{MC}}$, the Monte Carlo approximation to $\hat h_n$ that we actually compute, which is random through the sample and through the Monte Carlo noise. Writing
\begin{equation}\label{eq:error_decomp}
    \hat h_{n,\mathrm{MC}} - \bar h_U(k) \;=\; \bigl(\hat h_{n,\mathrm{MC}} - \hat h_n\bigr) + \bigl(\hat h_n - \E[\hat h_n]\bigr) + \bigl(\E[\hat h_n] - \bar h_U(k)\bigr),
\end{equation}
the third term is a deterministic quantity and is non-positive for every $n$: it is the downward bias of the plug-in \citep[eq.~(70)]{goldfeld2020convergence}, averaged over blocks. The first two are random and each is given a margin, $t_{\mathrm{MC}}$ and $t_{\mathrm{MD}}$, that it exceeds with probability at most $\alpha_1/|\mathcal G|$ and $\alpha_2/|\mathcal G|$ respectively, where $\alpha_1+\alpha_2=\alpha$. The two are separated by conditioning on the sample: with $x_{1:n}$ fixed, $\hat h_n$ is a constant and the first term is pure integration error, whose conditional mean and variance are known (\Cref{lem:mc_moments}). Since a conditional probability bounded by $\alpha_1/|\mathcal G|$ for every sample averages to an unconditional probability bounded by $\alpha_1/|\mathcal G|$, the two margins combine by a union bound (\Cref{app:coverage}).

The rotation $U$ and the scale $s$ are computed from the held-out sample $D_\perp$ and treated as fixed constants throughout; every probability below is over the estimation sample and the Monte Carlo noise only. This is what makes $\hat h_n$ a function of the independent $x_1,\dots,x_n$ alone, as \Cref{lem:t_md} requires.

\paragraph{The integrator.} Write $y_{b,i}:=\Pi_b x_i$ for the projected sample points, so $\hat f_n(\cdot\,;\Pi_b)$ is the equal-weight mixture of the Gaussians $\phi_s(\cdot-y_{b,i})$, $i\in[n]$. A draw from it is $y_{b,J}+W$ with $J\sim\mathrm{Unif}[n]$ and $W\sim\mathcal N(0,s^2I_k)$ independent, so
\[
    h(\hat f_n(\cdot\,;\Pi_b)) \;=\; -\frac1n\sum_{i=1}^n\E\bigl[\log_2\hat f_n(y_{b,i}+W;\Pi_b)\bigr],
\]
and we estimate each of the $n$ expectations by $n_{\mathrm{MC}}$ independent draws of $W$:
\begin{equation}\label{eq:mc_def}
    \hat h_{n,\mathrm{MC}}(\Pi_b) \;:=\; -\frac{1}{n\,n_{\mathrm{MC}}}\sum_{i=1}^n\sum_{j=1}^{n_{\mathrm{MC}}} \log_2 \hat f_n\bigl(y_{b,i} + W_{ij,b};\,\Pi_b\bigr), \qquad W_{ij,b} \iid \mathcal N(0, s^2 I_k),
\end{equation}
with the $W_{ij,b}$ independent of each other and of the sample, and $\hat h_{n,\mathrm{MC}} := N_k^{-1}\sum_b \hat h_{n,\mathrm{MC}}(\Pi_b)$. This is the integrator of \citet[eqs.~(36a)--(36b)]{goldfeld2020convergence}, which uses natural logarithms; \eqref{eq:mc_def} is their estimator divided by $\ln 2$.

\begin{lemma}[Integrator: conditional mean and variance]\label{lem:mc_moments}
Conditionally on the sample $x_{1:n}$,
\begin{enumerate}
    \item[(i)] $\E[\hat h_{n,\mathrm{MC}}(\Pi_b)\mid x_{1:n}] = h(\hat f_n(\cdot\,;\Pi_b))$ for every $b$, and hence $\E[\hat h_{n,\mathrm{MC}}\mid x_{1:n}] = \hat h_n$;
    \item[(ii)] for every $b$,
    \begin{equation}\label{eq:c_mc}
        \Var[\hat h_{n,\mathrm{MC}}(\Pi_b)\mid x_{1:n}] \;\le\; \frac{c_{\mathrm{MC}}(\Pi_b)}{(\ln 2)^2\, n\,n_{\mathrm{MC}}}\ \text{bits}^2,
    \end{equation}
    where, with $\hat m_b := \frac1n\sum_{i=1}^n \|y_{b,i}\|^2$ the mean squared norm of the component means,
    \begin{equation}\label{eq:c_mc_def}
        c_{\mathrm{MC}}(\Pi_b) \;:=\; \frac{9ks^2 + 8\bigl(2 + s\sqrt k\bigr)\hat m_b + 3\bigl(11 s\sqrt k + 1\bigr)\sqrt{\hat m_b}}{s^2};
    \end{equation}
    \item[(iii)] $\displaystyle \Var[\hat h_{n,\mathrm{MC}}\mid x_{1:n}] \;\le\; \frac{1}{(\ln 2)^2\, N_k^2\, n\,n_{\mathrm{MC}}}\sum_{b=1}^{N_k} c_{\mathrm{MC}}(\Pi_b)$ bits$^2$.
\end{enumerate}
\end{lemma}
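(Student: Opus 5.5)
Conditionally on $x_{1:n}$ the centres $y_{b,i}$ and the mixture $\hat f_n(\cdot\,;\Pi_b)$ are fixed, so the only randomness in \eqref{eq:mc_def} is the dither $W_{ij,b}$. Write $F_{b,i}(W):=-\log_2\hat f_n(y_{b,i}+W;\Pi_b)$. For fixed $b$, the $n\,n_{\mathrm{MC}}$ summands $F_{b,i}(W_{ij,b})$ are conditionally independent, and each $W_{ij,b}$ has the law of $W\sim\mathcal N(0,s^2I_k)$.

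For (i), linearity and the representation $h(\hat f_n)=-\frac1n\sum_i\E[\log_2\hat f_n(y_{b,i}+W)]$ give $\E[\hat h_{n,\mathrm{MC}}(\Pi_b)\mid x_{1:n}]=h(\hat f_n(\cdot\,;\Pi_b))$. This requires each expectation to be finite, which part (ii) supplies through a second-moment bound. Averaging over $b$ then gives $\hat h_n$.

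For (ii), conditional independence gives
\[
\Var[\hat h_{n,\mathrm{MC}}(\Pi_b)\mid x_{1:n}]=\frac{1}{n^2 n_{\mathrm{MC}}}\sum_i\Var F_{b,i}(W)\le\frac{1}{n^2n_{\mathrm{MC}}}\sum_i\E\bigl[(F_{b,i}(W)-a)^2\bigr]
\]
for any constant $a$. I will take $a=\frac{k}{2}\log_2(2\pi s^2)$ and work in nats, converting to bits at the end with the factor $(\ln2)^{-2}$.

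The key step is a two-sided pointwise bound on $G(y):=-\ln\hat f_n(y)-\tfrac k2\ln(2\pi s^2)$.
\begin{itemize}
\item \emph{Lower bound.} Every component density is at most $(2\pi s^2)^{-k/2}$, so $G\ge0$.
\item \emph{Upper bound.} By Jensen (concavity of $\ln$, equal weights),
\[
-\ln\hat f_n(y)\le\frac1n\sum_l\Bigl[\tfrac{\|y-y_{b,l}\|^2}{2s^2}\Bigr]+\tfrac k2\ln(2\pi s^2),
\]
so $0\le G(y_{b,i}+W)\le\frac{1}{2s^2n}\sum_l\|y_{b,i}-y_{b,l}+W\|^2$.
\end{itemize}
Squaring and taking expectations over $W$ reduces the problem to Gaussian moments. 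These are $\E\|W\|^2=ks^2$, $\E\|W\|^4=k(k+2)s^4$, and the cross terms $\E\langle v,W\rangle^2=s^2\|v\|^2$, with odd moments vanishing. I then sum over $i$ and bound $\frac1n\sum_l\|y_{b,i}-y_{b,l}\|^2$ and its square in terms of $\hat m_b$ and $\|y_{b,i}\|^2$, using $\|u-v\|^2\le2\|u\|^2+2\|v\|^2$ and Cauchy--Schwarz.

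The main obstacle is reproducing exactly the constant in \eqref{eq:c_mc_def}. A naive expansion of the square gives quartic terms $\hat m_b^2/s^4$ and $k^2$ terms, whereas the stated $c_{\mathrm{MC}}$ is only linear in $\hat m_b$, $\sqrt{\hat m_b}$ and $ks^2$.

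I will therefore try first to replace the Jensen bound by a sharper one. The idea is to bound the log-mixture by keeping only the component $l=i$ together with a log-sum-exp correction. This gives
\[
G(y_{b,i}+W)\le\frac{\|W\|^2}{2s^2}+\ln n,
\]
but that introduces $\ln n$. The better route is to exploit that $F$ is Lipschitz-like: $\nabla_y(-\ln\hat f_n)(y)=\E_{\mathrm{post}}[(y-y_{b,L})]/s^2$ has norm at most $(\|y\|+\max_l\|y_{b,l}\|)/s^2$. The Gaussian Poincaré inequality then gives
\[
\Var F_{b,i}(W)\le s^2\,\E\|\nabla F\|^2,
\]
which is linear in second moments of the centres. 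I expect this gradient/Poincaré argument, rather than the raw square expansion, to match the stated form $(9ks^2+8(2+s\sqrt k)\hat m_b+\dots)/s^2$, possibly after following Goldfeld et al.'s own variance computation for their integrator. I would consult their proof for the exact numerical constants and accept whichever intermediate bounds reproduce them.

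For (iii), the blocks use independent dither draws $W_{ij,b}$ across $b$, so conditionally on the sample the block estimators are independent. Hence
\[
\Var[\hat h_{n,\mathrm{MC}}\mid x_{1:n}]=N_k^{-2}\sum_b\Var[\hat h_{n,\mathrm{MC}}(\Pi_b)\mid x_{1:n}],
\]
and (ii) completes the bound.
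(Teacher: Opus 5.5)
\textbf{Gap in part (ii).} Your parts (i) and (iii) are correct and agree with the paper. The paper takes unbiasedness from Goldfeld et al.'s eq.~(77) and uses blockwise independence of the dither exactly as you do. Your Jensen upper bound on $G$ already gives the integrability needed in (i), so you need not wait for (ii).

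The gap is (ii): none of your three routes, as written, produces a bound dominated by $c_{\mathrm{MC}}(\Pi_b)$.
The Jensen route yields $\hat m_b^2/s^4$ and $k^2$ terms. The single-component route yields a term growing like $(\ln n)^2$. The Poincar\'e route with your pointwise estimate $\|\nabla F\|\le(\|y\|+\max_l\|y_{b,l}\|)/s^2$ gives, after averaging over $i$, something of order $(\hat m_b+ks^2+\max_l\|y_{b,l}\|^2)/s^2$. Since $\max_l\|y_{b,l}\|^2$ can equal $n\hat m_b$ (one far centre, the rest at the origin), this is not linear in $\hat m_b$. It exceeds $c_{\mathrm{MC}}$ for large $n$, so your assertion that it will ``match the stated form'' is unsupported.

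Your fallback of taking the constant from Goldfeld et al.\ is in fact the whole of the paper's proof. Their Theorem~5(ii) bounds the mean squared error of the integrator about $h(\hat f_n(\cdot\,;\Pi_b))$ by $c_{\mathrm{MC}}(\Pi_b)/(n\,n_{\mathrm{MC}})$ nats$^2$, with their $m$ equal to $\hat m_b$. Unbiasedness makes that mean squared error the variance, and dividing by $(\ln 2)^2$ converts to bits$^2$. Citing this is legitimate, but it replaces your derivation rather than completing it.

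\textbf{A self-contained repair.} You do not need to reproduce $c_{\mathrm{MC}}$, only to stay below it, and $c_{\mathrm{MC}}(\Pi_b)\ge 9k$. Your Poincar\'e idea then closes with one more observation. Take $L$ uniform on $[n]$ and $Y=y_{b,L}+W$. The gradient is then a posterior mean of the dither, $\nabla(-\ln\hat f_n)(y)=\E[W\mid Y=y]/s^2$. Work in nats with $F_{b,i}(W)=-\ln\hat f_n(y_{b,i}+W;\Pi_b)$ and average the per-$i$ Poincar\'e bounds over $i$:
\[
\frac1n\sum_{i=1}^n\Var F_{b,i}(W)\;\le\;s^2\,\E\bigl\|\nabla(-\ln\hat f_n)(Y)\bigr\|^2\;=\;\frac{\E\bigl\|\E[W\mid Y]\bigr\|^2}{s^2}\;\le\;\frac{\E\|W\|^2}{s^2}\;=\;k .
\]
This is the Fisher-information inequality: convolving with $\mathcal N(0,s^2I_k)$ cannot raise Fisher information above that Gaussian's value $k/s^2$. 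Hence $\Var[\hat h_{n,\mathrm{MC}}(\Pi_b)\mid x_{1:n}]\le k/((\ln 2)^2\,n\,n_{\mathrm{MC}})$ bits$^2$, which implies (ii).

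Compared with the paper's citation, this bound is elementary and at least nine times smaller. It is also free of $\hat m_b$, so $t_{\mathrm{MC}}$ would not need to depend on the sample. The paper's route buys brevity by importing the published constant.
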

\begin{proof}
Given the sample, $\hat f_n(\cdot\,;\Pi_b)$ is a fixed Gaussian mixture with component means $y_{b,1},\dots,y_{b,n}$, the setting of \citet[Sec.~III-D]{goldfeld2020convergence}. Their eq.~(77) is the unbiasedness of their estimator; ours is theirs divided by $\ln 2$, so (i) holds for it too. Their Theorem~5(ii) bounds the mean squared error of their estimator about $h(\hat f_n(\cdot\,;\Pi_b))$ by $c_{\mathrm{MC}}(\Pi_b)/(n\,n_{\mathrm{MC}})$ in nats$^2$, where their $m$ is the mean squared norm of the component means, here $\hat m_b$. Mean squared error is variance plus squared bias, and the bias is zero by (i), so this bounds the variance; dividing by $(\ln 2)^2$ converts it to bits$^2$ for our estimator, giving (ii). For (iii), the noise $W_{ij,b}$ is drawn independently for each block, so given the sample the $N_k$ terms $\hat h_{n,\mathrm{MC}}(\Pi_b)$ are independent; the variance of their average is $N_k^{-2}\sum_b\Var[\hat h_{n,\mathrm{MC}}(\Pi_b)\mid x_{1:n}]$, and (ii) bounds each summand.
\end{proof}

\begin{lemma}[Monte Carlo margin]\label{lem:t_mc}
Let $\sigma_{\mathrm{MC}}^2$ denote the right-hand side of \Cref{lem:mc_moments}(iii). Conditionally on the sample, for every $t>0$,
\[
    \Pr\bigl[\hat h_{n,\mathrm{MC}} > \hat h_n + t \bigm| x_{1:n}\bigr] \;\le\; \frac{\sigma_{\mathrm{MC}}^2}{\sigma_{\mathrm{MC}}^2 + t^2}.
\]
In particular, with
\begin{equation}\label{eq:t_mc}
    t_{\mathrm{MC}} \;:=\; \sigma_{\mathrm{MC}}\sqrt{|\mathcal G|/\alpha_1 - 1},
\end{equation}
the value of $t$ at which the bound equals $\alpha_1/|\mathcal G|$, we have $\hat h_{n,\mathrm{MC}} \le \hat h_n + t_{\mathrm{MC}}$ with probability at least $1-\alpha_1/|\mathcal G|$.
\end{lemma}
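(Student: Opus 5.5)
The plan is to apply the one-sided Chebyshev (Cantelli) inequality to $\hat h_{n,\mathrm{MC}}$ conditionally on the sample $x_{1:n}$, with the conditional mean and variance taken from \Cref{lem:mc_moments}.

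Fix the sample. By \Cref{lem:mc_moments}(i), the conditional mean is $\E[\hat h_{n,\mathrm{MC}}\mid x_{1:n}]=\hat h_n$. By \Cref{lem:mc_moments}(iii), the conditional variance is at most $\sigma_{\mathrm{MC}}^2$. Cantelli's inequality, for a random variable $Z$ with mean $\mu$ and variance $v$, states that $\Pr[Z-\mu\ge t]\le v/(v+t^2)$ for every $t>0$. Applying it to $Z=\hat h_{n,\mathrm{MC}}$ under the conditional law gives
\[
\Pr\bigl[\hat h_{n,\mathrm{MC}} > \hat h_n+t\bigm| x_{1:n}\bigr]\le \frac{v}{v+t^2}.
\]

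Since $v\mapsto v/(v+t^2)$ is increasing in $v$, we may replace the true conditional variance by its upper bound $\sigma_{\mathrm{MC}}^2$. This yields the first display of the lemma.

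A point to check is that the conditional variance is finite, so that Cantelli applies. This holds because \Cref{lem:mc_moments}(ii) gives an explicit finite bound for each block. Note that $\sigma_{\mathrm{MC}}^2$ is itself a function of the sample through the quantities $\hat m_b$. This causes no difficulty: the statement is conditional on $x_{1:n}$, so $\sigma_{\mathrm{MC}}$ is a constant there.

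For the ``in particular'' part, solve
\[
\frac{\sigma_{\mathrm{MC}}^2}{\sigma_{\mathrm{MC}}^2+t^2}=\frac{\alpha_1}{|\mathcal G|}.
\]
Rearranging gives $t^2=\sigma_{\mathrm{MC}}^2\bigl(|\mathcal G|/\alpha_1-1\bigr)$, which is exactly \eqref{eq:t_mc}. The square root is real and positive since $\alpha_1<\alpha<1\le|\mathcal G|$. The degenerate case $\sigma_{\mathrm{MC}}=0$ is handled directly: there $t_{\mathrm{MC}}=0$, and the estimator equals $\hat h_n$ almost surely conditionally on the sample.

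Taking complements then gives $\hat h_{n,\mathrm{MC}}\le \hat h_n+t_{\mathrm{MC}}$ with conditional probability at least $1-\alpha_1/|\mathcal G|$. The main, though minor, obstacle is bookkeeping: $t_{\mathrm{MC}}$ is data-dependent, so the guarantee must be stated conditionally. The unconditional version follows by averaging over the sample, as noted before \Cref{lem:mc_moments}.
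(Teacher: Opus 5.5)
Your proposal is correct and follows the paper's proof essentially step for step: conditional Cantelli with mean $\hat h_n$ and variance bound $\sigma_{\mathrm{MC}}^2$ from \Cref{lem:mc_moments}, monotonicity of $v/(v+t^2)$ in $v$, solving for $t_{\mathrm{MC}}$, and averaging the conditional probability over the sample for the unconditional statement. Your separate treatment of $\sigma_{\mathrm{MC}}=0$ does no harm but never applies, since $c_{\mathrm{MC}}(\Pi_b)\ge 9k>0$ forces $\sigma_{\mathrm{MC}}^2>0$.
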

\begin{proof}
Given the sample, $\hat h_{n,\mathrm{MC}}$ has mean $\hat h_n$ and variance at most $\sigma_{\mathrm{MC}}^2$ by \Cref{lem:mc_moments}. Cantelli's inequality, $\Pr[X-\E X>t]\le\Var X/(\Var X+t^2)$, gives the tail bound; its right-hand side is increasing in $\Var X$, so the variance may be replaced by its upper bound. Setting it equal to $\alpha_1/|\mathcal G|$ and solving for $t$ gives \eqref{eq:t_mc}. Writing $A := \{\hat h_{n,\mathrm{MC}} > \hat h_n + t_{\mathrm{MC}}\}$, we have shown $\Pr[A\mid x_{1:n}] \le \alpha_1/|\mathcal G|$ for every realisation of the sample, and $\Pr[A] = \E[\Pr[A\mid x_{1:n}]]$, so $\Pr[A] \le \alpha_1/|\mathcal G|$ as well. Note that $t_{\mathrm{MC}}$ depends on the sample through $\hat m_b$, which is why we condition.
\end{proof}

\begin{lemma}[Sampling margin]\label{lem:t_md}
For every $t>0$,
\[
    \Pr\bigl[\hat h_n > \E[\hat h_n] + t\bigr] \;\le\; \exp\!\Bigl(-\frac{2t^2}{n\,H_b(1/n)^2}\Bigr).
\]
In particular, with
\begin{equation}\label{eq:t_md}
    t_{\mathrm{MD}} \;:=\; H_b(1/n)\sqrt{\tfrac n2\ln(|\mathcal G|/\alpha_2)},
\end{equation}
the value of $t$ at which the bound equals $\alpha_2/|\mathcal G|$, we have $\hat h_n \le \E[\hat h_n] + t_{\mathrm{MD}}$ with probability at least $1-\alpha_2/|\mathcal G|$.
\end{lemma}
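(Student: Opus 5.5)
The plan is to apply McDiarmid's bounded-differences inequality to $\hat h_n$, viewed as a function of the $n$ estimation points. The constant will come from \Cref{cor:bdd_diff_avg}.

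First we fix the setting so that McDiarmid applies. By the data-handling convention of \Cref{app:margins}, the rotation $U$, the maps $\Pi_b$ and the dither scale $s$ are computed on $D_\perp$ and treated as deterministic constants. Hence $\hat h_n$ of \eqref{eq:plugin_def} is a deterministic function $F(x_1,\dots,x_n)$ of the estimation sample alone. Under \Cref{asm:iid} the $x_j$ are independent, and under \Cref{asm:discrete} they take values in the finite set $\mathcal Q$. By \Cref{lem:mixture_sandwich} each block entropy lies between $h(W^{(k)})$ and $h(W^{(k)})+\log_2 n$, so $F$ is bounded and $\E[\hat h_n]$ is finite. Measurability is immediate because $\mathcal Q$ is finite.

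Next we invoke \Cref{cor:bdd_diff_avg}. Replacing any single coordinate $x_j$ by $x_j'\in\mathcal Q$ changes $F$ by at most $c_j:=H_b(1/n)$, uniformly in $j$, $k$, $N_k$, $s$ and the $\Pi_b$. McDiarmid's one-sided inequality then gives
\[
\Pr\bigl[F-\E F>t\bigr]\le\exp\!\Bigl(-\frac{2t^2}{\sum_{j=1}^n c_j^2}\Bigr)=\exp\!\Bigl(-\frac{2t^2}{n\,H_b(1/n)^2}\Bigr),
\]
which is the stated tail bound. To obtain the margin, we set the right-hand side equal to $\alpha_2/|\mathcal G|$ and solve. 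This gives $t^2=\tfrac{n}{2}H_b(1/n)^2\ln(|\mathcal G|/\alpha_2)$, which is \eqref{eq:t_md}. Since $\alpha_2<1\le|\mathcal G|$, the logarithm is positive, so $t_{\mathrm{MD}}>0$. The high-probability statement is the complement of the tail event at $t=t_{\mathrm{MD}}$.

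The only delicate point is making sure $F$ really depends on the estimation sample alone. If $s$ were computed from the estimation sample, for example through $\tr\hat\Sigma$, then changing $x_j$ would also move every Gaussian width, and \Cref{lem:bdd_diff} would no longer apply. We therefore state explicitly that every probability is taken conditionally on $D_\perp$, which is independent of $x_{1:n}$. Under that conditioning the argument goes through as above.
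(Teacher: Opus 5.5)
Your proposal is correct and follows the paper's proof: you apply McDiarmid's inequality with the uniform bounded-difference constant $H_b(1/n)$ from \Cref{cor:bdd_diff_avg}, with $\Pi_b$ and $s$ fixed on $D_\perp$ so that $\hat h_n$ is a function of the independent estimation sample alone, and then solve for $t$. Your extra checks are sound and only make explicit what the paper leaves implicit: boundedness of $\hat h_n$ via \Cref{lem:mixture_sandwich}, positivity of the logarithm, and explicit conditioning on $D_\perp$.
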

\begin{proof}
The $x_i$ are independent (\Cref{asm:iid}) and changing any one of them moves $\hat h_n$ by at most $c:=H_b(1/n)$ (\Cref{cor:bdd_diff_avg}), the $\Pi_b$ and $s$ being fixed independently of the sample. McDiarmid's inequality \citep{mcdiarmid1989} gives $\Pr[\hat h_n-\E\hat h_n>t]\le\exp(-2t^2/(nc^2))$; setting this equal to $\alpha_2/|\mathcal G|$ and solving gives \eqref{eq:t_md}.
\end{proof}

\subsection{Proof of \Cref{prop:coverage}}\label{app:coverage}

\begin{proof}
Fix a pair $(k,\nu^2)\in\mathcal G$. By \Cref{lem:t_mc,lem:t_md} the events $\{\hat h_{n,\mathrm{MC}} > \hat h_n + t_{\mathrm{MC}}\}$ and $\{\hat h_n > \E[\hat h_n] + t_{\mathrm{MD}}\}$ have probability at most $\alpha_1/|\mathcal G|$ and $\alpha_2/|\mathcal G|$ respectively, so with probability at least $1-(\alpha_1+\alpha_2)/|\mathcal G|$ neither occurs, and then
\[
    \bar h_{\mathrm{LB}}(k) \;=\; \hat h_{n,\mathrm{MC}} - t_{\mathrm{MC}} - t_{\mathrm{MD}} \;\le\; \hat h_n - t_{\mathrm{MD}} \;\le\; \E\bigl[\hat h_n\bigr] \;\le\; \bar h_U(k).
\]
The last inequality is deterministic and holds for every $n$: it is the downward bias of the plug-in, $\E[\hat h_n(\Pi_b)]\le h(\tilde z^{\,\Pi_b})$ \citep[eq.~(70)]{goldfeld2020convergence}, applied to each block and averaged. A union bound over the grid gives
\[
    \Pr\Bigl[\exists\,(k,\nu^2)\in\mathcal G:\ \bar h_{\mathrm{LB}}(k) > \bar h_U(k)\Bigr] \;\le\; \sum_{(k,\nu^2)\in\mathcal G}\Bigl(\frac{\alpha_1}{|\mathcal G|} + \frac{\alpha_2}{|\mathcal G|}\Bigr) \;=\; \alpha,
\]
so \eqref{eq:h_certified} holds simultaneously for every pair with probability at least $1-\alpha$, over the sample and the Monte Carlo noise, conditionally on $D_\perp$. The union bound over $\mathcal G$ is necessary: \eqref{eq:rp_max} selects the pair $(k,\nu^2)$ that gives the largest bound, and that selection depends on the data through $\bar h_{\mathrm{LB}}(k)$. A guarantee holding at any single pre-specified pair with probability $1-\alpha$ would not cover the pair chosen afterwards, whereas \eqref{eq:h_certified} holding for every pair at once does, whichever is selected. The bound \eqref{eq:abs_certified} follows since the absolute form of \eqref{eq:rp_min} (Step~3 of the proof of \Cref{cor:rp}) is nondecreasing in the entropy, so substituting $\bar h_{\mathrm{LB}}(k)$ can only lower it; dividing by $\tr\hat\Sigma_\Psi\ge\tr\Sigma_\Psi$ and using $ks^2/\tr\hat\Sigma_\Psi=\nu^2$ gives \eqref{eq:rho_certified}.
\end{proof}

\section{Data handling}\label{app:data_handling}

The bound consumes four inputs.
\begin{itemize}
    \item The \textbf{leakage}, $\gamma$, read from the DP-SGD hyperparameters.
    \item The \textbf{block rotation}, $U$, and the \textbf{dither scale}, $s$, which we fix from a covariance estimate.
    \item The \textbf{entropy confidence bound}, $\bar h_{\mathrm{LB}}$, on the representation.
\end{itemize}

The attack experiment adds the components an adversary fits. Each has its own conditions on the data it may use, \Cref{app:dh_requirements} lists those conditions. \Cref{app:dh_partitions} turns them into labelled partitions (\Cref{tab:splits}) and \Cref{app:dh_deploy} gives the layout we recommend to a practitioner, which is also the setup we use for the privacy-spectrum experiments. \Cref{app:dh_attack} gives the layout of the attack experiment. \Cref{fig:data_flow} draws both and \Cref{alg:certify} gives the order of computation.

\subsection{What each quantity requires}\label{app:dh_requirements}

\paragraph{The protected records.} Write $D_{\mathrm{train}}$ for the records DP-SGD is run on. These are the $X$ of \Cref{sec:setting}. Two conditions follow from the assumptions.
\begin{enumerate}[label=\textbf{Req \arabic*.}, ref=Req \arabic*, resume=reqs]
    \item\label{req:iid} $D_{\mathrm{train}}$ is an i.i.d.\ sample from $P_x$ (\Cref{asm:iid}), which \Cref{lem:cmi_chain} uses to discharge $I(x_i;X^{-i})$. Any preprocessing computed across $D_{\mathrm{train}}$ and applied back to it (a dataset mean, a whitening transform, deduplication, class rebalancing, a fitted encoder) couples $x_i$ to $X^{-i}$ and breaks the argument at its first step. Per-sample maps, per-example augmentation, and any transform whose constants are fixed on data disjoint from $D_{\mathrm{train}}$ preserve it.
    \item\label{req:compose} Every trajectory computed on $D_{\mathrm{train}}$ that the adversary may observe composes into $\gamma$ (\Cref{lem:traj_chain}). A hyperparameter search of $k$ runs on the protected records is therefore charged the leakage of $k$ runs, not one; since $\gamma$ reads only $(q_t,\sigma_t)$, the search is simply one longer trajectory. Alternatively the search is run on records outside the protected set, in which case we aren't required to account its leakage. 
\end{enumerate}

\paragraph{The leakage $\gamma$.} Needs no data. \Cref{thm:gc_canonical} requires Poisson subsampling and a data-independent initialisation, together with \ref{req:compose}.

\paragraph{The rotation $U$.} We take $U = d^{-1/2}H_dV^\top$ (\Cref{rem:choice_of_U}). The Hadamard factor is a fixed matrix, so the only part of $U$ that touches data is the eigenbasis $V$ of $\Sigma$, which we estimate. \Cref{cor:rp} holds for every orthogonal $U$, so $U$ may be computed from any data, but \Cref{lem:bdd_diff} requires the $\Pi_b$ to be fixed independently of the estimation sample, so:
\begin{enumerate}[label=\textbf{Req \arabic*.}, ref=Req \arabic*, resume=reqs]
    \item\label{req:U} the sample on which $V$ is estimated is disjoint from the estimation sample.
\end{enumerate}
$V$ buys tightness, not validity: every orthogonal $U$ is admissible (\Cref{cor:rp}), so where no such sample can be spared one takes $V = I_d$, giving $U = d^{-1/2}H_d$ requiring no data. We use the data-free variant $U=d^{-1/2}H_dD$, with $D$ a fixed random diagonal sign flip, for the ImageNette pixel bound and for every LPIPS bound (\Cref{rem:choice_of_U}).

\paragraph{The dither scale and the reference scale.} The estimator convolves the sample with a Gaussian of bandwidth $s=\nu\sqrt{\tr\hat\Sigma/k}$, so $s$ is set by an estimate of $\tr\Sigma$.
\begin{enumerate}[label=\textbf{Req \arabic*.}, ref=Req \arabic*, resume=reqs]
    \item\label{req:trace} The sample on which $\tr\Sigma$ is estimated is disjoint from the estimation sample. \Cref{lem:t_md} treats $s$ as a constant and lets one record of the estimation sample move; if $s$ were computed from that sample, moving one record would also move the bandwidth, and the bounded-differences constant of \Cref{lem:bdd_diff} no longer applies.
\end{enumerate}
The same estimate is the reference scale $\rmseprior^2 = \tr\Sigma$ of $\rho$. The guarantee is the absolute bound \eqref{eq:master_abs}, in which $\tr\Sigma$ does not appear and the ratio only rescales it - so a point estimate suffices. The attack-bound comparison of \Cref{fig:tightness,fig:tightness_mnist} is unaffected in any case, both sides being divided by the same number. A practitioner who wants a confidence bound on $\rho^\star$, and so provably at most one, divides instead by a one-sided upper confidence bound on $\tr\Sigma$ (for example using empirical-Bernstein (\Cref{app:exp_entropy})).

\paragraph{The entropy bound $\bar h_{\mathrm{LB}}$.} $\bar h_U(k)$ is a property of the source $P_x$, and \Cref{prop:coverage} lower-bounds it from any i.i.d.\ sample of $P_x$ on which $U$ and $s$ were not computed.
\begin{enumerate}[label=\textbf{Req \arabic*.}, ref=Req \arabic*, resume=reqs]
\item\label{req:est} The estimation sample is i.i.d.\ from $P_x$ and disjoint from the sample of \ref{req:U}. Both margins scale as $n^{-1/2}$ (\Cref{app:margins}), so the sample should be as large as possible.
\end{enumerate}
Nothing requires the estimation sample to be disjoint from $D_{\mathrm{train}}$. Under \ref{req:iid} the training set is itself an i.i.d.\ sample of $P_x$, and using it estimates the entropy of exactly the source the protected records are drawn from, with the most data. Coverage is then over the random draw of the training set, not conditional on the realised one. Any other sample must be a draw from $P_x$ as well, since \Cref{prop:coverage} bounds the entropy of whatever distribution the sample comes from.

\paragraph{Validation and test.} Neither is an input to any estimand.
\begin{enumerate}[label=\textbf{Req \arabic*.}, ref=Req \arabic*, resume=reqs]
    \item\label{req:val} As is ordinarily practiced, one can't use the same data to both train a model and evaluate its performance.
    \item\label{req:test} Records on which accuracy is reported are disjoint from both data used to train the model and data used to evaluate the model in order to for example pick hyperparameters. 
\end{enumerate}
Neither set is trained on, so neither carries $\gamma$ and the bound makes no claim about it. 

\paragraph{The attack.} The bound quantifies over reconstructions $g(\theta,X^{-i})$ with internal randomness and knowledge of $P_x$ (\Cref{sec:setting}); an experiment testing it must keep the attack inside that class.
\begin{enumerate}[label=\textbf{Req \arabic*.}, ref=Req \arabic*, resume=reqs]
    \item\label{req:shadow} Everything the attack fits or draws from --- here the candidate set and the centring mean --- excludes the target. An attack that has seen its target is not a function of $(\theta,X^{-i})$ and can fall below the bound without the bound being wrong. $X^{-i}$ itself is known to the adversary, so the shadow pool may include it.
    \item\label{req:precoder} The attacked architecture, including the centring mean of its input (\Cref{app:exp_attack}), is fixed on records disjoint from the attacked set $X=X^{-i}\cup\{x_i\}$. The bound holds for every architecture at a given $(q,\sigma,T)$, so a fixed input map is on the same footing as the linear layer itself; but one fit on the attacked records is preprocessing across $X$ and violates \ref{req:iid}.    
    \item\label{req:targets} The targets are i.i.d.\ from $P_x$, so that the mean error over them estimates the expectation the bound controls, and are disjoint from the sample $U$ is computed on (\ref{req:U}, admissibility) and from the shadow pool (\ref{req:shadow}). They need not be disjoint from the estimation sample. \Cref{prop:coverage} is a statement about the population entropy $\bar h_U(k)$, holding on an event of probability $1-\alpha$ over the estimation sample. On that event the bound holds for the expected error of every $g$, whichever records the targets are. The attack's mean error over the targets is an estimate of that expectation, and its expectation does not depend on whether the targets also appear in the estimation sample. Overlap makes the two random quantities dependent, but changes neither the probability of the coverage event nor the expected attack error, so the comparison in \Cref{fig:tightness,fig:tightness_mnist} is valid either way.
\end{enumerate}

\subsection{The partitions}\label{app:dh_partitions}

\Cref{tab:splits} names a partition for each role above and records, for each, what it supplies, the requirement that constrains it, which partitions it must be disjoint from, and the property its size must have. A $\star$ marks a disjointness the guarantee or the threat model requires. The rest are kept for a clean joint statement and could be relaxed. Two partitions are shared by both experiments: $D_{\perp}$, which fixes $U$ and $s$, and $D_{\mathrm{est}}$, which supplies the entropy bound.

\begin{table}[t]
\centering\footnotesize
\setlength{\tabcolsep}{3pt}
\renewcommand{\arraystretch}{1.3}
\caption{The partitions of \Cref{fig:data_flow}, and what constrains each. The requirement column refers to \Cref{app:dh_requirements}; $\star$ marks a disjointness the guarantee or the threat model requires, the rest are kept for a clean joint statement and could be relaxed.}
\label{tab:splits}
\begin{tabular}{@{}l >{\raggedright\arraybackslash}p{34mm} l >{\raggedright\arraybackslash}p{28mm} >{\raggedright\arraybackslash}p{32mm}@{}}
\hline
Split & Supplies & Requirement & Disjoint from & Size and property \\
\hline\hline
\multicolumn{5}{@{}l}{\emph{shared --- the source side, computed once per dataset}}\\
\hline
$D_{\perp}$ & $U=d^{-1/2}H_dV^\top$, $\tr\hat\Sigma$, dither scales & \ref{req:U}, \ref{req:trace} & $D_{\mathrm{est}}^{\star}$, $D_{\mathrm{targets}}^{\star}$ & $\gtrsim d$, for a full-rank $\hat\Sigma$; may be replaced by a data-free $U$ and a declared $\tr\Sigma$ \\
\hline
$D_{\mathrm{est}}$ & $\bar h_{\mathrm{LB}}$ & \ref{req:est} & $D_{\perp}^{\star}$ & largest available; i.i.d.\ from $P_x$; margins $\propto n^{-1/2}$ \\
\hline\hline
\multicolumn{5}{@{}l}{\emph{(a) deployment, and the privacy-spectrum runs}}\\
\hline
$D_{\mathrm{train}}$ & every charged DP-SGD run & \ref{req:iid}, \ref{req:compose} & $D_{\mathrm{val}}$, $D_{\mathrm{test}}$ & i.i.d.\ from $P_x$; no cross-record preprocessing; may overlap $D_\perp$ and $D_{\mathrm{est}}$ \\
\hline
$D_{\mathrm{val}}$ & selection among runs & \ref{req:val} & $D_{\mathrm{train}}$ & small \\
\hline
$D_{\mathrm{test}}$ & reported accuracy & \ref{req:test} & $D_{\mathrm{train}}$, $D_{\mathrm{val}}$ & small \\
\hline\hline
\multicolumn{5}{@{}l}{\emph{(b) the attack experiment}}\\
\hline
$D_{\mathrm{shadow}}$ & candidate set, centring mean & \ref{req:shadow}, \ref{req:precoder} & $D_{\mathrm{targets}}^{\star}$ & most of $D_{\mathrm{est}}$ \\
\hline
$D_{\mathrm{targets}}$ & the reconstructed records & \ref{req:targets} & $D_{\perp}^{\star}$, $D_{\mathrm{shadow}}^{\star}$ & $100$; i.i.d.\ from $P_x$; enough for the standard error of the mean; may lie in $D_{\mathrm{est}}$ \\
\hline
\end{tabular}
\end{table}

\subsection{Deployment, and the privacy-spectrum runs}\label{app:dh_deploy}

A practitioner makes two splits of the available data, independently of each other. For the bound, they draw a sample $D_{\perp}$ of at least $d$ records, on which they compute the covariance $\hat\Sigma$, whose eigenvectors give $V$ and whose trace gives $\tr\hat\Sigma$, and a disjoint estimation sample $D_{\mathrm{est}}$, as large as possible. For training, they split the data into $D_{\mathrm{train}}$, $D_{\mathrm{val}}$ and $D_{\mathrm{test}}$ as in any training pipeline. Only $D_\perp$ and $D_{\mathrm{est}}$ must be disjoint (\ref{req:U}, \ref{req:trace}). $D_{\mathrm{val}}$ is used to evaluate performance during training for purposes such as hyperparameter selection and $D_{\mathrm{test}}$ is used to report final utility achieved. The hyperparameter search is either charged through $\gamma$ (by summing \Cref{thm:gc_canonical} over every run), moved onto records outside the protected set, or done privately (such as on trusted private compute). The learning rate and architecture do not enter $\gamma$, so a search over them at fixed $(q_t,\sigma_t)$ adds one trajectory's leakage per run and nothing more. This layout is also that of the privacy-spectrum runs (\Cref{fig:spectrum_all}), so \Cref{fig:data_flow}(a) is both the recommendation and what we do: $D_\perp$ and $D_{\mathrm{est}}$ are the splits of the attack experiment (\Cref{app:dh_attack}), drawn independently of the utility runs' split into $D_{\mathrm{train}}$, $D_{\mathrm{val}}$ and $D_{\mathrm{test}}$. As is common in the literature \citep{ponomareva2023dp}, and to keep the experimental results simple to interpret, we assume hyperparameter selection leaks nothing. Where no $D_{\perp}$ can be spared, $U$ is taken data-free and $\tr\Sigma$ is replaced by a declared upper bound, at a cost in tightness alone.\footnote{Any declared value keeps the absolute bound \eqref{eq:master_abs} valid, since the dither scale it sets may be any positive number; the ratio bound stays valid provided the declared value is an upper bound on $\tr\Sigma$, since dividing by a larger number can only lower it. For $8$-bit pixels, $\tr\Sigma\le d\cdot255^2/4$ always.}

\subsection{The attack experiment}\label{app:dh_attack}

The attack experiment reuses the rotation, the dither scales and the entropy bounds of (a) unchanged, so the two experiments differ only in the leakage their mechanism incurs. It uses two further sets. $D_{\mathrm{shadow}}\subseteq D_{\mathrm{est}}$ is the attack's candidate set and supplies its centring mean (\ref{req:shadow}, \ref{req:precoder}). $D_{\mathrm{targets}}$ is $100$ records from a held-out slice disjoint from $D_{\mathrm{shadow}}$, $D_{\mathrm{est}}$ and $D_\perp$ (\ref{req:targets}).

\begin{figure}[t]
\centering
\begin{tikzpicture}[
  font=\small, scale=1, every node/.style={transform shape},
  pool/.style={draw, fill=black!5,  minimum height=5.5mm, align=center, inner sep=1.5pt},
  bnd/.style ={draw, fill=green!12, minimum height=5.5mm, align=center, inner sep=1.5pt},
  prt/.style ={draw, fill=red!8,    minimum height=5.5mm, align=center, inner sep=1.5pt},
  atk/.style ={draw, fill=blue!8,   minimum height=5.5mm, align=center, inner sep=1.5pt},
  hd/.style={font=\small\bfseries, anchor=west}
]
\node[hd] at (0,1.5) {(a) Deployment, and the privacy-spectrum runs of \Cref{fig:spectrum_all}};

\node[pool, minimum width=132mm, anchor=west] (poolA) at (0,1.0) {available data, i.i.d.\ $P_x$};

\node[bnd, minimum width=16mm, anchor=west] (Aperp) at (0,0.35)     {$D_{\perp}$};
\node[bnd, minimum width=114mm, anchor=west] (Aest) at (18mm,0.35)  {$D_{\mathrm{est}}$};
\node[prt, minimum width=96mm, anchor=west] (Atr)   at (0,-0.3)     {$D_{\mathrm{train}}$};
\node[pool,minimum width=16mm, anchor=west] (Ava)   at (98mm,-0.3)  {$D_{\mathrm{val}}$};
\node[pool,minimum width=16mm, anchor=west] (Ate)   at (116mm,-0.3) {$D_{\mathrm{test}}$};

\begin{scope}[yshift=-30mm] 
\node[hd] at (0,1.85) {(b) The attack experiment of \Cref{fig:tightness}};
\draw[decorate, decoration={brace, amplitude=3pt}, black!55]
  (18mm,1.15) -- (94mm,1.15)
  node[midway, above=2pt, font=\scriptsize\itshape] {$D_{\mathrm{est}}$, unchanged from (a): entropy estimated on all of it};
\node[bnd, minimum width=16mm, anchor=west] (Bperp) at (0,0.5)     {$D_{\perp}$};
\node[atk, minimum width=76mm, anchor=west] (Bsh)   at (18mm,0.5)  {$D_{\mathrm{shadow}}\subseteq D_{\mathrm{est}}$};
\node[prt, minimum width=18mm, anchor=west] (Btg)   at (96mm,0.5)  {$D_{\mathrm{targets}}$};
\end{scope}
\end{tikzpicture}
\caption{How the available data is partitioned. Green blocks supply the bound, red blocks are the protected records trained on, and blue blocks are data the attack uses. In \textbf{(a)} the bound's split into $D_\perp$ and $D_{\mathrm{est}}$ (top row) and the training split (bottom row) are drawn independently and may overlap; only $D_\perp$ and $D_{\mathrm{est}}$ must be disjoint. $D_{\perp}$ and $D_{\mathrm{est}}$ are shared, so the rotation and the entropy estimate are the same in the deployment \textbf{(a)} and the attack experiment \textbf{(b)}; the targets are held out from both.}
\label{fig:data_flow}
\end{figure}

\begin{algorithm}[t]
\caption{Reconstruction lower bound for a DP-SGD deployment.}
\label{alg:certify}
\begin{algorithmic}[1]
\Require $(q_t,\sigma_t)_{t=1}^T$ for every run charged to $D_{\mathrm{train}}$; samples $D_{\perp}$ and $D_{\mathrm{est}}$ (i.i.d.\ from $P_x$; $D_{\mathrm{est}}$ may be $D_{\mathrm{train}}$); grid $\mathcal G$; confidence $\alpha$
\State $\gamma \gets \tfrac{1}{\ln 2}\sum_{\text{runs}}\sum_{t=1}^{T} K_t^{\mathrm{PRV}}$ \Comment{\Cref{thm:gc_canonical}; \ref{req:compose}}
\State On $D_{\perp}$: fix, for each $\Psi$, $U = m^{-1/2}H_mV^{\top}$, the dither scales $s_{k,\nu}$, and $\tr\hat\Sigma_\Psi$ \Comment{\ref{req:U}, \ref{req:trace}}
\For{$\Psi\in\{\mathrm{id},\lpipsmap\}$ and $(k,\nu^2)\in\mathcal G$}
  \State On $D_{\mathrm{est}}$: $\bar h_{\mathrm{LB}}(k) \gets \hat h_{n,\mathrm{MC}} - t_{\mathrm{MC}} - t_{\mathrm{MD}}$ \Comment{\Cref{prop:coverage}; \ref{req:est}}
\EndFor
\State $\rho^\star_\Psi \gets$ \eqref{eq:rho_certified}
\State Train by Poisson-subsampled DP-SGD on $D_{\mathrm{train}}$; to target a given $\rho^\star$, invert steps 1 and 6 for $\sigma$
\end{algorithmic}
\end{algorithm}

\section{Experimental details}\label{app:exp}

\Cref{app:exp_data,app:exp_entropy,app:exp_phi} apply to both experiments; \Cref{app:exp_spectrum} to \Cref{fig:spectrum_headline,fig:spectrum_all}; \Cref{app:exp_attack,app:exp_compressor,app:exp_extra} to \Cref{fig:tightness,fig:tightness_mnist}.

\subsection{Datasets}\label{app:exp_data}

\begin{center}\small
\begin{tabular}{lcccccc}
\toprule
Dataset & $C\!\times\!H\!\times\!W$ & $d$ & $\tr\Sigma$ & $\rmseprior/\sqrt d$ & $n_{\mathrm{est}}$ & $|D_\perp|$ \\
\midrule
MNIST & $1\!\times\!28\!\times\!28$ & $784$ & $3.43\!\times\!10^{6}$ & $66.2$ & $59$k & $10$k \\
CIFAR-10 & $3\!\times\!32\!\times\!32$ & $3072$ & $1.24\!\times\!10^{7}$ & $63.6$ & $49$k & $10$k \\
CelebA & $3\!\times\!64\!\times\!64$ & $12{,}288$ & $5.91\!\times\!10^{7}$ & $69.3$ & $44$k & $117$k \\
ImageNette & $3\!\times\!128\!\times\!128$ & $49{,}152$ & $2.46\!\times\!10^{8}$ & $70.7$ & $10$k & $1$k \\
\bottomrule
\end{tabular}
\end{center}

The deployment runs span $d=784$ to $49{,}152$, greyscale and color. The CelebA dataset serves as a particularly interesting dataset as recovery where one is able to ascertain the identity of the individual in the samples constitutes a natural privacy threat that one can validate with their own eyes. We deploy our bound and evaluate the utility achieved by DP-SGD across all the datasets above. For ImageNette, $D_\perp$ supplies only $\tr\hat\Sigma$; no eigenbasis is estimated on it (\Cref{rem:choice_of_U}).

The attack experiment uses MNIST and CelebA, a low-dimensional greyscale source and a colour face source. On CelebA, recovering a recognisable face is the natural privacy harm, so it is the case in which the perceptual bound is the one that matters. Each configuration is attacked on the same $100$ held-out targets at every leakage (\Cref{app:dh_attack}).

CelebA \citep{liu2015deep} uses the binary \emph{Male} attribute as its label. Networks consume images rescaled to $[0,1]$ (CIFAR-10 utility training additionally standardises by fixed, public per-channel constants); entropy is computed on the raw 8-bit grid, so \Cref{asm:discrete} holds with $b_x=8$, $c=256$ and $\Delta=1$. No preprocessing couples samples: every normalisation uses constants fixed independently of $X$, so \Cref{asm:iid} holds (\Cref{app:data_handling}).

\subsection{The entropy bound}\label{app:exp_entropy}

The splits are those of \Cref{app:dh_deploy}, with sizes in \Cref{app:exp_data}.

\paragraph{Reference scale.} We normalise $\rho$ by the point estimate $\tr\hat\Sigma_\Psi$ from $D_\perp$ (\Cref{tab:trsigma}). This does not affect the guarantee: the absolute bound \eqref{eq:master_abs} does not involve $\tr\Sigma_\Psi$, and any other normaliser $c$ simply rescales $\rho^\star$ by $(\tr\hat\Sigma_\Psi/c)^{1/2}$. To gauge the estimation error we also give a $95\%$ upper confidence bound from the empirical Bernstein inequality of \citet{maurer2009empirical}, which requires i.i.d.\ variables with mean $\tr\Sigma_\Psi$ and a range fixed in advance. Pairing the records of $D_\perp$ at random supplies these: for independent $x,x'\sim P_x$, $Y:=\tfrac12\|\Psi(x)-\Psi(x')\|^2$ has mean exactly $\tr\Sigma_\Psi$ and, unlike $\|\Psi(x)-\E \Psi(x)\|^2$, needs no estimate of the mean. Its range is $[0,B/2]$, with $B$ the largest possible value of $\|\Psi(x)-\Psi(x')\|^2$. In pixel space $B=255^2d$. In LPIPS space this squared distance is the LPIPS distance, a sum over layers $\ell$ of the spatial average of $\sum_c w_{\ell c}(a_c-b_c)^2$, where $w_{\ell c}\ge0$ is the learned LPIPS weight on channel $c$ of layer $\ell$ and $a,b$ are the two images' normalised features at one location. These features are nonnegative with norm at most one, so $\|a-b\|^2\le2$, each layer contributes at most $2\max_c w_{\ell c}$, and $B=2\sum_\ell\max_c w_{\ell c}$. Empirical Bernstein is much tighter here than Hoeffding, whose width scales as $B/\sqrt m$ for $m$ pairs: $B$ is attained only by two opposite-extreme records, so $Y$ varies far less than its range, and empirical Bernstein uses the sample variance of $Y$ and pays for $B$ only at rate $B/m$. The resulting bound exceeds the point estimate by at most $5.8\%$ on MNIST, CIFAR-10 and CelebA, and by $16.6\%$ on ImageNette, where $|D_\perp|$ is small. Normalising by it instead would lower every reported bound by at most $0.026$, or $0.064$ on ImageNette, and the bound would then hold with probability at least $90\%$ by a union bound with \Cref{prop:coverage}.

\paragraph{Grid.} We take $\mathcal G=K\times N$, with block sizes $K$ the divisors of $d$ up to $32$ in pixel space ($\{1,2,4,7,8,14,16,28\}$ on MNIST, $\{1,2,3,4,6,8,12,16,24,32\}$ on the others) and $K=\{1,2,4,8,16,32\}$ in LPIPS space, and dither levels $N=\{0.2,0.4,0.8,1.6,3.2,6.4\}$, so $|\mathcal G|=48$ (MNIST pixel), $60$ (other pixel) and $36$ (LPIPS), both fixed before any entropy is estimated, as \Cref{app:coverage} requires. In pixel space $K$ contains only divisors of $d$, and in LPIPS space only divisors of the padded dimension $32{,}768$ (see Blocks below). The selected pair often lies on the boundary of the grid (\Cref{tab:spectrum_full}): $k^\star=\max K$ at $\varepsilon\ge8$ in pixel space on MNIST, CIFAR-10 and ImageNette (and at $\varepsilon=16$ on CelebA), and at almost every leakage in LPIPS space; $\nu^{2\star}$ reaches an end of $N$ in a few places (CelebA pixel: $0.2$ at $\varepsilon\le1$ and $\varepsilon=32$; CelebA LPIPS: $6.4$ at $\varepsilon\le0.5$; MNIST LPIPS: $6.4$ at $\varepsilon=2$).

\paragraph{Blocks.} We never store the rotation as a dense matrix. Instead we keep the Hadamard factor as a Kronecker product of small Hadamard matrices, so applying $U$ costs one small matrix product per factor. In pixel space we use $H_{784}=H_{28}\otimes H_{28}$, $H_{3072}=H_{12}\otimes H_{64}\otimes H_4$, $H_{12288}=H_{12}\otimes H_{64}\otimes H_{16}$ and $H_{49152}=H_{12}\otimes H_{64}\otimes H_{64}$, with the power-of-two factors from Sylvester's doubling and the others from Paley's construction. These are Hadamard matrices of order exactly $d$, so no zero-padding is needed; padding adds dimensions that carry no entropy and loosens the bound. In LPIPS space we zero-pad the $m=31{,}872$ features to $32{,}768=2^{15}$ and rotate by $U=32768^{-1/2}H_{32768}D$, with $H_{32768}=H_{64}\otimes H_{64}\otimes H_8$ and $D$ a fixed random diagonal sign flip, so no eigenbasis is estimated. The padded coordinates are deterministically zero, so by \Cref{rem:choice_of_U} the bound transfers to the unpadded features, at a cost in tightness set by the padding fraction ($2.7\%$), and the block variances equal $\tr\Sigma_{\lpipsmap}$ only on average over the signs.

\paragraph{Estimator and margins.} The plug-in evaluates $\log_2\hat f_n$ at each Monte Carlo point exactly, summing over all $n_{\mathrm{est}}$ centres with a numerically stable log-sum-exp. We draw $n_{\mathrm{MC}}=2$ dither replicates per centre at $k=1$ and $n_{\mathrm{MC}}=\max(k,4)$ at block size $k\ge2$. We set $\alpha=0.05$, split it evenly between the two margins, and divide each by $|\mathcal G|$, separately for each representation. Each bound therefore holds at confidence $1-\alpha$ on its own, and the pixel and LPIPS bounds hold jointly at $1-2\alpha$. Both margins are in bits of the block entropy $\bar h_U(k)$, the entropy of one $k$-dimensional block averaged over the blocks, so neither grows with $d$. Their cost to the final bound is the gap between $\rho^\star$ and its point-estimate version (thin dashed in \Cref{fig:tightness,fig:tightness_mnist}), which drops both margins and maximises over $\mathcal G$ again: at most $0.075$ in pixel space and $0.047$ in LPIPS space at $\varepsilon=1$ ($0.083$ in pixel space on ImageNette), and $0.01$--$0.035$ at $\varepsilon=8$ and $16$ (up to $0.046$ at $\varepsilon=32$, where $\rho^\star$ itself is only $0.13$--$0.19$). The selected pairs $(k^\star,\nu^{2\star})$ are listed in \Cref{tab:spectrum_full}.

\begin{table}[htbp]
    \centering\small\setlength{\tabcolsep}{4pt}
    \caption{The reference scale and how far its finite-sample one-sided $95\%$ upper confidence bound (\Cref{app:exp_entropy}) lies above it, relative to the point estimate, estimated on $D_\perp$ (sizes in \Cref{app:exp_data}). $\rmseprior/\sqrt d$ is the per-coordinate root-mean-squared error of the prior-mean guess, in grey levels for pixels and feature units for $\lpipsmap$ ($m=31{,}872$).}
    \label{tab:trsigma}
    \begin{tabular}{l r r r r r r}
        \toprule
        & \multicolumn{3}{c}{pixel ($\Psi=\mathrm{id}$)} & \multicolumn{3}{c}{perceptual ($\Psi=\lpipsmap$)} \\
        \cmidrule(lr){2-4}\cmidrule(lr){5-7}
        Dataset & $\tr\Sigma$ & UCB margin & $\rmseprior/\sqrt d$ & $\tr\Sigma_{\lpipsmap}$ & UCB margin & $\rmseprior/\sqrt m$ \\
        \midrule
        MNIST       & $3.43\times10^{6}$ & $2.5\%$ & $66.2$ & $0.1338$ & $5.8\%$ & $2.05\times10^{-3}$ \\
        CIFAR-10    & $1.24\times10^{7}$ & $3.2\%$ & $63.6$ & $0.2463$ & $3.3\%$ & $2.78\times10^{-3}$ \\
        CelebA      & $5.91\times10^{7}$ & $0.5\%$ & $69.3$ & $0.1445$ & $0.6\%$ & $2.13\times10^{-3}$ \\
        ImageNette  & $2.46\times10^{8}$ & $16.6\%$ & $70.7$ & --- & --- & --- \\
        \bottomrule
    \end{tabular}
\end{table}

\subsection{The LPIPS representation}\label{app:exp_phi}

$\lpipsmap$ is the frozen LPIPS network \citep{zhang2018unreasonable} with its AlexNet backbone \citep{krizhevsky2012imagenet}. We concatenate its normalised feature maps, each rescaled by the square root of the LPIPS weights and the spatial size, so that $\|\lpipsmap(x)-\lpipsmap(x')\|^2$ is the LPIPS distance, giving $m=31{,}872$ coordinates. Images are resized to $64\times64$ first, so the perceptual bound measures LPIPS at that resolution.

$\lpipsmap$ is a fixed public map, so it is admissible (\Cref{def:dither}) provided its training images are disjoint from the records we protect. We do not report an LPIPS bound on ImageNette, whose images are drawn from the ImageNet training set on which the LPIPS features were trained.

\paragraph{The data-blind baseline.} The prior mean $\E[\lpipsmap(x)]$ is the best data-blind guess in feature space, but it is not the feature vector of any image, so an attacker who must output an image cannot return it. For any image $\hat x$ chosen without seeing the release,
\[
    \E\|\lpipsmap(x)-\lpipsmap(\hat x)\|^2 \;=\; \tr\Sigma_{\lpipsmap} + \|\lpipsmap(\hat x)-\E\lpipsmap(x)\|^2 .
\]
The first term does not depend on $\hat x$, so the optimal data-blind image is the one whose features lie closest to the prior mean. We find it by searching over images directly. Because the objective is not convex, we run the search from $16$ different starting images, drawn from $D_{\mathrm{shadow}}$, and keep the best result. From each, we run $400$ steps of Adam on the pixels of $\hat x$ to minimise $\|\lpipsmap(\hat x)-\hat\mu_{\lpipsmap}\|^2$ plus a total-variation penalty of weight $10^{-4}$, and we keep the final image whose features are closest to $\hat\mu_{\lpipsmap}$. Here $\hat\mu_{\lpipsmap}$ is the mean feature vector over $2{,}000$ images of $D_{\mathrm{shadow}}$, and the pixels are kept in the valid range by the reparametrisation $\hat x=\tfrac12(\tanh u+1)$. The gradient passes through the frozen network, so each step moves the image towards features closer to the mean. Descent may stop at a local minimum, so the error we report may slightly exceed that of the optimal data-blind image.

Because the second term is positive for every image, this baseline has error above $\tr\Sigma_{\lpipsmap}$. We normalise $\rhostarlp$ by $\sqrt{\tr\Sigma_{\lpipsmap}}$, as in \eqref{eq:rho_def}, which keeps $\rho^\star\le1$. On that scale the image baseline sits slightly above $1$ ($1.131$ on MNIST and $1.045$ on CelebA), and we draw it as the grey dashed line in the LPIPS panels of \Cref{fig:tightness,fig:tightness_mnist}.

\subsection{The privacy-spectrum experiment}\label{app:exp_spectrum}

\paragraph{The bound.} We report results against $\varepsilon$, the quantity practitioners compare, rather than $\sigma$ or $\gamma$. For each target $\varepsilon$ we calibrate the noise multiplier $\sigma$ with the PRV accountant \citep{gopi2021numerical,yousefpour2021opacus} at $\delta=n_{\mathrm{train}}^{-1.1}$, with $n_{\mathrm{train}}$ the training-set size. Since $q$ and $\sigma$ are constant, every step has the same charge $K^{\mathrm{PRV}}$ in \Cref{thm:gc_canonical}, so $\gamma=TK^{\mathrm{PRV}}/\ln2$ needs a single one-dimensional integral per configuration, which we evaluate with a $256$-node quadrature rule. Against a reference adaptive quadrature its error is at most $6.7\times10^{-9}$ bits in $\gamma$ over all configurations, six orders of magnitude below the smallest entropy margin per dimension ($0.0054$ bits), so it has no visible effect on $\rho^\star$.

\paragraph{Configuration.} Our configuration follows common DP-SGD practice \citep{ponomareva2023dp,de2022unlocking} in its main ingredients: large Poisson-subsampled batches trained for many steps, GroupNorm, augmentation multiplicity and weight averaging (below). Hyperparameters are not tuned per privacy level: expected batch size $b=1024$ for $100$ epochs, giving $T=4{,}900$ steps at $q=0.0204$ (MNIST), $T=4{,}400$ at $q=0.0227$ (CIFAR-10, CelebA) and $T=1{,}000$ at $q=0.1$ (ImageNette), with a data-independent initialisation as \Cref{lem:traj_chain} requires. We clip at $C=1$ on every dataset; neither $\gamma$ nor $\rho^\star$ depends on $C$ (\Cref{sec:bound_gamma}).

\paragraph{Utility.} We train a LeNet-5 \citep{lecun1998gradient} on MNIST (${\sim}62$K parameters), a Wide-ResNet-16-4 on CIFAR-10 (${\sim}2.7$M) and a ResNet-18 on CelebA and ImageNette (${\sim}11$M). All use GroupNorm \citep{wu2018group} rather than BatchNorm, which would make each record's gradient depend on the others in its batch and break \Cref{lem:per_step_channel}. Training uses Opacus \citep{yousefpour2021opacus} at the configuration above, with SGD at momentum $0.9$ and cross-entropy loss. We use augmentation multiplicity $4$ on CIFAR-10, CelebA and ImageNette ($1$, i.e.\ none, on MNIST) \citep{de2022unlocking,ponomareva2023dp}, meaning that four augmentations of each example are drawn and their gradients averaged before clipping, so every record still contributes a single clipped gradient. We also keep an exponential moving average of the iterates during training (decay $0.999$) and use those averaged weights when measuring accuracy, while training itself continues from the raw iterates \citep{de2022unlocking,ponomareva2023dp}. Averaging cancels much of the noise DP-SGD injects, since that noise is independent across steps, and it is post-processing of released iterates, so it costs no privacy. The learning rate is fixed per dataset ($0.5$ on MNIST, $1.0$ on CIFAR-10, $0.25$ on CelebA and ImageNette) and reused at every privacy level, so the accuracies we report are attainable but not tuned per level. Every privacy level is trained with $3$ seeds and we report the mean and standard deviation of test accuracy. The non-private baseline is the same architecture and optimiser trained without clipping or noise, with a single seed and its own learning rate, selected on $D_{\mathrm{val}}$ ($0.05$ on MNIST, CIFAR-10 and ImageNette, $0.1$ on CelebA), and accuracy is reported on $D_{\mathrm{test}}$.

\paragraph{Hyperparameter tuning.} Any search over the clip norm or the learning rate reads the training data, so strictly their leakage composes with that of the final run (\Cref{lem:traj_chain}) (assuming that an adversary can observe the noisy trajectories of these training runs). As is standard in the DP-SGD literature \citep{ponomareva2023dp}, we do not include it, and the reported $\varepsilon$ and $\rho^\star$ are those of the final run alone. Including it requires no new theory. Since $\gamma$ reads only the schedule $(q_t,\sigma_t)$, a search of $k$ runs is simply a longer trajectory and adds the leakage of $k$ runs, so a practitioner can either sum \Cref{thm:gc_canonical} over every run or move the search onto data outside the protected set (\Cref{app:dh_deploy}).

\begin{figure}[htbp]
    \centering
    \includegraphics[width=\textwidth]{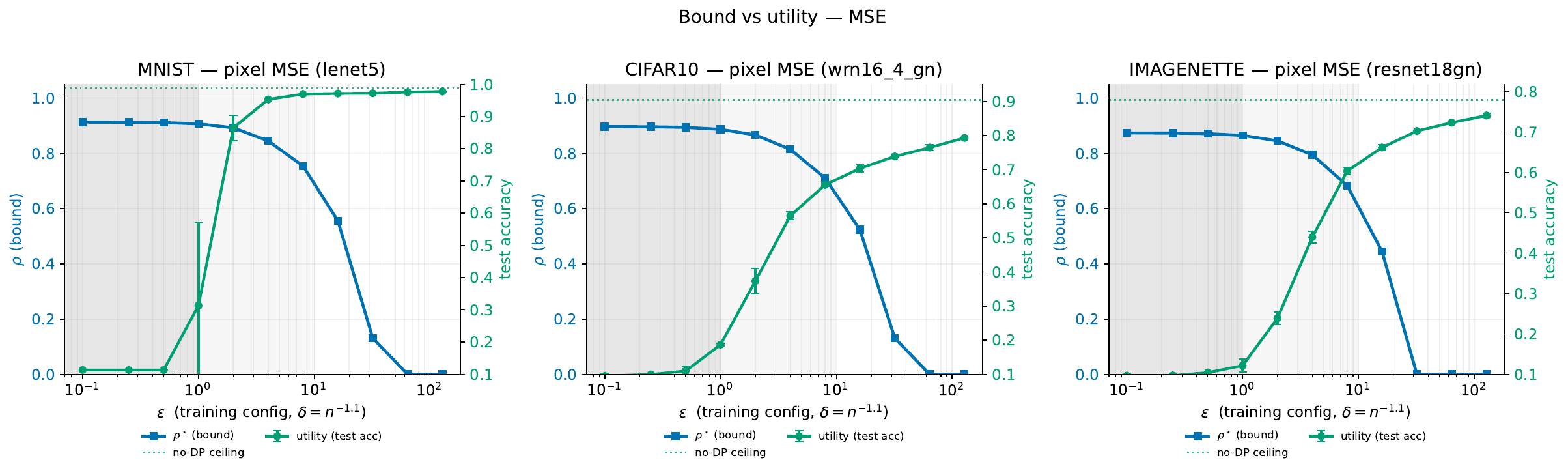}
    \includegraphics[width=0.66\textwidth]{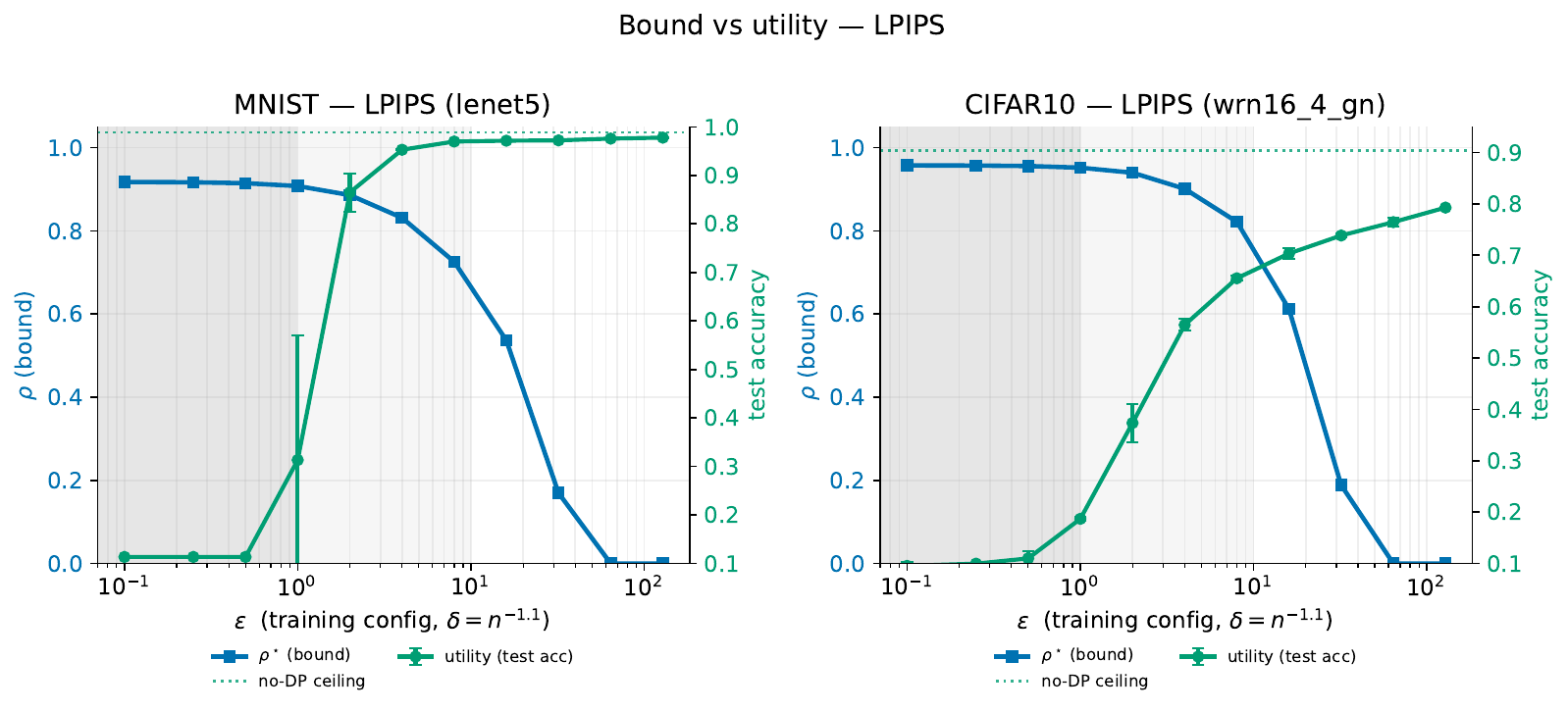}
    \caption{What a deployed DP-SGD run guarantees across the privacy spectrum, on MNIST, CIFAR-10 and ImageNette (CelebA is in \Cref{fig:spectrum_headline}). At each target $\varepsilon$ (expected batch $1024$, $100$ epochs, $C=1$, $\delta=n_{\mathrm{train}}^{-1.1}$): the pixel reconstruction bounds and, on MNIST and CIFAR-10, the perceptual bounds (left axis), each a $95\%$ lower bound holding against every adversary of \Cref{sec:setting} and every architecture at that configuration, and the test accuracy the same run attains (right axis, mean $\pm$ standard deviation over $3$ seeds; non-private baseline as the dotted horizontal line). Shaded: the $\varepsilon\le1$ (dark) and $1<\varepsilon\le10$ (light) regimes of \citet{ponomareva2023dp}; $\varepsilon>10$ is unshaded. }
    \label{fig:spectrum_all}
\end{figure}

\begin{table}[htbp]
    \centering\footnotesize\setlength{\tabcolsep}{3pt}
    \caption{Values behind \Cref{fig:spectrum_headline,fig:spectrum_all}: test accuracy (mean over $3$ seeds) and the pixel and perceptual bounds at each target $\varepsilon$, each a $95\%$ lower bound. $\dagger$: the $\varepsilon>10$ band of \citet{ponomareva2023dp}.}
    \label{tab:spectrum}
\begin{tabular}{r ccc ccc ccc cc}
\toprule
 & \multicolumn{3}{c}{MNIST} & \multicolumn{3}{c}{CIFAR-10} & \multicolumn{3}{c}{CelebA} & \multicolumn{2}{c}{ImageNette} \\
\cmidrule(lr){2-4}\cmidrule(lr){5-7}\cmidrule(lr){8-10}\cmidrule(lr){11-12}
$\varepsilon$ & acc. & $\rhostarpx$ & $\rhostarlp$ & acc. & $\rhostarpx$ & $\rhostarlp$ & acc. & $\rhostarpx$ & $\rhostarlp$ & acc. & $\rhostarpx$ \\
\midrule
$1$ & 0.313 & 0.907 & 0.908 & 0.187 & 0.887 & 0.952 & 0.923 & 0.888 & 0.944 & 0.122 & 0.865 \\
$2$ & 0.865 & 0.892 & 0.886 & 0.373 & 0.867 & 0.940 & 0.946 & 0.861 & 0.931 & 0.239 & 0.845 \\
$4$ & 0.953 & 0.845 & 0.831 & 0.564 & 0.815 & 0.901 & 0.954 & 0.805 & 0.892 & 0.440 & 0.795 \\
$8$ & 0.970 & 0.754 & 0.726 & 0.655 & 0.711 & 0.822 & 0.957 & 0.696 & 0.801 & 0.603 & 0.683 \\
$16^{\dagger}$ & 0.971 & 0.556 & 0.536 & 0.703 & 0.523 & 0.612 & 0.961 & 0.513 & 0.588 & 0.662 & 0.445 \\
$32^{\dagger}$ & 0.972 & 0.131 & 0.169 & 0.739 & 0.131 & 0.188 & 0.967 & 0.131 & 0.174 & 0.702 & 0.000 \\
$64^{\dagger}$ & 0.976 & 0.000 & 0.000 & 0.765 & 0.000 & 0.000 & 0.963 & 0.000 & 0.000 & 0.723 & 0.000 \\
\midrule
non-priv. & 0.989 & --- & --- & 0.905 & --- & --- & 0.971 & --- & --- & 0.779 & --- \\
\bottomrule
\end{tabular}
\end{table}

\begin{table}[htbp]
    \centering\small
\caption{Mechanism internals behind \Cref{fig:spectrum_headline,fig:spectrum_all}: the leakage each configuration incurs, the noise multiplier achieving it, and the binding pair. A dash: the bound is zero at that leakage; n/a: LPIPS is not evaluated on ImageNette. $k^\star$ grows with $\gamma$ overall up to single grid steps on CelebA.}
\label{tab:spectrum_full}
\begin{tabular}{l r r r cc cc}
\toprule
Dataset & $\varepsilon$ & $\gamma$ (bits) & $\sigma$ & \multicolumn{2}{c}{$k^\star$ (pix / $\lpipsmap$)} & \multicolumn{2}{c}{$\nu^{2\star}$ (pix / $\lpipsmap$)} \\
\midrule
MNIST & $1$ & 0.048 & 5.574 & 14 & 16 & 1.6 & 3.2 \\
 & $2$ & 0.169 & 3.028 & 16 & 32 & 1.6 & 6.4 \\
 & $4$ & 0.570 & 1.741 & 16 & 32 & 1.6 & 3.2 \\
 & $8$ & 1.785 & 1.110 & 28 & 32 & 1.6 & 1.6 \\
 & $16$ & 5.042 & 0.796 & 28 & 32 & 0.8 & 0.8 \\
 & $32$ & 13.150 & 0.617 & 28 & 32 & 0.4 & 0.4 \\
 & $64$ & 32.883 & 0.497 & -- & -- & -- & -- \\
\midrule
CIFAR-10 & $1$ & 0.049 & 5.838 & 6 & 16 & 0.8 & 1.6 \\
 & $2$ & 0.171 & 3.167 & 6 & 32 & 0.4 & 3.2 \\
 & $4$ & 0.577 & 1.815 & 12 & 32 & 0.8 & 3.2 \\
 & $8$ & 1.806 & 1.147 & 32 & 32 & 1.6 & 1.6 \\
 & $16$ & 5.108 & 0.816 & 32 & 32 & 0.8 & 0.8 \\
 & $32$ & 13.317 & 0.629 & 32 & 32 & 0.4 & 0.4 \\
 & $64$ & 33.222 & 0.504 & -- & -- & -- & -- \\
\midrule
CelebA & $1$ & 0.049 & 5.838 & 3 & 16 & 0.2 & 1.6 \\
 & $2$ & 0.171 & 3.167 & 6 & 32 & 0.4 & 3.2 \\
 & $4$ & 0.577 & 1.815 & 8 & 32 & 0.4 & 3.2 \\
 & $8$ & 1.806 & 1.147 & 16 & 32 & 0.8 & 1.6 \\
 & $16$ & 5.108 & 0.816 & 32 & 32 & 0.8 & 0.8 \\
 & $32$ & 13.317 & 0.629 & 24 & 32 & 0.2 & 0.4 \\
 & $64$ & 33.222 & 0.504 & -- & -- & -- & -- \\
\midrule
ImageNette & $1$ & 0.060 & 10.969 & 24 & n/a & 3.2 & n/a \\
 & $2$ & 0.207 & 5.941 & 24 & n/a & 3.2 & n/a \\
 & $4$ & 0.682 & 3.313 & 32 & n/a & 3.2 & n/a \\
 & $8$ & 2.111 & 1.948 & 32 & n/a & 1.6 & n/a \\
 & $16$ & 6.010 & 1.242 & 32 & n/a & 0.8 & n/a \\
 & $32$ & 15.682 & 0.863 & -- & -- & -- & -- \\
 & $64$ & 38.345 & 0.639 & -- & -- & -- & -- \\
\bottomrule
\end{tabular}
\end{table}

\subsection{The attack on DP-SGD}\label{app:exp_attack}\label{app:exp_attacks}\label{app:exp_multistep}

\paragraph{Model.} We attack the model of \citet{schwethelm2024visual}: a single linear layer whose scalar output is the loss, $\ell(W;u)=Wu$, so that the per-example gradient is the input $u$ itself, whatever the value of $W$. Nothing of the record is lost to the architecture, and since \Cref{thm:gc_canonical} holds for every model and loss at a given $(q_t,\sigma_t)$, it is the model on which an attack can come closest to the bound. The input is centred with a mean fixed on $D_{\mathrm{shadow}}$: $u=x-\bar x$ in pixel space and $u=\lpipsmap(x)-\bar\Psi$ in feature space, where the frozen feature map of \Cref{app:exp_phi} is placed before the layer so that the attack estimates the features directly. The informed adversary knows the other records and which of them were sampled, so it subtracts their clipped gradients from the released sum. What remains of step $t$ is
\begin{equation}\label{eq:witness_channel}
    Y_t \;=\; \mathbf 1_t\,a(u)\,u \;+\; \sigma C\,\xi_t,
\end{equation}
the target's clipped gradient $a(u)u$, with $a(u)=\min\{1,C/\|u\|\}$ the clip factor, present or absent according to the target's participation $\mathbf 1_t\sim\mathrm{Bernoulli}(q)$, plus the mechanism's noise $\sigma C\,\xi_t$ with $\xi_t\sim\mathcal N(0,I)$. The adversary does not observe $\mathbf 1_t$ ($\mathbf 1_t\equiv1$ at $q=1$). We take $C=0.3$, small enough that every gradient is clipped, as \citet{hayes2023bounding} do for the same reason (their App.~K.1).

\paragraph{Attack.} We use the prior-aware attack of \citet{hayes2023bounding}, in the likelihood form of their Appendix~G. The adversary holds candidates $z_1,\dots,z_n$ and, for each, computes the probability of the releases if the target were $z_j$. Under that hypothesis each release is either $c_j=a(z_j)z_j$ plus noise, if the target was sampled at that step, or noise alone, so the probability is a product over steps of two-way mixtures,
\begin{equation}\label{eq:candidate_loglik}
    p(Y_{1:T}\mid z_j) \;\propto\; \prod_{t=1}^T \Bigl[(1-q)+q\,e^{L_{tj}}\Bigr], \qquad L_{tj}=\frac{2\langle Y_t, c_j\rangle-\|c_j\|^2}{2\sigma^2C^2},
\end{equation}
which is their score $s_z$ with a general clip norm in place of their $C=1$. Participation is integrated out exactly, step by step, and the clip factor of every candidate is known, so the score is in closed form for any $q$. We depart from \citet{hayes2023bounding} in two ways. Their adversary is given a shortlist that contains the target (their Section~3.4); ours holds the shadow sample $D_{\mathrm{shadow}}$ ($58{,}000$ records on MNIST, $44{,}000$ on CelebA), which does not, so it knows the source only through a sample of it and is weaker than the adversary of \Cref{sec:setting}. And where they report the candidate with the largest score (their Appendix~G), we report the posterior mean
\begin{equation}\label{eq:candidate_mean}
    \hat u \;=\; \sum_{j=1}^n w_j\,z_j, \qquad w_j \;=\; \frac{p(Y_{1:T}\mid z_j)}{\sum_{k=1}^n p(Y_{1:T}\mid z_k)},
\end{equation}
the candidates averaged by how well each explains the releases, since the bound controls squared error and the posterior mean minimises it under the candidate prior. The reconstruction is $\bar x+\hat u$ in pixel space and $\bar\Psi+\hat u$ in feature space, and costs one matrix product per target.

\paragraph{Evaluation.} Both configurations, $(q,T)=(1,1)$ and $(0.02,100)$, are run on the same $100$ held-out targets (\Cref{app:dh_attack}), each attacked once, with $\sigma$ solved so that \Cref{thm:gc_canonical} charges the target $\gamma$. A marker is the ratio \eqref{eq:rho_def} with the expectation replaced by the mean over targets, divided by the $\sqrt{\tr\hat\Sigma}$ of \Cref{tab:trsigma}, the same scale as the bound; in the LPIPS panels it is a feature-space error, which is what the bound controls, and the dashed lines show, for both configurations, the error after inverting $\hat u$ to an image by gradient descent on $\|\lpipsmap(\hat x)-\hat z\|^2$. Two features of \Cref{fig:tightness,fig:tightness_mnist} are worth stating. Subsampling weakens the attack at matched $\gamma$ by up to $0.08$ in $\rho$ (paired differences $-0.03$ to $+0.08$; it is slightly stronger only on CelebA LPIPS at $72$ bits), the cost of not knowing when the target participated, as \citet{hayes2023bounding} also observe. And at $72$ bits the CelebA markers stop at $0.71$--$0.73$ (pixel) and $0.91$--$0.93$ (LPIPS), over the two configurations. The attack can only output a weighted average of shadow records, so it can get no closer to a held-out face than such an average can, and $D_{\mathrm{shadow}}$ holds no near duplicate of a held-out face. This limits the attack, not the bound, which depends on the source's entropy rather than on any sample of it, and is zero at that leakage in any case.

\begin{table}[htbp]
    \centering\small
    \caption{The attack against the bound: $\rho$ over the $100$ targets, bootstrap standard errors between $0.013$ and $0.024$.}
    \label{tab:attack_results}
    \begin{tabular}{l l c c c c}
        \toprule
        & $\gamma$ (bits) & $0.72$ & $2.89$ & $8.01$ & $72.13$ \\
        \midrule
        MNIST, pixel  & $q=1$               & $0.987$ & $0.948$ & $0.848$ & $0.620$ \\
                      & $(q,T)=(0.02,100)$  & $1.006$ & $0.984$ & $0.921$ & $0.690$ \\
                      & bound               & $0.831$ & $0.673$ & $0.397$ & $0$ \\
        \midrule
        MNIST, LPIPS  & $q=1$               & $0.999$ & $0.922$ & $0.845$ & $0.636$ \\
                      & $(q,T)=(0.02,100)$  & $1.001$ & $0.964$ & $0.926$ & $0.681$ \\
                      & bound               & $0.816$ & $0.653$ & $0.396$ & $0$ \\
        \midrule
        CelebA, pixel & $q=1$               & $0.920$ & $0.884$ & $0.768$ & $0.707$ \\
                      & $(q,T)=(0.02,100)$  & $0.970$ & $0.941$ & $0.840$ & $0.725$ \\
                      & bound               & $0.789$ & $0.628$ & $0.373$ & $0$ \\
        \midrule
        CelebA, LPIPS & $q=1$               & $0.983$ & $0.963$ & $0.911$ & $0.933$ \\
                      & $(q,T)=(0.02,100)$  & $0.991$ & $0.992$ & $0.975$ & $0.906$ \\
                      & bound               & $0.878$ & $0.734$ & $0.459$ & $0$ \\
        \bottomrule
    \end{tabular}
\end{table}

\begin{figure}[htbp]
    \centering
    \includegraphics[width=\textwidth]{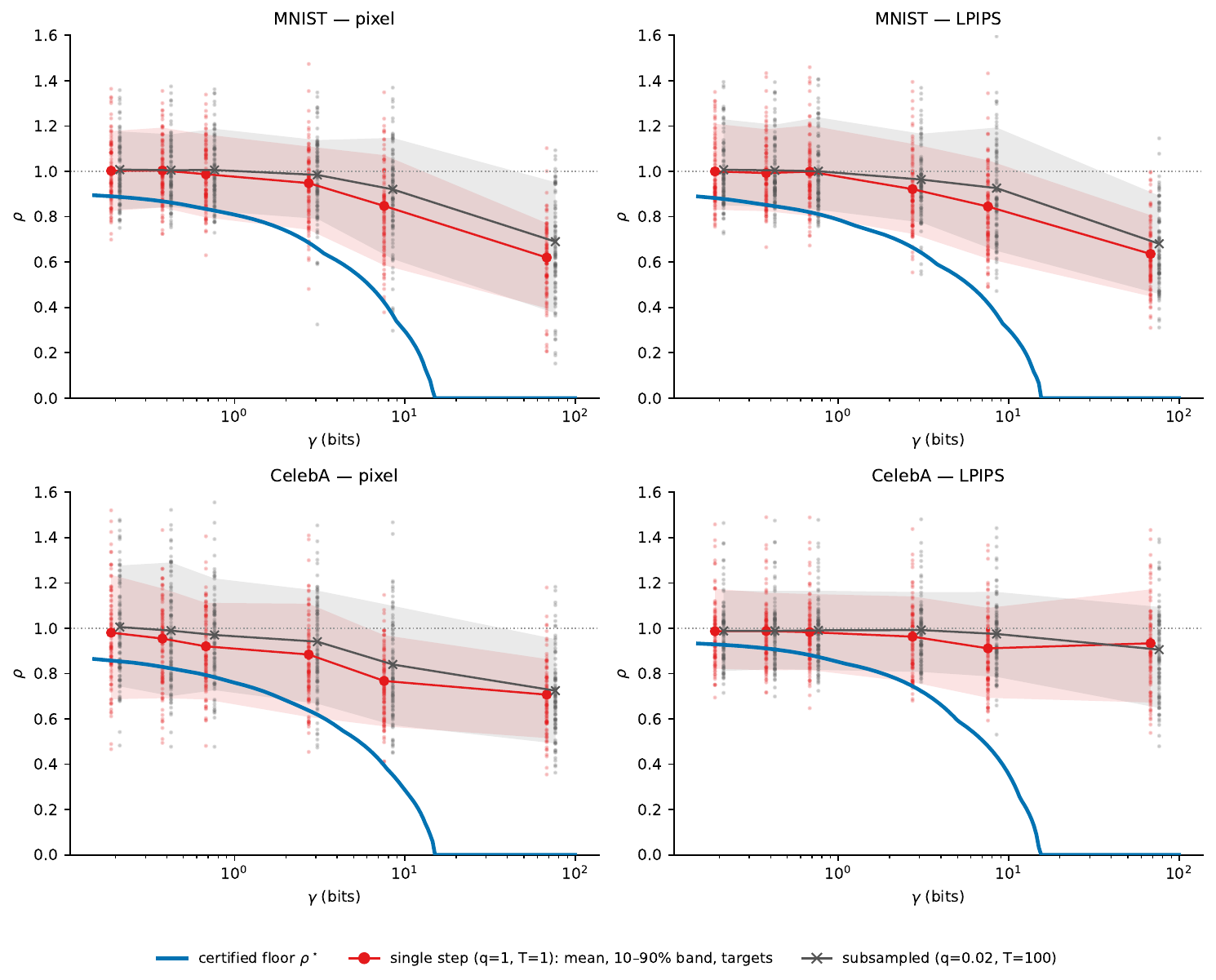}
    \caption{Per-target spread behind \Cref{fig:tightness,fig:tightness_mnist}, for the single-step (circles) and subsampled (crosses) attack. Lines: the mean over the $100$ targets; shaded: the $10$--$90$th percentile band of the per-target error at each leakage; faint points: every target. The bound is on expected error. At low leakage a sizeable fraction of individual targets fall below it ($9$--$40\%$ at $\gamma\le0.72$ bits, $1$--$11\%$ at $2.89$ bits, at most $3\%$ at $8.01$ bits), which the bound permits.}
    \label{fig:sandwich_full}
\end{figure}

\subsection{Tightness at fixed leakage}\label{app:exp_compressor}

The bound \eqref{eq:rho_certified} sees the mechanism only through $\gamma$, so it holds for every release that leaks at most $\gamma$ bits about the record, whether or not that release is a model. It can therefore be tight at a given $\gamma$ only against the release whose bits buy the most reconstruction, which is not DP-SGD. We construct one that is close to it, a vector quantiser, and use it to locate the slack of the bound. At each leakage we compute
\begin{equation}\label{eq:sandwich_chain}
\rho^\star \;\le\; \underbrace{\rho_{\mathrm{blk}}}_{\text{\Cref{prop:block_leakage}}} \;\le\; \underbrace{\rho_{\mathrm{cb}}}_{\text{attack on the quantiser}} \;\le\; \underbrace{\rho_{\mathrm{dpsgd}}}_{\text{attack on DP-SGD}} ,
\end{equation}
where $\rho_{\mathrm{blk}}$ relaxes the bound while remaining a lower bound on every attack of the quantiser (\Cref{tab:chain}). Across the four panels of \Cref{fig:tightness,fig:tightness_mnist} the first gap is $0.10$--$0.33$ in $\rho$ and the second at most $0.03$.

\paragraph{The codebook mechanism.} The distortion--rate function is the least expected squared error that a release of $R$ bits permits. It is approached by quantisers, which release the index of the codeword nearest to the record and so spend every released bit on reducing squared error \citep[Ch.~10]{cover2006elements}. We use a single-record quantiser of this kind. No model is trained: the mechanism releases the index $Q\in\{1,\dots,M\}$ of the codeword nearest to $\Psi(x_i)$, among $M=2^R$ codewords in the space of $\Psi$. The codewords are built in two steps. First, $k$-means on $D_{\mathrm{shadow}}$ gives $M$ centroids. Second, we assign $D_{\mathrm{est}}$ to these centroids with at most $\lceil n_{\mathrm{est}}/M\rceil$ records per cell (a single balanced step in the spirit of \citet{malinen2014balanced}), and take the cell means as the codewords. The size cap spreads the codewords so that every cell stays populated, which estimating the per-block leakage below requires. The codebook is fixed independently of the held-out targets and is known to the adversary; for LPIPS the same construction is applied to $\lpipsmap(x)$. Every bound below holds for any codebook, and a worse codebook only widens the measured gaps, so its optimality is not at issue. Since $Q$ is a deterministic function of $x_i$ with $M$ values, independent of $X^{-i}$, the release leaks $I(x_i;Q\mid X^{-i})=H(Q)\le R$ bits. We charge $\gamma=R$, which slightly overstates the leakage (the realised $H(Q)$ is within $0.07$ bits of $R$) and so only lowers $\rho^\star$. Below one bit, which a deterministic codebook cannot reach, we take the two-cell codebook and pass $Q$ through a binary symmetric channel: the release $Y$ equals $Q$ with probability $p\in[\tfrac12,1]$ and the other index otherwise, independently of everything else. Because $Y$ depends on $x_i$ only through $Q$, it leaks exactly $\gamma=I(Q;Y)=H_b\bigl(f_0p+f_1(1-p)\bigr)-H_b(p)$ bits \citep[Ch.~7]{cover2006elements}, where $f_0,f_1$ are the cell frequencies, and every $\gamma\in(0,H_b(f_0)]$ is reached by choice of $p$.

\paragraph{The codebook attack.} The adversary sees the release $Z$ ($Z=Q$ for the deterministic codebook, $Z=Y$ for the randomised one) and the mechanism does not involve $X^{-i}$, so its reconstruction is a function $g(Z)$. Under squared error the optimal $g$ is the conditional mean $\E[\Psi(x_i)\mid Z]$ \citep{bishop2006pattern}. For the deterministic codebook this is the mean of the released cell, and for the randomised codebook it is the average of the two cell means, each weighted by the probability that it is the true cell given the release. We estimate the cell means and frequencies on $D_{\mathrm{est}}$ and report the error on targets held out from $D_{\mathrm{shadow}}$ and $D_{\mathrm{est}}$. The estimate is a function of the release and of records disjoint from the target, so it is a valid reconstruction of \Cref{sec:setting}, and estimation error can only raise its error above the optimum, which widens rather than narrows the measured gap.

\paragraph{The bounds.} In the proof of \Cref{cor:rp} the mechanism enters in exactly one place: the rate at which each block's Shannon lower bound is taken, every block being charged the whole of $\gamma$. Write $\tilde z_b$ for the dithered block $\tilde z^{\,\Pi_b\Psi}_i$ and $\gamma_b := I(\tilde z_b;\theta\mid X^{-i})$ for the leakage block $b$ receives, which for the quantiser is $I(\tilde z_b;Q)=h(\tilde z_b)-\sum_c\Pr(Q=c)\,h(\tilde z_b\mid Q=c)$. Both bounds in \eqref{eq:sandwich_chain} have the form
\begin{equation}\label{eq:floor_form}
\rho^2 \;\ge\; \frac{k}{2\pi e\,\tr\Sigma_\Psi}\cdot\frac{1}{N_k}\sum_{b=1}^{N_k}F(b)\;-\;\nu^2 ,
\end{equation}
with $F_{\mathrm{cert}}=2^{\,2(h(\tilde z_b)-\gamma)/k}$ the bound before its Jensen step and $F_{\mathrm{blk}}=2^{\,2(h(\tilde z_b)-\gamma_b)/k}$ charging each block only the leakage it received. The difference is the \emph{per-block charge}.

\begin{table}[htbp]
\centering\footnotesize
\setlength{\tabcolsep}{3pt}
\newcolumntype{Y}{>{\raggedright\arraybackslash}X}
\caption{The chain \eqref{eq:sandwich_chain}: what each term is, how it is evaluated, and what the gap between consecutive terms measures.}
\label{tab:chain}
\begin{tabularx}{\textwidth}{@{}l l Y Y Y@{}}
\toprule
& $F(b)$ in \eqref{eq:floor_form} & Lower-bounds & How evaluated & Gap to the row above \\
\midrule \midrule
$\rho^\star$ & $F_{\mathrm{cert}}=2^{\,2(h(\tilde z_b)-\gamma)/k}$ & every attack on every release leaking $\le\gamma$ & \Cref{prop:coverage} at $\gamma=R$, with its grid and margins & --- \\ \midrule
$\rho_{\mathrm{blk}}$ & $F_{\mathrm{blk}}=2^{\,2(h(\tilde z_b)-\gamma_b)/k}$ & every attack on this release & $k=1$, $\nu^2=0.2$, one-dimensional plug-ins \eqref{eq:mc_def} on $D_{\mathrm{est}}$ for $h(\tilde z_b)$ and each $h(\tilde z_b\mid Q=c)$; point estimate; every cell must hold $20$ records, so $R\le8$ (pixel) and $R\le7$ (LPIPS) & per-block charge \\ \midrule
$\rho_{\mathrm{cb}}$ & --- & --- & cell-mean decoder; error on held-out targets, bootstrap standard error & residual slack of the Shannon lower bound at rate $\gamma_b$, plus sampling error of the targets \\ \midrule
$\rho_{\mathrm{dpsgd}}$ & --- & --- & posterior mean under the empirical shadow prior (\Cref{app:exp_attack}) on one full-batch step, at $\gamma=1/(2\ln2\,\sigma^2)$ bits, and on $T=100$ steps at $q=0.02$ & what DP-SGD fails to convert into reconstruction \\
\bottomrule
\end{tabularx}
\end{table}

\begin{proposition}[Per-block bound]\label{prop:block_leakage}
In the setting of \Cref{cor:rp}, $\gamma_b\le\gamma_i$ for every $b$, and \eqref{eq:floor_form} holds with $F_{\mathrm{blk}}$. In particular $F_{\mathrm{cert}}\le F_{\mathrm{blk}}$ at any common $(k,\nu^2)$.
\end{proposition}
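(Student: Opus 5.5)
The plan is to prove the two claims separately, reusing the proof of \Cref{cor:rp} with only the rate step changed.

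\emph{Step 1: $\gamma_b\le\gamma_i$.} Here $\gamma_b=I(\tilde z_b;\theta\mid X^{-i})$ and $\tilde z_b=\Pi_b\Psi(x_i)+W^{(k)}_{i,b}$. Conditionally on $X^{-i}$, the dither is independent of everything else and $\Pi_b\circ\Psi$ is admissible. So $\tilde z_b\to x_i\to\theta$ is a Markov chain given $X^{-i}$, as in Step 2 of the proof of \Cref{thm:master}. The conditional data-processing inequality then gives
\[
I(\tilde z_b;\theta\mid X^{-i})\le I(x_i;\theta\mid X^{-i})=\gamma_i .
\]
For the quantiser, the release is $Q$ and $X^{-i}$ is independent of everything, so $\gamma_b$ reduces to $I(\tilde z_b;Q)$, matching the expression in the text. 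It follows that $2^{-2\gamma_i/k}\le 2^{-2\gamma_b/k}$, so $F_{\mathrm{cert}}\le F_{\mathrm{blk}}$ termwise at any common $(k,s)$. This gives the ``in particular'' claim.

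\emph{Step 2: \eqref{eq:floor_form} with $F_{\mathrm{blk}}$.} I would redo Steps 1--3 of the proof of \Cref{thm:master} for the block representation $\Pi_b\circ\Psi$ with block estimate $\Pi_b\zhatT_i$. The only change is the rate bound: I need $I(\tilde z_b;\Pi_b\zhatT_i)\le\gamma_b$ in place of $\le\gamma_i$. This is the step I expect to need care, because the estimate depends on $X^{-i}$ as well as on $\theta$, while $\gamma_b$ is conditional on $X^{-i}$. My plan is to write $\Pi_b\zhatT_i$ as a function of $(\theta,X^{-i},\text{attack randomness})$, so that data processing gives
\[
I(\tilde z_b;\Pi_b\zhatT_i)\le I(\tilde z_b;\theta,X^{-i}).
\]
The chain rule splits the right-hand side as
\[
I(\tilde z_b;\theta,X^{-i})=I(\tilde z_b;X^{-i})+I(\tilde z_b;\theta\mid X^{-i}),
\]
and the first term vanishes because $\tilde z_b$ is a function of $x_i$ and independent dither, both independent of $X^{-i}$ under \Cref{asm:iid}. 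This is the argument of \Cref{lem:cmi_chain} with $x_i$ replaced by $\tilde z_b$.

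With this rate, the regularity \Cref{lem:dither_regularity} and the SLB \eqref{eq:slb_multi} in dimension $k$ bound the dithered block error. Subtracting the dither exactly, as in Step 4, gives
\[
\E\|\Pi_b(\Psi(x_i)-\zhatT_i)\|^2\ge\frac{k}{2\pi e}2^{2(h(\tilde z_b)-\gamma_b)/k}-ks^2 .
\]
The remaining steps follow \Cref{cor:rp}. I average over $b$ using the isometry \eqref{eq:ambient_identity}; the $-ks^2$ term is common to every block and survives averaging unchanged. I then skip the Jensen step, since \eqref{eq:floor_form} keeps the block average. Dividing by $\tr\Sigma_\Psi$ and using $\nu^2=ks^2/\tr\Sigma_\Psi$ yields \eqref{eq:floor_form} with $F_{\mathrm{blk}}$.
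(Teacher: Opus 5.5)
Your proposal is correct and follows the paper's proof. Data processing along the Markov chain through $x_i$ gives $\gamma_b\le\gamma_i$. Rerunning the proof of \Cref{thm:master} at $\Pi_b\circ\Psi$ with the sharper rate $\gamma_b$, then averaging via the block isometry without the Jensen step, gives \eqref{eq:floor_form} with $F_{\mathrm{blk}}$. Your explicit chain-rule step showing $I(\tilde z_b;X^{-i})=0$, so that data processing into $(\theta,X^{-i})$ lands on the conditional quantity $\gamma_b$, is a detail the paper's one-line argument leaves implicit.
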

\begin{proof}
$\tilde z_b\to x_i\to(\theta,X^{-i})$ is a Markov chain (\Cref{def:dither}), so $\gamma_b\le\gamma_i$ by data processing, and $F_{\mathrm{cert}}\le F_{\mathrm{blk}}$ follows. The proof of \Cref{thm:master} at the representation $\Pi_b\circ\Psi$ uses the leakage only in Step~1, as $I(\tilde z_b;\Pi_b\zhatT_i)\le\gamma_i$, and data processing along $\tilde z_b\to(\theta,X^{-i})\to\Pi_b\zhatT_i$ gives the sharper $I(\tilde z_b;\Pi_b\zhatT_i)\le\gamma_b$. Steps~2--4 then give $\E\|\Pi_b(\Psi(x_i)-\zhatT_i)\|^2\ge\frac{k}{2\pi e}2^{2(h(\tilde z_b)-\gamma_b)/k}-ks^2$ for every block, and averaging over $b$ by \eqref{eq:block_isometry} and dividing by $\tr\Sigma_\Psi$ gives \eqref{eq:floor_form} with $F_{\mathrm{blk}}$.
\end{proof}

\paragraph{Why the relaxed bounds use $k=1$.} The per-block bound is evaluated at $k=1$. Its exponent carries $\gamma_b$ rather than $\gamma$, so nothing pushes it towards large $k$, and one-dimensional plug-ins suffice for $\gamma_b=h(\tilde z_b)-\sum_c\Pr(Q=c)\,h(\tilde z_b\mid Q=c)$, which is why every cell must hold enough records.

\paragraph{Interpretation.} The chain splits the distance from the bound to the DP-SGD attack into three gaps. The first, from $\rho^\star$ to $\rho_{\mathrm{blk}}$, is $0.10$--$0.33$ across the four panels of \Cref{fig:tightness,fig:tightness_mnist}. It is the cost of charging every block the full $\gamma$. Removing it would require either a per-block leakage bound for the mechanism or a lower bound on the entropy of the whole record, and the plug-in caps the latter at $\log_2 n$ bits above the dither entropy (\Cref{app:why}). The second, from $\rho_{\mathrm{blk}}$ to $\rho_{\mathrm{cb}}$, is at most $0.03$ (\Cref{tab:tightness}, last column), and slightly negative, within standard error, on CelebA pixels at $R=3$--$5$. Since $\rho_{\mathrm{blk}}$ lower-bounds every attack on the codebook, this small gap means both that the Shannon lower bound is close to tight for these sources and that our codebook attack is close to optimal. The third, from $\rho_{\mathrm{cb}}$ to $\rho_{\mathrm{dpsgd}}$ (\Cref{tab:tightness,tab:attack_results}), is specific to DP-SGD. The leakage $\gamma$ fixes how many bits are released, not which: the codebook spends its bits on a cell index chosen for reconstruction, whereas DP-SGD spends them on a noisy clipped gradient spread over every direction, so the same $\gamma$ buys far less reconstruction. This gap is the margin by which \Cref{fig:spectrum_headline,fig:spectrum_all} are conservative for DP-SGD. It may also include slack in the accountant, since $\gamma$ upper-bounds DP-SGD's true leakage, and the suboptimality of our attack, neither of which the chain can separate from the mechanism.

\begin{figure}[htbp]
    \centering
    \includegraphics[width=\textwidth]{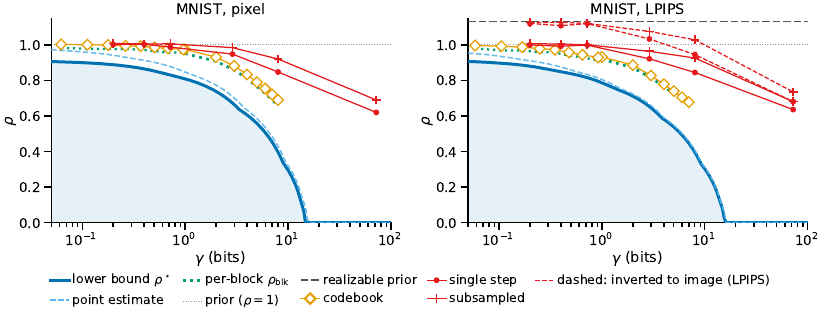}
    \caption{Tightness at fixed leakage on MNIST, in pixel (left) and LPIPS (right) error, as in \Cref{fig:tightness}: the chain \eqref{eq:sandwich_chain}, with terms as in \Cref{tab:chain}. No attack crosses the bound. $\rho_{\mathrm{blk}}$ is drawn while every codebook cell holds $20$ estimation records. Dotted: the prior at $\rho=1$. On MNIST the attack reaches lower $\rho$ than on CelebA at high leakage because the shadow sample contains near-duplicates of many held-out digits (\Cref{fig:exemplars,fig:exemplars_celeba}).}
    \label{fig:tightness_mnist}
\end{figure}

\begin{table}[htbp]
    \centering\small
    \setlength{\tabcolsep}{5pt}
    \caption{Values behind \Cref{fig:tightness,fig:tightness_mnist}, one block per panel. Gaps are computed before rounding.}
    \label{tab:tightness}
    \begin{tabular}{r rr r rr}
        \toprule
        $R$ & $\rho^\star$ & $\rho_{\mathrm{blk}}$ & $\rho_{\mathrm{cb}}$ (SE) & $\rho_{\mathrm{blk}}-\rho^\star$ & $\rho_{\mathrm{cb}}-\rho_{\mathrm{blk}}$ \\
        \midrule
        \multicolumn{6}{l}{\emph{MNIST, pixel}} \\
        $1$ & $0.809$ & $0.951$ & $0.975$ ($0.005$) & $0.142$ & $0.023$ \\
        $2$ & $0.739$ & $0.908$ & $0.931$ ($0.005$) & $0.169$ & $0.023$ \\
        $3$ & $0.665$ & $0.859$ & $0.882$ ($0.005$) & $0.194$ & $0.023$ \\
        $4$ & $0.606$ & $0.811$ & $0.834$ ($0.005$) & $0.205$ & $0.023$ \\
        $5$ & $0.558$ & $0.765$ & $0.790$ ($0.005$) & $0.207$ & $0.026$ \\
        $6$ & $0.507$ & $0.728$ & $0.754$ ($0.005$) & $0.220$ & $0.027$ \\
        $7$ & $0.454$ & $0.692$ & $0.721$ ($0.005$) & $0.238$ & $0.029$ \\
        $8$ & $0.397$ & $0.658$ & $0.691$ ($0.005$) & $0.261$ & $0.032$ \\
        \midrule
        \multicolumn{6}{l}{\emph{CelebA, pixel}} \\
        $1$ & $0.760$ & $0.858$ & $0.862$ ($0.006$) & $0.098$ & $0.004$ \\
        $2$ & $0.682$ & $0.809$ & $0.810$ ($0.005$) & $0.127$ & $0.001$ \\
        $3$ & $0.622$ & $0.772$ & $0.770$ ($0.005$) & $0.149$ & $-0.002$ \\
        $4$ & $0.566$ & $0.737$ & $0.737$ ($0.005$) & $0.171$ & $0.000$ \\
        $5$ & $0.518$ & $0.711$ & $0.709$ ($0.005$) & $0.193$ & $-0.002$ \\
        $6$ & $0.472$ & $0.687$ & $0.687$ ($0.005$) & $0.215$ & $0.000$ \\
        $7$ & $0.424$ & $0.662$ & $0.667$ ($0.005$) & $0.239$ & $0.005$ \\
        $8$ & $0.374$ & $0.641$ & $0.651$ ($0.004$) & $0.267$ & $0.010$ \\
        \midrule
        \multicolumn{6}{l}{\emph{MNIST, LPIPS}} \\
        $1$ & $0.787$ & $0.920$ & $0.932$ ($0.004$) & $0.133$ & $0.012$ \\
        $2$ & $0.712$ & $0.871$ & $0.886$ ($0.004$) & $0.159$ & $0.015$ \\
        $3$ & $0.646$ & $0.815$ & $0.828$ ($0.005$) & $0.169$ & $0.013$ \\
        $4$ & $0.581$ & $0.765$ & $0.778$ ($0.005$) & $0.184$ & $0.013$ \\
        $5$ & $0.538$ & $0.725$ & $0.740$ ($0.005$) & $0.186$ & $0.015$ \\
        $6$ & $0.493$ & $0.688$ & $0.706$ ($0.004$) & $0.195$ & $0.017$ \\
        $7$ & $0.446$ & $0.657$ & $0.678$ ($0.004$) & $0.211$ & $0.021$ \\
        \midrule
        \multicolumn{6}{l}{\emph{CelebA, LPIPS}} \\
        $1$ & $0.851$ & $0.962$ & $0.980$ ($0.005$) & $0.112$ & $0.017$ \\
        $2$ & $0.789$ & $0.939$ & $0.955$ ($0.005$) & $0.150$ & $0.016$ \\
        $3$ & $0.727$ & $0.917$ & $0.933$ ($0.005$) & $0.190$ & $0.016$ \\
        $4$ & $0.662$ & $0.898$ & $0.912$ ($0.005$) & $0.235$ & $0.015$ \\
        $5$ & $0.593$ & $0.878$ & $0.895$ ($0.005$) & $0.285$ & $0.017$ \\
        $6$ & $0.550$ & $0.857$ & $0.878$ ($0.005$) & $0.307$ & $0.020$ \\
        $7$ & $0.506$ & $0.839$ & $0.862$ ($0.005$) & $0.333$ & $0.023$ \\
        \bottomrule
    \end{tabular}
\end{table}

\FloatBarrier
\subsection{Additional figures}\label{app:exp_extra}

\begin{figure}[htbp]
    \centering
    \includegraphics[width=0.8\textwidth]{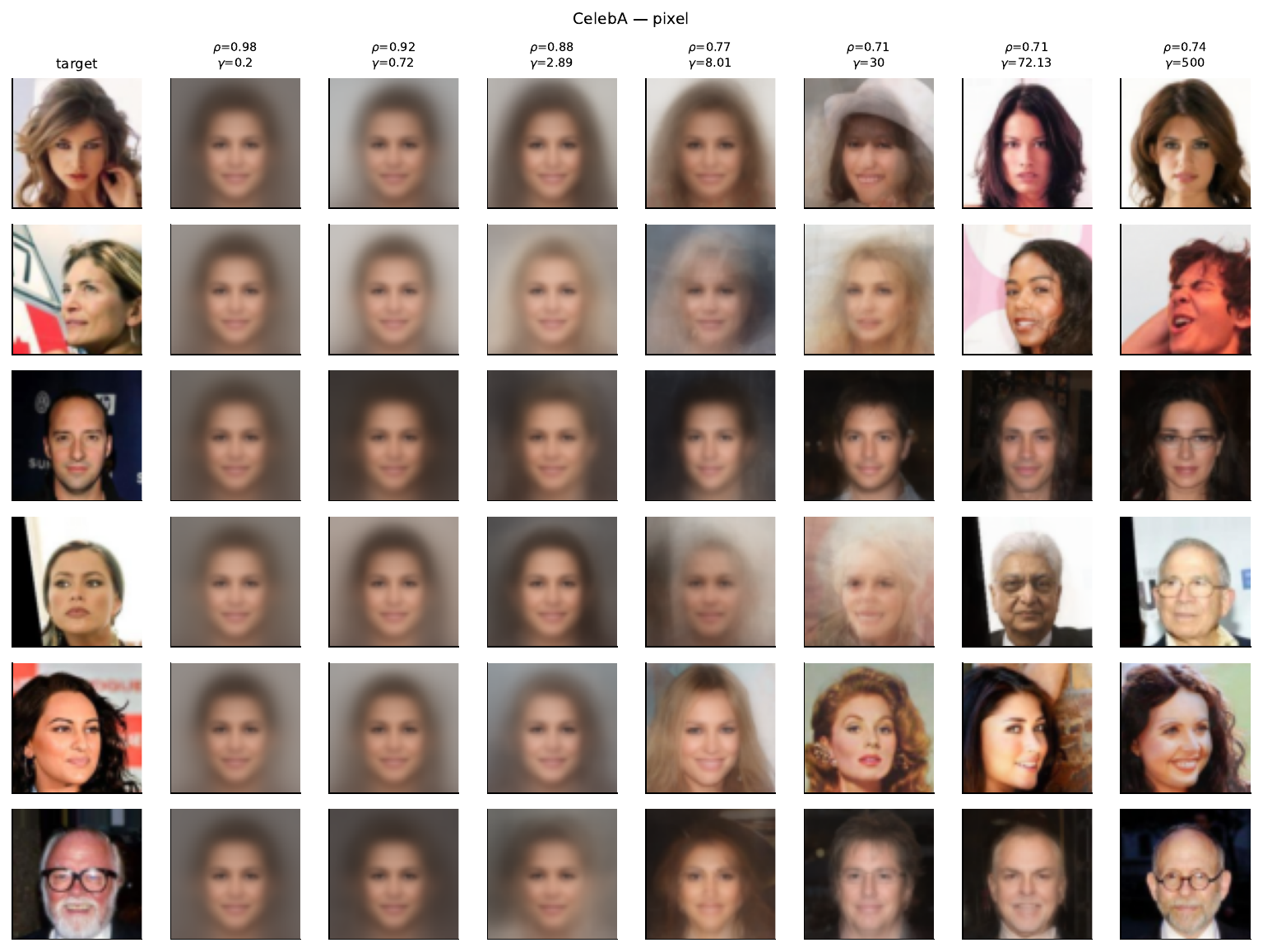}\\[6pt]
    \includegraphics[width=0.8\textwidth]{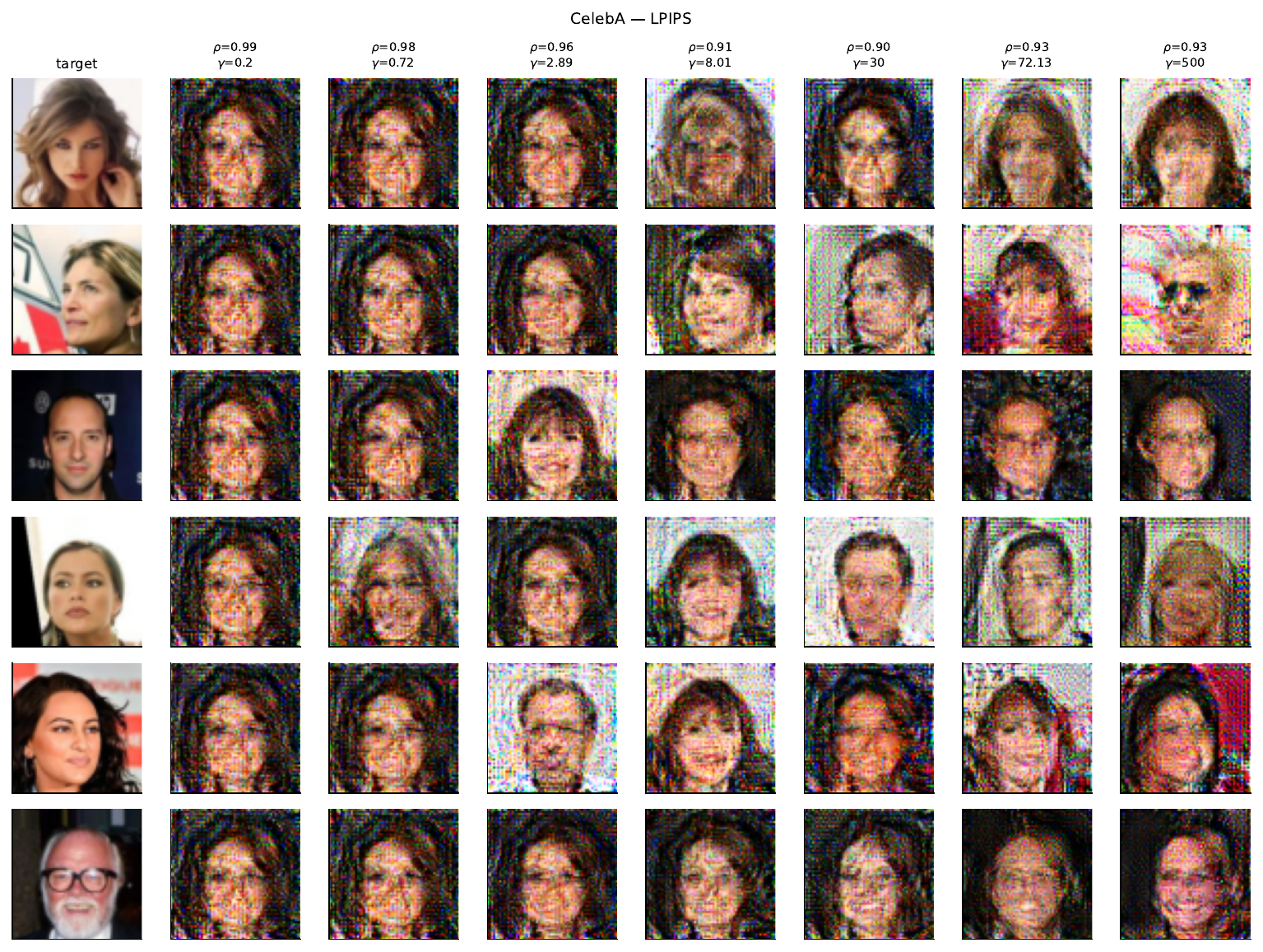}
    \caption{\textbf{Attack reconstructions at each leakage, CelebA.} As \Cref{fig:exemplars}, for CelebA in pixel space (top) and LPIPS space (bottom). $D_{\mathrm{shadow}}$ holds no near duplicate of a held-out face, so the pixel $\rho$ levels off near $0.71$ beyond $\gamma\approx70$ bits.}
    \label{fig:exemplars_celeba}
\end{figure}

\begin{figure}[htbp]
    \centering
    \includegraphics[width=0.8\textwidth]{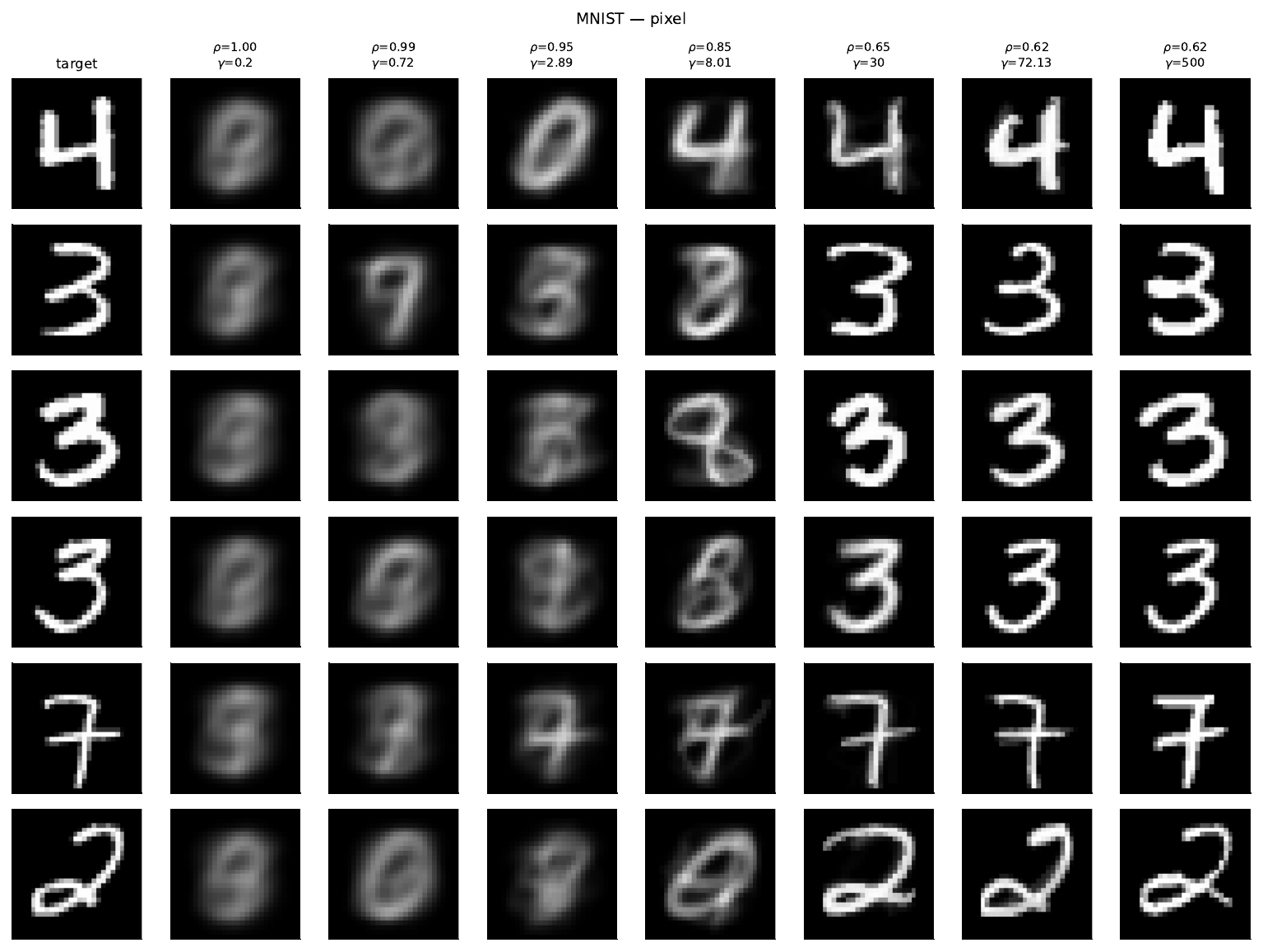}\\[6pt]
    \includegraphics[width=0.8\textwidth]{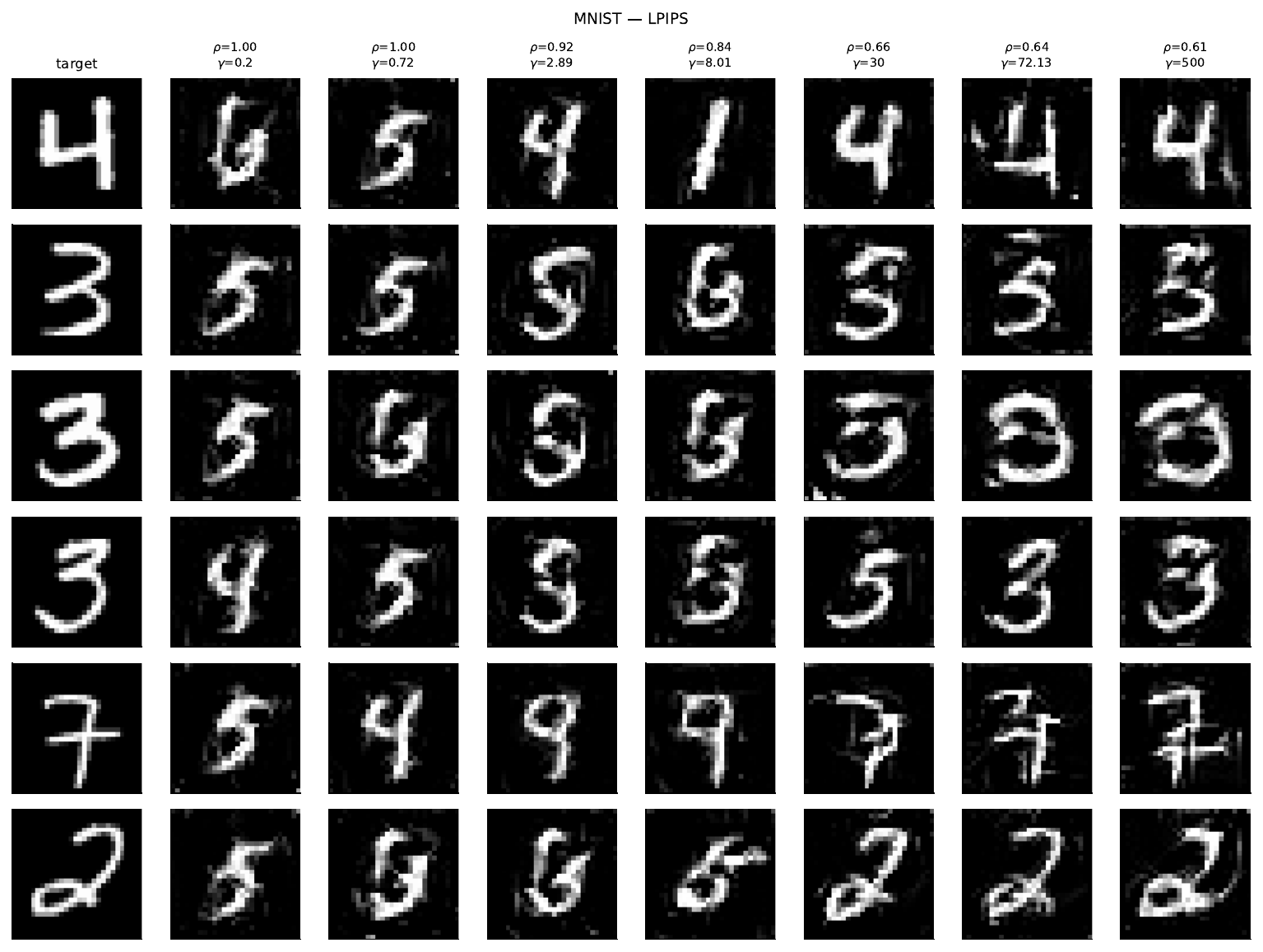}
    \caption{\textbf{Attack reconstructions at each leakage, MNIST.} Outputs of the single-step attack of \Cref{app:exp_attack} on $6$ fixed held-out targets at $7$ leakages, $\gamma\in\{0.2,0.72,2.89,8.01,30,72.13,500\}$ bits, in pixel space (top) and LPIPS space (bottom). Each column shows what reconstruction at its value of $\rho$ looks like in practice, so the bounds of \Cref{fig:spectrum_headline,fig:spectrum_all} can be read as images rather than numbers. In each panel the first column is the target and each further column is one leakage, headed by $\rho$, the mean over all $100$ targets at that leakage. The LPIPS panel shows the feature estimate inverted to an image, headed by the feature-space $\rho$. In pixel space most reconstructed digits are wrong at $\rho=0.85$ ($\gamma=8$ bits) and all six are right at $\rho=0.65$ ($\gamma=30$ bits). The attack outputs a weighted mix of shadow images, none of them close to a held-out target, so $\rho$ levels off near $0.62$ beyond $\gamma\approx30$ bits.}
    \label{fig:exemplars}
\end{figure}

\end{document}

%% file: math_commands.tex
\usepackage{amsmath,amsfonts,bm}

\def\eqref#1{equation~\ref{#1}}

\def\1{\bm{1}}

\DeclareMathAlphabet{\mathsfit}{\encodingdefault}{\sfdefault}{m}{sl}
\SetMathAlphabet{\mathsfit}{bold}{\encodingdefault}{\sfdefault}{bx}{n}

\newcommand{\E}{\mathbb{E}}

\newcommand{\R}{\mathbb{R}}

\newcommand{\KL}{D_{\mathrm{KL}}}
\newcommand{\Var}{\mathrm{Var}}

\newcommand{\Cov}{\mathrm{Cov}}